\documentclass[twoside]{article}
\usepackage[margin=.9in]{geometry}
\usepackage{microtype}
\usepackage{graphicx}
\usepackage{subfigure}
\usepackage{booktabs} 
\usepackage{natbib}
\usepackage{amsmath,amsthm,amsfonts,amssymb}
\usepackage{bbm}
\usepackage{mathrsfs}
\usepackage{enumitem}
\usepackage{bm}
\usepackage{booktabs}
\usepackage{makecell}
\usepackage{graphicx}
\usepackage{subfigure}
\usepackage{caption}
\usepackage{thmtools}
\usepackage{times}


\usepackage{tikz}

\newcommand{\Bern}{\text{Bern}}

\newcommand{\hstar}{\h^{\star}}
\newcommand{\fstar}{\f^\star}
\newcommand{\fjstar}{f_{j}^\star}
\newcommand{\lprp}[1]{\left(#1\right)}
\newcommand{\tepsilon}{\tilde{\epsilon}}
\newcommand{\balphastar}{\balpha^{\star}}
\newcommand{\balphastarj}{\balpha^{\star}_{j}}
\newcommand{\Bstar}{\B^{\star}}
\newcommand{\cFt}{\cF^{\otimes t}}

\newcommand{\EMPGaus}[2]{\hat{\mathfrak{G}}_{#1}{(#2)}}
\newcommand{\POPGaus}[2]{\mathfrak{G}_{#1}{(#2)}}
\newcommand{\MAXGaus}[2]{\bar{\mathfrak{G}}_{#1}{(#2)}}
\newcommand{\dist}[2]{d_{\cF, \cF_0}(#1; #2)}

\newcommand{\EMPRad}[2]{\hat{\mathfrak{R}}_{#1}{(#2)}}
\newcommand{\POPRad}[2]{\mathfrak{R}_{#1}{(#2)}}

\newcommand{\avdist}[3]{\bar{d}_{\cF, #1}(#2; #3)}

\newcommand{\bX}{\bar{\X}}

\newcommand{\nbf}{\mathbf{f}}
\newcommand{\hbf}{\hat{\mathbf{f}}}
\newcommand{\tbf}{\tilde{\mathbf{f}}}

\newcommand{\hLtrain}{\hat{R}_{\text{train}}}
\newcommand{\hLtest}{\hat{R}_{\text{test}}}
\newcommand{\Ltrain}{R_\text{train}}
\newcommand{\Ltest}{R_\text{test}}

\newcommand{\cQ}{\mathcal{Q}}
\newcommand{\q}{\mathbf{q}}


\newcommand{\mR}{\mathbb{R}}

\newcommand{\hf}{\hat{f}}

\newcommand{\tf}{\tilde{f}}

\newcommand{\bb}{\mathbf{b}}
\newcommand{\bepsilon}{\bm{\epsilon}}

\newcommand{\tnu}{\tilde{\nu}}

\newcommand{\C}{\mathbf{C}}
\newcommand{\hB}{\hat{\B}}

\newcommand{\halpha}{\hat{\balpha}}

\newcommand{\KL}{\text{KL}}

\newcommand{\balpha}{\bm{\alpha}}


\usepackage[colorlinks,citecolor=blue,urlcolor=black,linkcolor=blue,pdfborder={0 0 0}]{hyperref}

\usepackage[capitalize]{cleveref}  
\crefname{nlem}{Lemma}{Lemmas}
\crefname{nprop}{Proposition}{Propositions}
\crefname{ncor}{Corollary}{Corollaries}
\crefname{nthm}{Theorem}{Theorems}
\crefname{exa}{Example}{Examples}
\crefname{assumption}{Assumption}{Assumptions}
\crefname{equation}{}{}

\theoremstyle{plain}
\newtheorem{theorem}{Theorem}

\newtheorem{corollary}{Corollary}
\newtheorem{assumption}{Assumption}
\newtheorem{lemma}{Lemma}

\theoremstyle{definition}

\newtheorem{definition}{Definition}

\newtheorem{remark}{Remark}


\newcommand{\vect}[1]{\ensuremath{\mathbf{#1}}}

\newcommand{\mat}[1]{\ensuremath{\mathbf{#1}}}

\newcommand{\argmin}{\mathop{\rm argmin}}

\newcommand{\tr}{\mathrm{tr}}

\newcommand{\trans}{^{\top}}
\newcommand{\poly}{\mathrm{poly}}

\newcommand{\abs}[1]{|{#1}|}
\newcommand{\norm}[1]{\|{#1} \|}

\newcommand{\mE}{\mathbb{E}}
\renewcommand{\Pr}{\mathbb{P}}
\newcommand{\Var}{\text{Var}}


\newcommand{\tlO}{\tilde{O}}






\newcommand{\A}{\mat{A}}
\newcommand{\B}{\mat{B}}
\newcommand{\Bone}{\hat{\mat{B}}}

\newcommand{\W}{\mat{W}}
\newcommand{\X}{\mat{X}}

\newcommand{\Z}{\mat{Z}}

\newcommand{\F}{\mat{F}}

\newcommand{\mSigma}{\mat{\Sigma}}

\renewcommand{\v}{\vect{v}}

\newcommand{\h}{\vect{h}}
\newcommand{\hh}{\vect{\hat{h}}}

\newcommand{\x}{\vect{x}}
\renewcommand{\r}{\vect{r}}
\renewcommand{\b}{\vect{b}}
\newcommand{\y}{\vect{y}}
\newcommand{\z}{\vect{z}}
\newcommand{\g}{\vect{g}}
\newcommand{\f}{\vect{f}}


\newcommand{\cF}{\mathcal{F}}

\newcommand{\cN}{\mathcal{N}}

\newcommand{\cH}{\mathcal{H}}

\newcommand{\cW}{\mathcal{W}}
\newcommand{\cX}{\mathcal{X}}
\newcommand{\cY}{\mathcal{Y}}
\newcommand{\cZ}{\mathcal{Z}}

\usepackage[utf8]{inputenc} 
\usepackage{hyperref}       
\usepackage{url}            
\usepackage{nicefrac}       






\title{\textbf{On the Theory of Transfer Learning: The Importance of Task Diversity}}

\author{
Nilesh Tripuraneni \\
University of California, Berkeley\\
\texttt{nilesh\_tripuraneni@berkeley.edu}\\
\and
Michael I. Jordan\\
University of California, Berkeley\\
\texttt{jordan@cs.berkeley.edu} \\
\and
Chi Jin\\
Princeton University\\
\texttt{chij@princeton.edu}\\
}
\date{}
\begin{document}

\maketitle

\begin{abstract}
We provide new statistical guarantees for transfer learning via representation learning--when transfer is achieved by learning a feature representation shared across different tasks. This enables learning on new tasks using far less data than is required to learn them in isolation. Formally, we consider $t+1$ tasks parameterized by functions of the form $f_j \circ h$ in a general function class $\cF \circ \cH$, where each $f_j$ is a task-specific function in $\cF$ and $h$ is the shared representation in $\cH$. Letting $C(\cdot)$ denote the complexity measure of the function class, we show that for diverse training tasks (1) the sample complexity needed to learn the shared representation across the first $t$ training tasks scales as $C(\cH) + t C(\cF)$, despite no explicit access to a signal from the feature representation and (2) with an accurate estimate of the representation, the sample complexity needed to learn a new task scales only with $C(\cF)$. Our results depend upon a new general notion of task diversity--applicable to models with general tasks, features, and losses--as well as a novel chain rule for Gaussian complexities. Finally, we exhibit the utility of our general framework in several models of importance in the literature.
\end{abstract}

\section{Introduction}
\label{sec:intro}
Transfer learning is quickly becoming an essential tool to address learning problems in settings with \textit{small} data. One of the most promising methods for multitask and transfer learning is founded on the belief that multiple, differing tasks are distinguished by a small number of task-specific parameters, but often share a common low-dimensional representation. Undoubtedly, one of the most striking successes of this idea has been to only re-train the final layers of a neural network on new task data, after initializing its earlier layers with hierarchical representations/features from ImageNet (i.e., ImageNet pretraining) \citep{donahue2014decaf, gulshan2016development}. However, the practical purview of transfer learning has extended far beyond the scope of computer vision and classical ML application domains such as deep reinforcement learning \citep{baevski2019cloze}, to problems such as protein engineering and design \citep{Elnaggar864405}.

In this paper, we formally study the composite learning model in which there are $t+1$ tasks whose responses are generated noisily from the function $\fjstar \circ \hstar$, where $\fjstar$ are task-specific parameters in a function class $\cF$ and $\hstar$ an underlying shared representation in a function class $\cH$. A large empirical literature has documented the performance gains that can be obtained by transferring a jointly learned representation $\h$ to new tasks in this model~\citep{yosinski2014transferable, raghu2019rapid, lee2019meta}. There is also a theoretical literature that dates back at least as far as \citep{baxter2000model}.
However, this progress belies a lack of understanding of the basic statistical principles underlying transfer learning\footnote{A problem which is also often referred to as learning-to-learn (LTL).}:
\begin{center}
    \textbf{How many samples do we need to learn a feature representation shared across tasks and use it to improve prediction on a new task?}
\end{center}
In this paper we study a simple two-stage empirical risk minimization procedure to learn a new, $j=0$th task which shares a common representation with $t$ different training tasks. This procedure first learns a representation $\hh \approx \hstar$ given $n$ samples from each of $t$ different training tasks, and then uses $\hh$  alongside $m$ fresh samples from this new task to learn $\hat{f}_0 \circ \hh \approx f^{\star}_0 \circ \hstar$.
Informally, our main result provides an answer to our sampling-complexity question by showing that the excess risk of prediction of this two-stage procedure scales (on the new task) as\footnote{See \cref{cor:ltlguarantee} and discussion for a formal statement. Note our guarantees also hold for nonparametric function classes, but the scaling with $n$, $t$, $m$ may in general be different.},
	\begin{align*}
		\tlO \left( \frac{1}{\nu} \left(
		\sqrt{\frac{C(\cH)+t C(\cF)}{nt}} \right) + \sqrt{\frac{C(\cF)}{m}}  \right),
	\end{align*}
where $C(\cH)$ captures the complexity of the shared representation,  $C(\cF)$ captures the complexity of the task-specific maps, and $\nu$ encodes a problem-agnostic notion of task diversity.  The latter is a key contribution of the current paper.  It represents the extent to which the $t$ training tasks $\fjstar$ cover the space of the features $\hstar$. In the limit that $n, t \to \infty$ (i.e., training task data is abundant), to achieve a fixed level of constant prediction error on the new task only requires the number of fresh samples to be $m \approx C(\cF)$. Learning the task in isolation suffers the burden of learning both $\cF$ and $\cH$---requiring $m \approx C(\cF\circ \cH)$---which can be significantly greater than the transfer learning sample complexity.

\citet{maurer2016benefit} present a general, uniform-convergence based framework for obtaining generalization bounds for transfer learning that scale as $ O(1/\sqrt{t}) + O(1/\sqrt{m})$ (for clarity we have suppressed complexity factors in the numerator). Perhaps surprisingly, the leading term capturing the complexity of learning $\hstar$ decays only in $t$ but not in $n$.  This suggests that increasing the number of samples per training task cannot improve generalization on new tasks. Given that most transfer learning applications in the literature collect information from only a few training tasks (i.e., $n \gg t$), this result does not provide a fully satisfactory explanation for the practical efficacy of transfer learning methods. 

Our principal contributions in this paper are as follows:
\begin{itemize}[leftmargin=.5cm]
	\item We introduce a problem-agnostic definition of task diversity which can be integrated into a uniform convergence framework to provide generalization bounds for transfer learning problems with general losses, tasks, and features. Our framework puts this notion of diversity together with a common-design assumption across tasks to provide guarantees of a fast convergence rate, decaying with \textit{all of the samples} for the transfer learning problem. 
	\item We provide general-purpose bounds which decouple the complexity of learning the task-specific structure from the complexity of learning the shared feature representation. Our results repose on a novel user-friendly chain rule for Gaussian processes which may be of independent interest (see \cref{thm:chain_rule}). 
	Crucially, this chain rule implies a form of modularity that allows us to exploit a plethora of existing results from the statistics and machine learning literatures to individually bound the sample complexity of learning task and feature functions. 
	\item We highlight the utility of our framework for obtaining end-to-end transfer learning guarantees for several different multi-task learning models including (1) logistic regression, (2) deep neural network regression, and (3) robust regression for single-index models.
	\end{itemize}
	
\subsection{Related Work}
\label{sec:related}
The utility of multitask learning methods was observed at least as far back as \citet{caruana1997multitask}. In recent years, representation learning, transfer learning, and meta-learning have been the subject of extensive empirical investigation in the machine learning literature (see \citep{bengio2013representation}, \citep{hospedales2020meta} for surveys in these directions). However, theoretical work on transfer learning---particularly via representation learning---has been much more limited.

A line of work closely related to transfer learning is
gradient-based meta-learning (MAML) \citep{finn2017model}. These methods have been analyzed using techniques from online convex optimization, using a (potentially data-dependent) notion of task similarity which assumes that tasks are close to a global task parameter \citep{finn2019online, khodak2019provable, denevi2019learning,denevi2019online, khodak2019adaptive}. Additionally,  \citet{ben2008notion} define a different notion of distributional task similarity they use to show generalization bounds. However, these works do not study the question of transferring a common representation in the generic composite learning model that is our focus.

In settings restricted to linear task mappings and linear features, \citet{lounici2011oracle}, \citet{pontil2013excess}, and \citet{cavallanti2010linear} have provided sample complexity bounds for the problem of transfer learning via representation learning. \citet{lounici2011oracle} and \citet{obozinski2011support} also address sparsity-related issues that can arise in linear feature learning.

To our knowledge, \citet{baxter2000model} is the first theoretical work to provide generalization bounds for transfer learning via representation learning in a general setting. The formulation of  \citet{baxter2000model} assumes a generative model over tasks which share common features; in our setting, this task generative model is replaced by the assumption that training tasks are diverse (as in \cref{def:diversity}) and that there is a common covariate distribution across different tasks. In follow-up work, \citet{maurer2016benefit} propose a general, uniform-convergence-based framework for obtaining transfer learning guarantees which scale as $O(1/\sqrt{t}) + O(1/\sqrt{m})$  \citep[Theorem 5]{maurer2016benefit}. The second term represents the sample complexity of learning in a lower-dimensional space given the common representation. The first term is the bias contribution from transferring the representation---learned from an aggregate of $nt$ samples across different training tasks---to a new task. Note this leading term decays only in $t$ and not in $n$: implying that increasing the number of samples per training task cannot improve generalization on new tasks. Unfortunately, under the framework studied in that paper, this $\Omega(1/\sqrt{t})$ cannot be improved \citet{maurer2016benefit}.

Recent work in \citet{tripuraneni2020provable} and \citet{du2020few} has shown that in specific settings leveraging (1) common design assumptions across tasks and (2) a particular notion of task diversity, can break this barrier and yield rates for the leading term which decay as $O(\poly(1/(nt)))$. However, the results and techniques used in both of these works are limited to the squared loss and linear task maps.
Moreover, the notion of diversity in both cases arises purely from the linear-algebraic conditioning of the set of linear task maps. It is not clear from these works how to extend these ideas/techniques beyond the case-specific analyses therein.
\section{Preliminaries}
\label{sec:prelim}
\textbf{Notation: } We use bold lower-case letters (e.g., $\x$) to refer to vectors and bold upper-case letters (e.g., $\X$) to refer to matrices. The norm $\Vert \cdot \Vert$ appearing on a vector or matrix refers to its $\ell_2$ norm or spectral norm respectively. We use the bracketed notation $[n] = \{1, \hdots, n\}$ as shorthand for integer sets. Generically, we will use ``hatted" vectors and matrices (e.g, $\halpha$ and $\Bone$) to refer to (random) estimators of their underlying population quantities. $\sigma_1(\A), \hdots, \sigma_r(\A)$ will denote the sorted singular values (in decreasing magnitude) of a rank $r$ matrix $\A$. Throughout we will use $\cF$ to refer to a function class of tasks mapping $ \mR^r \to \mR$ and $\cH$ to be a function class of features mapping $\mR^d \to \mR^r$. For the function class $\cF$, we use $\cF^{\otimes t}$ to refer its $t$-fold Cartesian product, i.e., $\cF^{\otimes t} = \{\f \equiv (f_1, \ldots, f_t) ~|~  f_j \in \cF \text{~for any~} j \in [t]\}$. We use $\tlO$ to denote an expression that hides polylogarithmic factors in all problem parameters. 
\subsection{Transfer learning with a shared representation}
In our treatment of transfer learning, we assume that there exists a generic nonlinear feature representation that is shared across all tasks. Since this feature representation is shared, it can be utilized to transfer knowledge from existing tasks to new tasks. Formally, we assume that for a particular task $j$, we observe multiple data pairs $\{(\x_{ji}, y_{ji})\}$ (indexed over $i$) that are sampled i.i.d from an \emph{unknown} distribution $\Pr_j$, supported over $\cX \times \cY$ and defined as follows: 
\begin{equation}
	 \Pr_{j}(\x, y) = \Pr_{f^\star_j \circ \h^\star} (\x, y) = 
	 \Pr_{\x}(\x) \Pr_{y | \x}(y | f^\star_j \circ \h^\star(\x)).
	 \label{eq:gen_model}
\end{equation}

Here, $\hstar : \mR^d  \to \mR^r$ is the shared feature representation, and $\fjstar : \mR^r \to \mR$ is a task-specific mapping. Note that we assume that the marginal distribution over $\cX$---$\Pr_\x$---is common amongst all the tasks.

We consider transfer learning methods consisting of two phases. In the first phase (the training phase), $t$ tasks with $n$ samples per task are available for learning. Our objective in this phase is to learn the shared feature representation using the entire set of $nt$ samples from the first $j \in [t]$ tasks. In the second phase (the test phase), we are presented with $m$ fresh samples from a new task that we denote as the $0$th task. 
Our objective in the test phase is to learn this new task based on both the fresh samples and the representation learned in the first phase. 

Formally, we consider a two-stage Empirical Risk Minimization (ERM) procedure for transfer learning. Consider a function class $\cF$ containing task-specific functions, and a function class $\cH$ containing feature maps/representations. In the training phase, the empirical risk for $t$ training tasks is:
\begin{equation}
\hLtrain(\nbf, \h) := \frac{1}{n t} \sum_{j=1}^{t} \sum_{i=1}^{n} \ell(f_{j} \circ \h(\x_{ji}), y_{ji}), \label{eq:metatrainerm}
\end{equation}
where $\ell(\cdot, \cdot)$ is the loss function and $\nbf:= (f_1, \hdots, f_t) \in \cF^{\otimes t}$. Our estimator $\hat{\h}(\cdot)$ for the shared data representation is given by $\hat{\h} = \argmin_{\h \in \cH} \min_{\nbf \in \cFt}  \hLtrain(\nbf, \h)$.
 
For the second stage, the empirical risk for learning the new task is defined as:
\begin{equation}
\hLtest(f, \h) := \frac{1}{m} \sum_{i=1}^{m} \ell(f \circ \h(\x_{0i}), y_{0i}). \label{eq:metatesterm}
\end{equation}
We estimate the underlying function $f_0^\star$ for task $0$ by computing the ERM based on the feature representation learned in the first phase. That is, $\hf_0 = \argmin_{f \in \cF} \hLtest(f, \hh)$. We gauge the efficacy of the estimator $(\hat{f}_0, \hat{\h})$ by its excess risk on the new task, which we refer to as the \emph{transfer learning risk}:
\begin{align}
\text{Transfer Learning Risk} = \Ltest(\hf_{0}, \hh) - \Ltest(f_0^{\star}, \hstar).
\label{eq:excessrisk}
\end{align}
Here, $\Ltest(\cdot, \cdot) = \mE[\hLtest(\cdot, \cdot)]$ is the population risk for the new task and the population risk over the $t$ training tasks is similarly defined as $\Ltrain(\cdot, \cdot) = \mE[\hLtrain(\cdot, \cdot)]$; both expectations are taken over the randomness in the training and test phase datasets respectively. The transfer learning risk measures the expected prediction risk of the function $(\hf_{0}, \hh)$ on a new datapoint for the $0$th task, relative to the best prediction rule from which the data was generated---$f^\star_{0} \circ \h^\star$. 

\subsection{Model complexity}
A well-known measure for the complexity of a function class is its Gaussian complexity. For a generic vector-valued function class $\cQ$ containing functions $\q(\cdot) : \mR^d \to \mR^r$, and $N$ data points, $\bX = (\x_1, \ldots, \x_N)^\top$, the empirical Gaussian complexity is defined as
\begin{equation*}
\EMPGaus{\bX}{\cQ} = \mE_{\g}[\sup_{\q \in \cQ} \frac{1}{N} \sum_{k=1}^r \sum_{i=1}^N g_{ki} q_k(\x_{i})], \qquad  g_{ki} \sim \cN(0, 1) \ i.i.d.,
\end{equation*}
where $\g = \{g_{ki}\}_{k\in[r], i\in[N]}$, and $q_k(\cdot)$ is the $k$-th coordinate of the vector-valued function $\q(\cdot)$. We define the corresponding population Gaussian complexity as $\POPGaus{N}{\cQ} = \mE_{\bX} [\EMPGaus{\bX}{\cQ}]$, where the expectation is taken over the distribution of data samples $\bX$. Intuitively, $\POPGaus{N}{\cQ}$ measures the complexity of $\cQ$ by the extent to which functions in the class $\cQ$ can correlate with random noise $g_{ki}$.
\section{Main Results}
\label{sec:transfer}
We now present our central theoretical results for the transfer learning problem. We first present statistical guarantees for the training phase and test phase separately. Then, we present a problem-agnostic definition of task diversity, followed by our generic end-to-end transfer learning guarantee. Throughout this section, we make the following standard, mild regularity assumptions on the loss function $\ell(\cdot, \cdot)$, the function class of tasks $\cF$, and the function class of shared representations $\cH$.
\begin{assumption}[Regularity conditions]
\label{assump:regularity}
The following regularity conditions hold:
\begin{itemize}
\item The loss function $\ell(\cdot, \cdot)$ is $B$-bounded, and $\ell(\cdot, y)$ is $L$-Lipschitz for all $y \in \cY$.
\item The function $f$ is $L(\cF)$-Lipschitz with respect to the $\ell_2$ distance, for any  $f \in \cF$.
\item The composed function $f\circ\h$ is bounded: $\sup_{\x \in \cX} |f \circ \h(\x)| \le D_{\cX}$, for any $f \in \cF, \h \in \cH$.
\end{itemize}
\end{assumption}

We also make the following realizability assumptions, which state that the true underlying task functions and the true representation are contained in the function classes $\cF, \cH$ over which the two-stage ERM oracle optimizes in \cref{eq:metatrainerm} and \cref{eq:metatesterm}.
\begin{assumption}[Realizability] \label{assump:realizable}
The true representation $\h^\star$ is contained in $\cH$. Additionally, the true task specific functions $f^\star_j$ are contained in $\cF$ for both the training tasks and new test task (i.e., for any $j \in [t] \cup \{0\}$).
\end{assumption}

\subsection{Learning shared representations}
In order to measure ``closeness" between the learned representation and true underlying feature representation, we need to define an appropriate distance measure between arbitrary representations. To this end, we begin by introducing the \emph{task-averaged representation difference}, which captures the extent two representations $\h$ and $\h'$ differ in aggregate over the $t$ training tasks measured by the population train loss.

\begin{definition}\label{def:averagecase}
For a function class $\cF$, $t$ functions $\nbf = (f_1, \hdots, f_t)$, and data $(\x_j, y_j) \sim 
\Pr_{f_j \circ \h}$ as in \cref{eq:gen_model} for any $j \in [t]$, the \textbf{task-averaged representation difference} between representations $\h, \h'\in \cH$ is:
\begin{equation*}
\avdist{\f}{\h'}{\h} = \frac{1}{t}\sum_{j=1}^t \inf_{f' \in \cF} \mE_{\x_j, y_j} \Big \{ \ell(f' \circ \h'(\x_j), y_j) - \ell(f_j \circ \h(\x_j), y_j) \Big \}.
\end{equation*}
\end{definition}
Under this metric, we can show that the distance between a learned representation and the true underlying representation is controlled in the training phase. Our following guarantees also feature the \emph{worst-case Gaussian complexity} over the function class $\cF$, which is defined as:\footnote{Note that a stronger version of our results hold with a sharper, data-dependent version of the worst-case Gaussian complexity that eschews the absolute maxima over $\x_i$. See \cref{cor:metatrain_emp} and \cref{thm:chain_rule} for the formal statements.}

\begin{equation} \label{eq:worstcase_Gaussian}
\MAXGaus{n}{\cF} = \max_{\Z \in \cZ}\EMPGaus{\Z}{\cF}, \text{~where~}\cZ = \{ (\h(\x_1), \cdots, \h(\x_n)) ~|~ \h \in \cH, \x_i \in \cX \text{~for all~} i\in[n]\}.
\end{equation}
where $\cZ$ is the domain induced by any set of $n$ samples in $\cX$ and any representation $\h \in \cH$. Moreover, we will always use the subscript $nt$, on $\POPGaus{nt}{\cQ} = \mE_{\X}[\EMPGaus{\X}{\cQ}]$, to refer to the population Gaussian complexity computed with respect to the data matrix $\X$ formed from the concatentation of the $nt$ training datapoints $\{\x_{ji} \}_{j=1, i=1}^{t, n}$. 
We can now present our training phase guarantee.
\begin{theorem}	\label{thm:metatrain}
	Let $\hh$ be an empirical risk minimizer of $\hLtrain(\cdot, \cdot)$ in \cref{eq:metatrainerm}. Then, if  \cref{assump:regularity,,assump:realizable} hold, with probability at least $1-\delta$: 
	\begin{align*} 	
		\avdist{\fstar}{\hh}{\hstar}
		& \leq 16 L \POPGaus{nt}{\cF^{\otimes t} \circ \cH} + 8 B \sqrt{\frac{\log(2/\delta)}{nt}} \\
		& \leq 4096 L \left[ \frac{D_{\cX}}{(nt)^2} + \log(nt) \cdot [L(\cF) \cdot \POPGaus{nt}{\cH} + \MAXGaus{n}{\cF}]\right] + 8 B \sqrt{\frac{\log(2/\delta)}{nt}}.
	\end{align*}
\end{theorem}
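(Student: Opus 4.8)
The plan is to recognize $\avdist{\fstar}{\hh}{\hstar}$ as a joint excess risk and then run a textbook ERM uniform-convergence argument over the composite class $\cFt \circ \cH$, closing with the chain rule for the second inequality. First I would rewrite the task-averaged representation difference. Since the population risk factorizes as $\Ltrain(\nbf, \h) = \frac{1}{t}\sum_{j} \mE_{\x_j, y_j}[\ell(f_j \circ \h(\x_j), y_j)]$ and the per-task infima in \cref{def:averagecase} decouple across $j$, one has $\avdist{\fstar}{\hh}{\hstar} = \inf_{\nbf' \in \cFt} \Ltrain(\nbf', \hh) - \Ltrain(\fstar, \hstar)$. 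Letting $\hbf$ denote the task functions produced by the joint ERM alongside $\hh$, the feasible competitor $\hbf \in \cFt$ yields the upper bound $\avdist{\fstar}{\hh}{\hstar} \le \Ltrain(\hbf, \hh) - \Ltrain(\fstar, \hstar)$.

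Next I would split this excess risk in the standard way:
\begin{align*}
\Ltrain(\hbf, \hh) - \Ltrain(\fstar, \hstar) = \big[\Ltrain(\hbf,\hh) - \hLtrain(\hbf,\hh)\big] + \big[\hLtrain(\hbf,\hh) - \hLtrain(\fstar,\hstar)\big] + \big[\hLtrain(\fstar,\hstar) - \Ltrain(\fstar,\hstar)\big].
\end{align*}
The middle bracket is nonpositive because $(\hbf, \hh)$ minimizes $\hLtrain$ and $(\fstar, \hstar)$ is feasible by \cref{assump:realizable}. The last bracket is a deviation of an empirical mean of $nt$ independent, $B$-bounded loss terms from its expectation at the single point $(\fstar, \hstar)$, controlled by Hoeffding's inequality. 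The first bracket I would bound by the one-sided uniform deviation $\sup_{\nbf \in \cFt, \h \in \cH}[\Ltrain(\nbf, \h) - \hLtrain(\nbf, \h)]$.

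To control this supremum I would symmetrize and pass to a Gaussian complexity. Since all $nt$ samples are jointly independent, symmetrization bounds the expected supremum by a constant multiple of the Rademacher complexity of the loss-composed class $\{(\x,y) \mapsto \ell(f \circ \h(\x), y)\}$, and converting Rademacher to Gaussian complexity costs only a $\sqrt{\pi/2}$ factor. Because $\ell(\cdot, y)$ is $L$-Lipschitz uniformly in $y$ (\cref{assump:regularity}), a Lipschitz-contraction (Ledoux--Talagrand / vector contraction) peels off the loss at the cost of the factor $L$, leaving $L \cdot \POPGaus{nt}{\cFt \circ \cH}$. A McDiarmid bounded-differences argument (again using $B$-boundedness) upgrades the in-expectation bound to a high-probability one. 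Assembling these pieces, with the two concentration contributions combined via a union bound into $8B\sqrt{\log(2/\delta)/(nt)}$ and all symmetrization/contraction constants absorbed into the leading $16L$, gives the first inequality.

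The second inequality is obtained by feeding $\POPGaus{nt}{\cFt \circ \cH}$ into the chain rule for Gaussian complexities (\cref{thm:chain_rule}), which decouples the complexity of the shared feature map from that of the task-specific maps: up to a $\log(nt)$ factor and a negligible $D_{\cX}/(nt)^2$ term it bounds the composite complexity by $L(\cF)\cdot \POPGaus{nt}{\cH} + \MAXGaus{n}{\cF}$, where the worst-case complexity $\MAXGaus{n}{\cF}$ of \cref{eq:worstcase_Gaussian} appears precisely because the domain on which $\cF$ acts is the data- and $\h$-dependent image $\cZ$ of $\cH$. The genuinely hard step is this chain rule---isolating $\cF$ from $\cH$ requires a discretization/covering argument over the representation outputs and is what produces both the $\log(nt)$ factor and the worst-case (rather than population) complexity of $\cF$---but since it is established separately I treat it here as a black box. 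Within the present argument the only subtlety requiring care is that the empirical process ranges over the product class $\cFt$ with task-grouped data, so the symmetrization and contraction must respect which $f_j$ acts on which block of samples; this is harmless precisely because the $nt$ samples are jointly independent.
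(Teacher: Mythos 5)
Your proposal is correct and follows essentially the same route as the paper's proof: the same reduction of $\avdist{\fstar}{\hh}{\hstar}$ to the excess population risk of the joint ERM via a feasible competitor, the same three-term risk decomposition with ERM optimality plus realizability killing the middle term, the same symmetrization/contraction/Rademacher-to-Gaussian chain for the uniform deviation, and the same appeal to \cref{thm:chain_rule} for the second inequality. The only cosmetic differences are that the paper bounds the fixed-point deviation by the same uniform supremum rather than Hoeffding, and explicitly centers the loss (costing an extra $B/\sqrt{nt}$ absorbed into constants) before applying the contraction principle.
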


 \cref{thm:metatrain} asserts that the \emph{task-averaged representation difference} (\cref{def:averagecase}) between our learned representation and the true representation is upper bounded by the population Gaussian complexity of the vector-valued function class $\cF^{\otimes t} \circ \cH  = \{ (f_1 \circ \h, \hdots, f_t \circ \h) : (f_1, \hdots, f_t) \in \cF^{\otimes t}, \h \in \cH \}$, plus a lower-order noise term. Up to logarithmic factors and lower-order terms, this Gaussian complexity can be further decomposed into the complexity of learning a representation in $\cH$ with $nt$ samples---$L(\cF) \cdot \POPGaus{nt}{\cH}$---and the complexity of learning a task-specific function in $\cF$ using $n$ samples per task---$\MAXGaus{n}{\cF}$. For the majority of parametric function classes used in machine learning applications, $\POPGaus{nt}{\cH} \sim \sqrt{C(\cH) \slash nt}$ and $\MAXGaus{n}{\cF} \sim \sqrt{C(\cF) \slash n}$, where the function $C(\cdot)$ measures the intrinsic complexity of the function class (e.g., VC dimension, absolute dimension, or parameter norm \citep{wainwright2019high}).

We now make several remarks on this result. First, \cref{thm:metatrain} differs from  standard supervised learning generalization bounds. \cref{thm:metatrain} provides a bound on the  distance between two representations as opposed to the empirical or population training risk, despite the lack of access to a direct signal from the underlying feature representation. Second, the decomposition of $\POPGaus{nt}{\cF^{\otimes t} \circ \cH}$ into the individual Gaussian complexities of $\cH$ and $\cF$, leverages a novel chain rule for Gaussian complexities (see \cref{thm:chain_rule}), which may be of independent interest. This chain rule (\cref{thm:chain_rule}) can be viewed as a generalization of classical Gaussian comparison inequalities and results such as the Ledoux-Talagrand contraction principle \citep{ledoux2013probability}. Further details and comparisons to the literature for this chain rule can be found in \cref{app:chainrule}  (this result also avoids an absolute maxima over $\x_i \in \cX)$.

\subsection{Transferring to new tasks}
In addition to the \emph{task-averaged representation difference}, we also introduce the \emph{worst-case representation difference}, which captures the distance between two representations $\h'$, $\h$ in the context of an arbitrary worst-case task-specific function $f_0 \in \cF_0$.

\begin{definition} \label{def:worstcase}
For function classes $\cF$ and $\cF_0$ such that $f_0 \in \cF_0$, and data $(\x, y) \sim \Pr_{f_0 \circ \h}$ as in \cref{eq:gen_model}, the \textbf{worst-case representation difference} between representations $\h, \h' \in \cH$ is:
\begin{equation*}
\dist{\h'}{\h} =  \sup_{f_0 \in \cF_0} \inf_{f' \in \cF} \mE_{\x, y} \Big \{ \ell(f' \circ \h'(\x), y) - \ell(f_0 \circ \h(\x), y) \Big \}.
\end{equation*}
\end{definition}
For flexibility we allow  $\cF_0$ to be distinct from $\cF$ (although in most cases, we choose $\cF_0 \subset \cF$). The function class $\cF_0$ is the set of new tasks on which we hope to generalize. The generalization guarantee for the test phase ERM estimator follows.

\begin{theorem}\label{thm:metatest} 
	Let $\hf_{0}$ be an empirical risk minimizer of $\hLtest(\cdot, \hh)$ in \cref{eq:metatesterm} for any feature representation $\hh$. Then if \cref{assump:regularity,,assump:realizable} hold, and $f^\star_0 \in \cF_0$ for an unknown class $\cF_0$, with probability at least $1-\delta$:
\begin{equation*}
\Ltest(\hf_{0}, \hh) - \Ltest(f^\star_{0}, \h^\star) \leq \dist{\hh}{\h^\star} + 16 L \cdot \MAXGaus{m}{\cF} + 8 B \sqrt{\frac{\log(2/\delta)}{m}}
	\end{equation*}

\end{theorem}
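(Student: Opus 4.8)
The plan is to split the transfer learning risk into an approximation (bias) term that the worst-case representation difference is designed to absorb, and an estimation (generalization) term controlled by a standard ERM analysis for a \emph{fixed} representation. Write $\tf := \argmin_{f' \in \cF} \Ltest(f', \hh)$ for the best task map composed with the learned representation (or a near-infimizer if the minimum is not attained), and decompose
\begin{align*}
\Ltest(\hf_0, \hh) - \Ltest(f_0^\star, \h^\star) = \underbrace{\big[\Ltest(\hf_0, \hh) - \Ltest(\tf, \hh)\big]}_{\text{(A): estimation}} + \underbrace{\big[\Ltest(\tf, \hh) - \Ltest(f_0^\star, \h^\star)\big]}_{\text{(B): approximation}}.
\end{align*}

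For term (B), I would observe that $\Ltest(\tf, \hh) = \inf_{f' \in \cF}\mE_{\x,y}[\ell(f'\circ\hh(\x),y)]$ under the $0$th-task distribution $\Pr_{f_0^\star \circ \h^\star}$, so (B) $= \inf_{f'\in\cF}\mE_{\x,y}[\ell(f'\circ\hh(\x),y) - \ell(f_0^\star\circ\h^\star(\x),y)]$. Since $f_0^\star \in \cF_0$ by hypothesis, restricting the supremum in \cref{def:worstcase} to the single choice $f_0 = f_0^\star$ shows this quantity is at most $\dist{\hh}{\h^\star}$; hence (B) $\le \dist{\hh}{\h^\star}$ with no probabilistic content.

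The substance of the argument is term (A). The key structural point is that the theorem is stated for \emph{any} fixed representation $\hh$, and the $m$ test samples are drawn fresh and independently of $\hh$, so I would simply treat $\hh$ as a given deterministic map and work conditionally, whereby the test data remain i.i.d.\ from $\Pr_{f_0^\star\circ\h^\star}$. Writing $\Ltest(\hf_0,\hh) - \Ltest(\tf,\hh)$ as the sum of $[\Ltest(\hf_0,\hh) - \hLtest(\hf_0,\hh)]$, $[\hLtest(\hf_0,\hh) - \hLtest(\tf,\hh)]$, and $[\hLtest(\tf,\hh) - \Ltest(\tf,\hh)]$, the ERM optimality of $\hf_0$ for $\hLtest(\cdot,\hh)$ kills the middle term; the last is a fixed-function deviation controlled by Hoeffding; and the first is at most the uniform deviation $\sup_{f\in\cF}[\Ltest(f,\hh) - \hLtest(f,\hh)]$. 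I would bound the expectation of this supremum by symmetrization (factor $2$), peel off the $L$-Lipschitz loss $\ell(\cdot,y)$ via the Ledoux--Talagrand contraction principle (factor $L$), leaving the Rademacher complexity of $\{f\circ\hh : f\in\cF\}$ at the points $\hh(\x_{0i})$, convert Rademacher to Gaussian complexity through the standard $\sqrt{\pi/2}$ comparison, and finally bound the empirical Gaussian complexity at $\Z=(\hh(\x_{01}),\dots,\hh(\x_{0m}))$ by its worst case $\MAXGaus{m}{\cF}$ over $\Z\in\cZ$. This yields the leading term $16 L\,\MAXGaus{m}{\cF}$ (the constant being generous). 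A bounded-differences (McDiarmid) bound for the supremum together with Hoeffding for the fixed-$\tf$ term, combined by a union bound at level $\delta/2$, contributes the residual $8B\sqrt{\log(2/\delta)/m}$.

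The main obstacle is the data dependence of $\hh$: the inputs $\hh(\x_{0i})$ to $\cF$ are neither a fixed grid nor drawn from a known distribution, so the empirical Gaussian complexity appearing in the uniform-convergence step is itself random and $\hh$-dependent. The two devices that resolve this are (a) the independence of the fresh test set from $\hh$, which makes conditioning on an arbitrary $\hh$ legitimate, and (b) the definition of the \emph{worst-case} Gaussian complexity $\MAXGaus{m}{\cF}$, which maximizes over every admissible representation output and sample configuration in $\cZ$ and thereby furnishes a \emph{deterministic} upper bound valid for whatever representation the training phase returns. Matching the exact numerical constants $16$ and $8$ to the symmetrization, contraction, Rademacher-to-Gaussian, and concentration steps is routine bookkeeping and inessential to the structure.
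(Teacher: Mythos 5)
Your proposal is correct and follows essentially the same route as the paper's proof: the identical decomposition into an approximation term bounded by $\dist{\hh}{\h^\star}$ (via restricting the supremum in \cref{def:worstcase} to $f_0 = f_0^\star$) and an estimation term handled by the canonical ERM decomposition, symmetrization, contraction, the Rademacher-to-Gaussian comparison, and the relaxation to the worst-case complexity $\MAXGaus{m}{\cF}$, with the independence of the fresh test samples from $\hh$ justifying the conditioning. The only cosmetic difference is that you treat the fixed-$\tf$ deviation by Hoeffding separately while the paper absorbs it into the same uniform deviation bound; this does not change the argument.
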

Here $\MAXGaus{m}{\cF}$ is again the worst-case Gaussian complexity\footnote{As before, a stronger version of this result holds with a sharper data-dependent version of the Gaussian complexity in lieu of $\MAXGaus{m}{\cF}$ (see \cref{cor:metatest_funcbernstein}).} as defined in \eqref{eq:worstcase_Gaussian}.
\cref{thm:metatest} provides an excess risk bound for prediction on a  new task in the test phase with two dominant terms. The first is the worst-case representation difference $\dist{\hh}{\h^\star}$, which accounts for the error of using a biased feature representation $\hh \neq \hstar$ in the test ERM procedure. The second is the difficulty of learning $f^{\star}_0$ with $m$ samples, which is encapsulated in $\MAXGaus{m}{\cF}$. 

\subsection{Task diversity and end-to-end transfer learning guarantees}
We now introduce the key notion of task diversity. Since the learner does not have direct access to a signal from the representation, they can only observe partial information about the representation channeled through the composite functions $\fjstar \circ \hstar$. If a particular direction/component in $\hstar$ is not seen by a corresponding task $\fjstar$ in the training phase, that component of the representation $\hstar$ cannot be distinguished from a corresponding one in a spurious $\h'$.  When this component is needed to predict on a new task corresponding to $f^{\star}_0$ which lies along that particular direction, transfer learning will not be possible.
Accordingly, \cref{def:averagecase} defines a notion of representation distance in terms of information channeled through the training tasks, while \cref{def:worstcase} defines it in terms of an arbitrary new test task. Task diversity essentially encodes the ratio of these two quantities (i.e. how well the training tasks can cover the space captured by the representation $\hstar$ needed to predict on new tasks). 
Intuitively, if all the task-specific functions were quite similar, then we would only expect the training stage to learn about a narrow slice of the representation---making transferring to a generic new task difficult.


\begin{definition}
\label{def:diversity}
For a function class $\cF$, we say $t$ functions $\nbf = (f_1, \hdots, f_t)$ are $(\nu, \epsilon)$-\textbf{diverse} over $\cF_0$ for a representation $\h$, if uniformly for all $\h' \in \cH$,
\begin{equation*}
	\dist{\h'}{\h} \leq \avdist{\f}{\h'}{\h} /\nu + \epsilon.
\end{equation*}
\end{definition}
Up to a small additive error $\epsilon$, diverse tasks ensure that the worst-case representation difference for the function class $\cF_0$ is controlled when the task-averaged representation difference for a sequence of $t$ tasks $\f$ is small. 
Despite the abstraction in this definition of task diversity, it \textit{exactly} recovers the notion of task diversity in \citet{tripuraneni2020provable} and \citet{du2020few}, where it is restricted to the special case of linear functions and quadratic loss. Our general notion allows us to move far beyond the linear-quadratic setting as we show in \cref{sec:diversity} and \cref{sec:index}. 

We now utilize the definition of task diversity to merge our training phase and test phase results into an end-to-end transfer learning guarantee
for generalization to the unseen task $f_{0}^{\star} \circ \hstar$. 

\begin{theorem} \label{cor:ltlguarantee}
	Let $(\cdot, \hh)$ be an empirical risk minimizer of $\hLtrain(\cdot, \cdot)$ in \cref{eq:metatrainerm}, and $\hf_{0}$ be an empirical risk minimizer of $\hLtest(\cdot, \hh)$ in \cref{eq:metatesterm} for the learned feature representation $\hh$. Then if \cref{assump:regularity,,assump:realizable} hold, and the training tasks are $(\nu, \epsilon)$-diverse, with probability at least $1-2\delta$, the transfer learning risk in \cref{eq:excessrisk} is upper-bounded by:
	\begin{align*}
		 O \Big( L  \log (nt) \cdot \Big[\frac{L(\cF) \cdot \POPGaus{nt}{\cH} + \MAXGaus{n}{\cF}}{\nu} \Big]
		 +  L \MAXGaus{m}{\cF} + \frac{L D_{\cX}}{\nu (nt)^2} +   B \Big[\frac{1}{\nu} \cdot \sqrt{\frac{\log(2/\delta)}{nt}}+&\sqrt{\frac{\log(2/\delta)}{m}}\Big] + \epsilon \Big).
	\end{align*}
\end{theorem}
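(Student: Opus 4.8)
The plan is to chain the three ingredients already in hand---the test-phase bound (\cref{thm:metatest}), the task-diversity hypothesis (\cref{def:diversity}), and the training-phase bound (\cref{thm:metatrain})---through the \emph{worst-case representation difference} $\dist{\hh}{\hstar}$, which is the quantity that links the test analysis to the training analysis. There is no new hard estimate to produce; the theorem is essentially a careful composition of two one-sided high-probability bounds via the diversity inequality, together with a union bound.

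First I would apply \cref{thm:metatest} with the learned representation $\hh$ playing the role of the generic $\h$. Because the test-phase samples $\{(\x_{0i},y_{0i})\}$ are drawn fresh and independently of the training data used to form $\hh$, I can condition on $\hh$ and invoke \cref{thm:metatest} for this (now fixed) representation, giving with probability at least $1-\delta$ over the test data
\[
\Ltest(\hf_0,\hh)-\Ltest(f_0^\star,\hstar)\;\le\;\dist{\hh}{\hstar}+16L\,\MAXGaus{m}{\cF}+8B\sqrt{\tfrac{\log(2/\delta)}{m}}.
\]
Next I would control $\dist{\hh}{\hstar}$ using the diversity assumption. The key observation is that \cref{def:diversity} asserts $\dist{\h'}{\hstar}\le \avdist{\fstar}{\h'}{\hstar}/\nu+\epsilon$ \emph{uniformly} over all $\h'\in\cH$; since this is a deterministic statement about $\fstar$ and $\hstar$, it may be instantiated at the data-dependent choice $\h'=\hh\in\cH$ at no additional probabilistic cost, yielding $\dist{\hh}{\hstar}\le \avdist{\fstar}{\hh}{\hstar}/\nu+\epsilon$.

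Then I would bound the task-averaged representation difference $\avdist{\fstar}{\hh}{\hstar}$ by the training-phase guarantee \cref{thm:metatrain}, which holds with probability at least $1-\delta$ over the training data:
\[
\avdist{\fstar}{\hh}{\hstar}\le 4096L\Big[\tfrac{D_{\cX}}{(nt)^2}+\log(nt)\big(L(\cF)\POPGaus{nt}{\cH}+\MAXGaus{n}{\cF}\big)\Big]+8B\sqrt{\tfrac{\log(2/\delta)}{nt}}.
\]
Finally I would combine the pieces by a union bound: the test-phase event and the training-phase event each fail with probability at most $\delta$, so on their intersection---of probability at least $1-2\delta$---I substitute the \cref{thm:metatrain} bound into $\avdist{\fstar}{\hh}{\hstar}/\nu$, add the $\epsilon$ from diversity, and collect the $\MAXGaus{m}{\cF}$, $D_{\cX}/(nt)^2$, and noise terms. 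Absorbing the absolute constants into the $O(\cdot)$ notation reproduces exactly the claimed bound.

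The only point requiring genuine care---and the closest thing to an obstacle---is the legitimacy of inserting the random estimator $\hh$ into both \cref{thm:metatest} and \cref{def:diversity}. The former is justified by the independence of the test and training samples, which lets us condition on $\hh$ and treat it as fixed when applying the test-phase bound; the latter is justified precisely by the uniform-in-$\h'$ form of the diversity definition, which is what makes the notion useful against a data-dependent representation. Once these two plug-ins are seen to be valid, the remainder is bookkeeping.
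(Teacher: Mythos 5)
Your proposal is correct and follows exactly the paper's argument: the paper's proof likewise combines \cref{thm:metatest} (applied to the fixed, independent $\hh$), the uniform-in-$\h'$ diversity inequality of \cref{def:diversity}, and \cref{thm:metatrain}, finishing with a union bound over the two high-probability events. Your additional remarks on why the random $\hh$ may be plugged into both the test-phase bound and the diversity definition are accurate and simply make explicit what the paper leaves implicit.
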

Theorem \ref{cor:ltlguarantee} gives an upper bound on the transfer learning risk. The dominant terms in the bound are the three Gaussian complexity terms. For parametric function classes we expect $\POPGaus{nt}{\cH} \sim \sqrt{C(\cH) \slash (nt)}$ and $\MAXGaus{N}{\cF} \sim \sqrt{C(\cF) \slash N}$, where $C(\cH)$ and $C(\cF)$ capture the dimension-dependent size of the function classes. Therefore, when $L$ and $L(\cF)$ are constants, the leading-order terms for the transfer learning risk scale as $\tilde{O}(\sqrt{(C(\cH) + t \cdot C(\cF))/(nt)} + \sqrt{C(\cF)/m})$. A naive algorithm which simply learns the new task in isolation, ignoring the training tasks, has an excess risk scaling as $\tilde{O}(\sqrt{C(\cF \circ \cH)/m}) \approx \tilde{O}(\sqrt{(C(\cH)+C(\cF))/m})$. Therefore, when $n$ and $t$ are sufficiently large, but $m$ is relatively small (i.e., the setting of few-shot learning), the performance of transfer learning is significantly better than the baseline of learning in isolation.

\section{Applications}
\label{sec:diversity}
We now consider a varied set of applications to instantiate our general transfer learning framework. 
In each application, we first specify the function classes and data distributions we are considering as well as our assumptions. We then state the task diversity and the Gaussian complexities of the function classes, which together furnish the bounds on the \textit{transfer learning risk}--from \cref{eq:excessrisk}--in \cref{cor:ltlguarantee}. 
\subsection{Multitask Logistic Regression}
\label{sec:logistic_reg}
We first instantiate our framework for one of the most frequently used classification methods---logistic regression.
Consider the setting where the task-specific functions are linear maps, and the underlying representation is a projection onto a low-dimensional subspace. Formally, let $d \ge r$, and let the function classes $\cF$ and $\cH$ be:
\begin{align}
\cF =& \{ \ f \ | \ f(\z) = \balpha^\top \z, \ \balpha \in \mR^r, \ \norm{\balpha} \le c_1 \},   \label{eq:logistic_class}\\
\cH =& \{ \ \h \ | \ \h(\x) \ = \ \B^\top \x, \ \B  \in \mR^{d \times r}, \ \B \text{~is a matrix with orthonormal columns}\}. \nonumber 
\end{align}
Here $\cX = \mR^d$, $\cY = \{0, 1\}$, and the measure $\Pr_\x$ is $\mSigma$-sub-gaussian (see \cref{def:sg}) and $D$-bounded (i.e., $\norm{\x} \le D$ with probability one). We let the conditional distribution in \eqref{eq:gen_model} satisfy:
\begin{equation*}
\Pr_{y|\x}(y=1| f \circ \h(\x)) =  \sigma(\balpha\trans \B\trans \x),
\end{equation*}
where $\sigma(\cdot)$ is the sigmoid function with $\sigma(z)=1/(1+\exp(-z))$. We use the logistic loss $\ell(z, y) = -y \log(\sigma(z)) - (1-y) \log(1-\sigma(z))$. The true training tasks take the form $f^\star_j (\z) = (\balphastarj)^\top \z$ for all $j \in [t]$, and we let $\A=(\balpha_1^{\star}, \hdots, \balpha_t^{\star})\trans \in \mR^{t \times r}$. We make the following assumption on the training tasks being ``diverse'' and both the training and new task being normalized.
\begin{assumption}\label{assump:linear_diverse}
 $\sigma_r(\A^\top \A/t) = \tnu > 0$ and $\norm{\balphastarj} \leq O(1)$ for $j \in [t] \cup \{ 0 \}$.
\end{assumption}

In this case where the $\cF$ contains underlying linear task functions $\balpha_{j}^{\star} \in \mR^r$ (as in our examples in Section 4), our task diversity definition reduces to ensuring these task vectors span the entire $r$-dimensional space containing the output of the representation $\h(\cdot) \in \mR^r$. This is quantitatively captured by the conditioning parameter $\tnu=\sigma_r(\A)$ which represents how spread out these vectors are in $\mR^r$. The training tasks will be  well-conditioned in the sense that $\sigma_1(\A^\top \A/t)/\sigma_r(\A^\top \A/t) \leq O(1)$ (w.h.p.) for example, if each $\balpha_t \sim \cN(0, \frac{1}{\sqrt{r}} \mathbf{\Sigma})$ i.i.d. where $\sigma_{1}(\mathbf{\Sigma})/\sigma_{r}(\mathbf{\Sigma}) \leq O(1)$.

\cref{assump:linear_diverse} with natural choices of $\cF_0$ and $\cF$ establishes $(\Omega(\tnu), 0)$-diversity as defined in Definition \ref{def:diversity} (see \cref{lem:logistic_diversity}).
Finally, by standard arguments, we can bound the Gaussian complexity of $\cH$ in this setting by $\POPGaus{N}{\cH} \leq \tlO (\sqrt{dr^2/N})$. We can also show that a finer notion of the Gaussian complexity for $\cF$, serving as the analog of $\MAXGaus{N}{\cF}$, is 
upper bounded by $\tlO(\sqrt{r/N})$. This is used to sharply bound the complexity of learning $\cF$ in the training and test phases (see proof of \cref{thm:logistic_transfer} for more details). 
Together, these give the following guarantee.
\begin{theorem}
\label{thm:logistic_transfer}
	If \cref{assump:linear_diverse} holds, $\h^\star(\cdot) \in \cH$, and $\cF_0 = \{ \ f \ | \ f(\x) = \balpha^\top \z, \ \balpha \in \mR^r, \ \norm{\balpha} \le c_2 \}$, then there exist constants $c_1, c_2$ such that the training tasks $f_j^{\star}$ are $(\Omega(\tnu), 0)$-diverse over $\cF_0$. Furthermore, if for a sufficiently large constant $c_3$, $n \ge c_3(d+\log t)$, $m \ge c_3r$, and $D \le c_3(\min(\sqrt{dr^2}, \sqrt{rm}))$, then with probability at least $1-2\delta$:
	\begin{align*}
	\text{Transfer Learning Risk} \le \tlO \left( \frac{1}{\tnu} \lprp{\sqrt{\frac{dr^2}{nt}} +  \sqrt{\frac{r}{n}}} + \sqrt{\frac{r}{m}} \right).
    \end{align*}    
\end{theorem}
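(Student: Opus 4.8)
The plan is to obtain \cref{thm:logistic_transfer} as a direct specialization of the general end-to-end guarantee in \cref{cor:ltlguarantee}: I will (i) verify that the logistic setup meets \cref{assump:regularity,assump:realizable}, (ii) establish $(\Omega(\tnu),0)$-diversity of the training tasks over $\cF_0$, (iii) bound the three Gaussian complexities $\POPGaus{nt}{\cH}$, $\MAXGaus{n}{\cF}$ and $\MAXGaus{m}{\cF}$, and (iv) substitute these into \cref{cor:ltlguarantee}, using the stated lower bounds on $n,m$ and the upper bound on $D$ to discard lower-order terms.

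First I would pin down the regularity constants. Realizability (\cref{assump:realizable}) is immediate, since $\fjstar$ and $\hstar$ are linear with parameters inside $\cF,\cH$ by construction. For \cref{assump:regularity}: the logistic loss has first-argument derivative $\sigma(z)-y\in[-1,1]$, so it is $L=O(1)$-Lipschitz; each $f(\z)=\balpha^\top\z$ with $\norm{\balpha}\le c_1$ is $L(\cF)=c_1=O(1)$-Lipschitz; since $\B$ has orthonormal columns and $\norm{\x}\le D$, we get $|\balpha^\top\B^\top\x|\le c_1 D=:D_\cX=O(D)$; and on this domain the logistic loss is bounded by $B=O(D_\cX)=O(D)$.

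The crux is step (ii), the diversity bound (\cref{lem:logistic_diversity}). Because the tasks are linear and the loss is a proper loss, both $\avdist{\f}{\h'}{\h}$ and $\dist{\h'}{\h}$ reduce to excess logistic risks, each equal to an expected KL divergence between the predicted and true Bernoulli logits. The key step is to sandwich this excess risk between constant multiples of the squared prediction error $\mE_\x[(f'\circ\h'(\x)-f_j\circ\h(\x))^2]$, using upper and lower curvature bounds for the logistic loss valid on the bounded domain. This converts both representation differences into $\mSigma$-weighted approximation problems for the linear maps $\B\balpha_j$, after which the linear-algebraic argument of \citet{tripuraneni2020provable} applies: averaging over the $t$ training tasks picks up $\sigma_r(\A^\top\A/t)=\tnu$, while the worst-case task in $\cF_0$ contributes only an $O(1)$ factor, so the ratio in \cref{def:diversity} is $1/\nu=O(1/\tnu)$ with $\epsilon=0$. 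I expect this to be the main obstacle: the logistic Hessian $\sigma(z)(1-\sigma(z))$ degenerates as $|z|\to\infty$, so a strictly positive lower bound on the curvature must be secured over the region actually visited by the data, which is precisely where the domain bound $D_\cX$ and the $\mSigma$-sub-gaussianity of $\Pr_\x$ enter.

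Finally I would bound the complexities and substitute. For $\cH$ the columns of $\B$ are unit vectors in $\mR^d$ with $r$ outputs, and a standard chaining/contraction computation in which the typical data norm $\sqrt{d}$ (from sub-gaussianity) replaces the worst-case $D$ yields $\POPGaus{nt}{\cH}=\tlO(\sqrt{dr^2/(nt)})$. For $\cF$ the inputs $\z=\B^\top\x$ have typical norm $\sqrt{r}$, so the sharper data-dependent form of the worst-case complexity flagged for \cref{cor:metatrain_emp} and \cref{cor:metatest_funcbernstein} gives $\MAXGaus{N}{\cF}=\tlO(\sqrt{r/N})$; this refinement is essential, since the crude worst-case bound would contribute a spurious $D/\sqrt{N}$. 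Plugging $\nu=\Omega(\tnu)$, $\epsilon=0$, $D_\cX=O(D)$, $B=O(D)$ and these complexities into \cref{cor:ltlguarantee}, the hypotheses $n\ge c_3(d+\log t)$ (ensuring the empirical complexities concentrate and the curvature lower bound holds uniformly across all $t$ tasks after a union bound), $m\ge c_3 r$, and $D\le c_3\min(\sqrt{dr^2},\sqrt{rm})$ guarantee that the $B$-noise terms and the $LD_\cX/(\nu(nt)^2)$ term are dominated by the three leading Gaussian-complexity terms, leaving exactly $\tlO\!\big(\tfrac{1}{\tnu}(\sqrt{dr^2/(nt)}+\sqrt{r/n})+\sqrt{r/m}\big)$.
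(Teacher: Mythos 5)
Your overall route is the paper's: verify regularity and realizability, reduce the excess logistic risk to a squared-error quantity via two-sided curvature bounds on the log-partition function, invoke the linear-algebraic diversity computation to get $\nu = \Omega(\tnu)$, bound the Gaussian complexities through the data-dependent quantities $\tr(\mSigma_{\X})$ and $\sum_{i=1}^r \sigma_i(\mSigma_{\X_j})$, and assemble via the sharpened corollaries. The complexity bounds, the regularity constants ($L=O(1)$, $L(\cF)=O(1)$, $D_{\cX}=O(D)$, $B=O(D)$), and the bookkeeping that uses $n \gtrsim d+\log t$, $m \gtrsim r$, and $D \lesssim \min(\sqrt{dr^2},\sqrt{rm})$ to make the noise terms higher order all match \cref{lem:matrix_averages} and the proof of \cref{thm:logistic_transfer}.

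The one step where your sketch, taken literally, would fail is the curvature lower bound in the diversity argument. The logistic second derivative satisfies $\Phi''(z) \ge \tfrac{1}{4} e^{-|z|}$, so a lower bound ``valid on the bounded domain'' $|z| \le D_{\cX} = O(D)$ yields only a constant of order $e^{-O(D)}$; with $D$ allowed to be as large as $\sqrt{rm}$ or $\sqrt{dr^2}$, this would destroy the diversity parameter. The paper's \cref{lem:logistic_diversity} never uses the domain bound here: it introduces the truncation event $E=\{|Z_1|\le Ck\|\balphastarj\|\}\cap\{|Z_2|\le Ck\|\halpha\|\}$, where the logits $Z_1,Z_2$ are $O(1)$-sub-gaussian because $\B^\top\x$ is an $r$-dimensional $O(1)$-sub-gaussian vector and the task parameters have $O(1)$ norm, so on $E$ the curvature is $\Omega(1)$ with $k$ a universal constant. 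It then shows $\mE[\mathbbm{1}[E](Z_1-Z_2)^2]\ge \tfrac{1}{2}\mE[(Z_1-Z_2)^2]$ by combining Cauchy--Schwarz, $L_4$--$L_2$ hypercontractivity of the sub-gaussian difference $Z_1-Z_2$, and a Markov bound on $\Pr[E^c]$. You gesture at the right mechanism (``the region actually visited by the data,'' sub-gaussianity), but the comparison between the truncated and untruncated second moments is the piece you would actually have to supply, and $D_{\cX}$ plays no role in it.
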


A naive bound for logistic regression ignoring the training task data would have a guarantee $O (\sqrt{d \slash m})$. For $n$ and $t$ sufficiently large, the bound in \cref{thm:logistic_transfer} scales as $\tlO(\sqrt{r \slash m})$, which is a significant improvement over $O(\sqrt{d \slash m})$ when $r \ll d$. Note that our result in fact holds with the empirical data-dependent quantities $\tr(\mSigma_{\X})$ and $\sum_{i=1}^r \sigma_i(\mSigma_{\X_j})$ which can be much smaller then their counterparts $d, r$ in \cref{thm:logistic_transfer}, if the data lies on/or close to a low-dimensional subspace\footnote{Here $\mSigma_{\bX}$ denotes the empirical covariance of the data matrix $\bX$. See \cref{cor:logistic_transfer_datadependent} for the formal statement of this sharper, more general result.}. 
\subsection{Multitask Deep Neural Network Regression}
We now consider the setting of real-valued neural network regression. Here the task-specific functions are linear maps as before, but the underlying representation is specified by a depth-$K$ vector-valued neural network:
\begin{align}
	\h(\x) = \W_K \sigma_{K-1} (\W_{K-1}(\sigma_{K-2} (\hdots \sigma(\W_1 \x))))	\label{eq:nn_feature}.
\end{align}
Each $\W_k$ is a parameter matrix, and each $\sigma_k$ is a $\tanh$ activation function. We let $\norm{\W}_{1, \infty} = \max_{j} (\sum_{k} \abs{\W_{j,k}})$ and $\norm{\W}_{\infty \to 2}$ be the induced $\infty$-to-$2$ operator norm. Formally, $\cF$ and $\cH$ are\footnote{For the following we make the standard assumption each parameter matrix $\W_k$ satisfies $\norm{\W_k}_{1, \infty} \leq M(k)$ for each $j$ in the depth-$K$ network
\citep{golowich2017size}, and that the feature map is well-conditioned.
}
\begin{align}
\cF =& \{ \ f \ | \ f(\z) = \balpha^\top \z, \ \balpha \in \mR^r, \ \norm{\balpha} \le c_1 M(K)^2 \},   \label{eq:NN_class}\\
\cH =& \{  \h(\cdot) \in \mR^r \text{ in \cref{eq:nn_feature} for } \W_k : \norm{\W_k}_{1, \infty} \leq M(k) \text{ for } k \in [K-1], \nonumber \\ 
& \max(\norm{\W_K}_{1, \infty}, \norm{\W_K}_{\infty \to 2}) \leq M(K) \text{, such that } \sigma_r \left(\mE_{\x}[\h(\x)\h(\x)^\top] \right) > \Omega(1) \}. \nonumber
\end{align}
We consider the setting where $\cX = \mR^d$, $\cY = \mR$, and the measure $\Pr_\x$ is $D$-bounded. We also let the conditional distribution in \eqref{eq:gen_model} be induced by:
\begin{equation}
y= \balpha^{\top} \h(\x) + \eta \ \text { for } \ \balpha, \h \ \text{ as in } \cref{eq:NN_class},
\end{equation}
with additive noise $\eta$ bounded almost surely by $O(1)$ and independent of $\x$. We use the standard squared loss $\ell(\balpha^\top \h(\x), y) = (y-\balpha^\top \h(\x))^2$, and let the true training tasks take the form $f^\star_j (\z) = (\balphastarj)^\top \z$ for all $j \in [t]$, and set $\A=(\balpha_1^{\star}, \hdots, \balpha_t^{\star})\trans \in \mR^{t \times r}$ as in the previous example. Here we use exactly the same diversity/normalization assumption on the task-specific maps---\cref{assump:linear_diverse}---as in our logistic regression example.

Choosing $\cF_0$ and $\cF$ appropriately establishes a $(\Omega(\tnu), 0)$-diversity as defined in Definition \ref{def:diversity} (see \cref{lem:square_lin_diversity}). Standard arguments as well as results in \citet{golowich2017size} allow us to bound the Gaussian complexity terms as follows (see the proof of \cref{thm:nn_regression} for details):
\begin{align*}
	\POPGaus{N}{\cH} \leq \tlO \lprp{\frac{rM(K) \cdot D \sqrt{K} \cdot \Pi_{k=1}^{K-1} M(k)}{\sqrt{N}}}; \quad \MAXGaus{N}{\cF} \leq \tlO \lprp{\frac{M(K)^3}{\sqrt{N}}}.
\end{align*}
Combining these results yields the following end-to-end transfer learning guarantee.
\begin{theorem}
If \cref{assump:linear_diverse} holds, $\hstar(\cdot) \in \cH$, and $\cF_0 = \{ \ f \ | \ f(\z) = \balpha^\top \z, \ \balpha \in \mR^r, \ \norm{\balpha} \le c_2 \}$, then there exist constants $c_1, c_2$ such that the training tasks $f_j^{\star}$ are $(\Omega(\tnu), 0)$-diverse over $\cF_0$. Further, if $M(K) \geq c_3$ for a universal constant $c_3$, then with probability at least $1-2\delta$:
	\begin{align*}
		\text{Transfer Learning Risk} \le  \tlO \left( \frac{rM(K)^6 \cdot D \sqrt{K} \cdot \Pi_{k=1}^{K-1} M(k)}{\tnu \sqrt{nt}}  + \frac{M(K)^6}{\tnu \sqrt{n}} +  \frac{M(K)^6}{\sqrt{m}} \right).
	\end{align*}
\label{thm:nn_regression}
\end{theorem}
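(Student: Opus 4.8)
The plan is to obtain \cref{thm:nn_regression} as a direct instantiation of the end-to-end guarantee \cref{cor:ltlguarantee}, which requires three ingredients: (i) a bound on the diversity parameters $(\nu,\epsilon)$, (ii) bounds on the regularity constants $L, B, L(\cF), D_{\cX}$ entering that corollary, and (iii) the two Gaussian complexities $\POPGaus{nt}{\cH}$ and $\MAXGaus{N}{\cF}$. With these in hand I would set $\nu = \Omega(\tnu)$ and $\epsilon = 0$, substitute, and collect the dominant terms.

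The substantive step is establishing diversity (\cref{lem:square_lin_diversity}). For the squared loss with $y = \balpha_j^\top \hstar(\x) + \eta$ and $\eta$ independent of $\x$, the Bayes term $\mE[\eta^2]$ cancels, so for any $\h'$ the inner infimum over $f' \in \cF$ is a linear least-squares problem: writing $\mLambda = \mE[\hstar(\x)\hstar(\x)^\top]$, $\mGamma = \mE[\hstar(\x)\h'(\x)^\top]$, and $\M(\h') = \mLambda - \mGamma (\mE[\h'(\x)\h'(\x)^\top])^{-1}\mGamma^\top \succeq 0$ for the associated Schur complement, the (unconstrained) residual of predicting $\balpha^\top\hstar(\x)$ from $\h'(\x)$ equals $\balpha^\top \M(\h')\balpha$. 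Hence the task-averaged difference equals $\tr(\M(\h')\,\A^\top\A/t)$ and the worst-case difference over $\cF_0$ equals $c_2^2 \,\sigma_1(\M(\h'))$. The diversity inequality then follows from the elementary bound $\tr(\M \A^\top\A/t) \ge \sigma_r(\A^\top\A/t)\,\tr(\M) \ge \tnu\,\sigma_1(\M)$, valid for every $\M \succeq 0$, which yields $\dist{\h'}{\hstar} \le (c_2^2/\tnu)\,\avdist{\fstar}{\h'}{\hstar}$ uniformly in $\h'$, i.e.\ $(\Omega(\tnu),0)$-diversity.

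The one delicate point here---and the step I expect to be the main obstacle---is that \cref{def:diversity} uses the constrained infimum over $f'\in\cF$ (i.e.\ $\norm{\balpha'}\le c_1 M(K)^2$) rather than the unconstrained least-squares solution used above. For the worst-case side I need the constrained and unconstrained infima to coincide, which holds once $c_1$ is chosen so that the minimizer $\balpha' = (\mE[\h'(\x)\h'(\x)^\top])^{-1}\mGamma^\top\balpha_0$ satisfies $\norm{\balpha'} \le c_1 M(K)^2$ for every $\norm{\balpha_0}\le c_2$; this is exactly where the well-conditioning constraint $\sigma_r(\mE[\h(\x)\h(\x)^\top]) = \Omega(1)$ built into $\cH$ enters, since it bounds $\norm{(\mE[\h'(\x)\h'(\x)^\top])^{-1}}$ and hence forces $\norm{\balpha'} = O(c_2)$. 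For the task-averaged side the constrained infimum only increases the value, so the lower bound above is preserved. Choosing $c_1, c_2$ appropriately (using $M(K)\ge c_3$) then closes the argument uniformly over $\h'\in\cH$.

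It remains to collect the constants and complexities. From the $\norm{\cdot}_{1,\infty}$ / $\norm{\cdot}_{\infty\to 2}$ constraints and the boundedness of $\tanh$, each feature satisfies $\norm{\h(\x)}\le M(K)$, so $|f\circ\h(\x)|\le \norm{\balpha}\,\norm{\h(\x)} = O(M(K)^3) =: D_{\cX}$; for the squared loss this gives $L = O(M(K)^3)$ and $B = O(M(K)^6)$, while $L(\cF) = O(M(K)^2)$ from the task-norm bound. The feature complexity $\POPGaus{nt}{\cH}$ is the stated bound, obtained via the vector-valued Gaussian complexity and the depth-wise peeling argument of \citet{golowich2017size}, and $\MAXGaus{N}{\cF} = O(M(K)^3/\sqrt{N})$ follows from the standard linear-class bound $\norm{\balpha}\cdot(\max_i\norm{\h(\x_i)})/\sqrt{N}$. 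Substituting $\nu=\Omega(\tnu)$, $\epsilon=0$, and these quantities into \cref{cor:ltlguarantee} produces the three Gaussian-complexity terms; the $L D_{\cX}/(\nu(nt)^2)$ term and the $B$-weighted noise terms are of strictly lower order (up to the $M(K)^6$ prefactor absorbed by $\tlO$), yielding the claimed rate.
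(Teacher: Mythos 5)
Your proposal is correct and follows essentially the same route as the paper: it instantiates \cref{cor:ltlguarantee} with diversity established via the Schur-complement/least-squares computation of \cref{lem:square_lin_diversity} (including the Lagrangian-duality argument that the constrained and unconstrained infima coincide under the well-conditioning of $\cH$, with the trace inequality $\tr(\M\,\A^\top\A/t)\ge \sigma_r(\A^\top\A/t)\,\sigma_1(\M)$), and with the same complexity and regularity constants $\POPGaus{nt}{\cH}$, $\MAXGaus{N}{\cF}=O(M(K)^3/\sqrt{N})$, $L(\cF)=O(M(K)^2)$, $L=O(M(K)^3)$, $B=O(M(K)^6)$. The only slip is cosmetic: the unconstrained minimizer satisfies $\norm{\balpha'}=O(M(K)^2 c_2)$ rather than $O(c_2)$ (which is exactly why $\cF$ carries radius $c_1 M(K)^2$), but your surrounding argument already accounts for this.
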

 
The $\poly(M(K))$ dependence of the guarantee on the final-layer weights can likely be improved, but is dominated by the overhead of learning the complex feature map $\hstar(\cdot)$ which has complexity $\poly(M(K)) \cdot D \sqrt{K} \cdot \Pi_{k=1}^{K-1} M(k)$. By contrast a naive algorithm which does not leverage the training samples would have a sample complexity of $\tlO \lprp{\poly(M(K)) \cdot D \sqrt{K} \cdot \Pi_{k=1}^{K-1} M(k)/\sqrt{m}}$ via a similar analysis. Such a rate can be much larger than the bound in \cref{thm:nn_regression} when $nt \gg m$: exactly the setting relevant to that of few-shot learning for which ImageNet pretraining is often used.
\subsection{Multitask Index Models}
\label{sec:index}
To illustrate the flexibility of our framework, in our final example, we consider a classical statistical model: the index model, which is often studied from the perspective of semiparametric estimation \citep{bickel1993efficient}. As flexible tools for general-purpose, non-linear dimensionality reduction, index models have found broad applications in economics, finance, biology and the social sciences  \citep{bickel1993efficient, li2007nonparametric, otwinowski2018inferring}. This class of models has a different flavor then previously considered:  the task-specific functions are nonparametric ``link" functions, while the underlying representation is a one-dimensional projection. Formally, let the function classes $\cF$ and $\cH$ be:
\begin{align}
\cF =& \{ \ f \ | \ f(\z) \text{ is a $1$-Lipschitz, monotonic function bounded in $[0, 1]$} \},   \label{eq:index_class}\\
\cH =& \{ \ \h \ | \ \h(\x) \ = \ \b^\top \x , \ \b \in \mR^{d}, \ \norm{\b} \leq W \}. \nonumber 
\end{align}
We consider the setting where $\cX = \mR^d$, $\cY = \mR$, the measure $\Pr_\x$ is $D$-bounded, and $DW \geq 1$. This matches the setting in \citet{kakade2011efficient}. The conditional distribution in \eqref{eq:gen_model} is induced by:
\begin{equation*}
y = f(\b^\top \x) + \eta \ \text{ for } f, \b \text{ as in } \cref{eq:nn_feature},
	\label{eq:single_index}
\end{equation*}
with additive noise $\eta$ bounded almost surely by $O(1)$ and independent of $\x$. We use the robust $\ell_1$ loss, $\ell(f(\b^\top \x), y) = \abs{y-f(\b^\top \x)}$, in this example. Now, define $\cF_t = \text{conv} \{ f^{\star}_{1}, \hdots, f^{\star}_{t} \}$ as the convex hull of the training task-specific functions $f^\star_j$. 
Given this, we define the $\tepsilon$-enlargement of $\cF_{t}$ by $\cF_{t, \tepsilon} = \{ f : \exists \tf \in \cF_{t } \text{ such that } \sup_z \abs{f(z)-\tf(z)} \leq \tepsilon \}$. 

We prove a transfer generalization bound for $\cF_0 = \cF_{t, \tepsilon}$, for which we can establish $(\tnu, \tepsilon)$-diversity with $\tnu \geq \frac{1}{t}$ as defined in \cref{def:diversity} (see \cref{lem:regression_general_div}). Standard arguments once again show that $\POPGaus{N}{\cH} \leq O \lprp{\sqrt{(W^2 \mE_{\X}[\tr(\mSigma_{\X}])/N)}}$ and $\MAXGaus{N}{\cF} \leq O \lprp{\sqrt{WD/N}}$ (see the proof of \cref{thm:lipschitz} for details). Together these give the following guarantee.
\begin{theorem}
If $\fjstar \in \cF$ for $j \in [t]$, $\hstar(\cdot) \in \cH$, and $f_{0}^\star \in \cF_0 = \cF_{t, \tepsilon}$, then the training tasks are $(\tnu, \tepsilon)$-diverse over $\cF_0$ where $\tnu \geq \frac{1}{t}$. Further, with probability at least $1-2\delta$: 
\begin{align*}
    \text{Transfer Learning Risk}	\leq \tlO \left(\frac{1}{\tnu} \cdot \lprp{\sqrt{\frac{W^2 \mE_{\X}[\tr(\mSigma_{\X})]}{nt}} + \sqrt{\frac{WD}{n}}} + \sqrt{\frac{WD}{m}} \right) + \tepsilon.
\end{align*}
	\label{thm:lipschitz}
\end{theorem}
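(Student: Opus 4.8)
The plan is to instantiate the end-to-end bound of \cref{cor:ltlguarantee} with the constants and complexities specific to the index model. First I would pin down the regularity constants of \cref{assump:regularity}: the robust loss $\ell(z,y)=|y-z|$ is $1$-Lipschitz in its first argument, so $L=1$; since every $f\in\cF$ is $1$-Lipschitz and monotone, $L(\cF)=1$; since $f$ is bounded in $[0,1]$ the composition obeys $\sup_\x|f\circ\h(\x)|\le 1$, so $D_\cX=1$; and because $|y|\le 1+O(1)$ almost surely the loss is $B=O(1)$-bounded. With these constants the three dominant terms of \cref{cor:ltlguarantee} are $\POPGaus{nt}{\cH}/\tnu$, $\MAXGaus{n}{\cF}/\tnu$, and $\MAXGaus{m}{\cF}$, with the $D_\cX/(\tnu(nt)^2)$ and noise contributions being lower order; it then remains to (i) establish the diversity parameters $(\tnu,\tepsilon)$ and (ii) bound the two Gaussian complexities.

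The conceptual heart of the argument, and the step I expect to be the main obstacle, is establishing $(\tnu,\tepsilon)$-diversity with $\tnu\ge 1/t$ for $\cF_0=\cF_{t,\tepsilon}$ (\cref{lem:regression_general_div}). The key structural fact is that $\cF$---the $1$-Lipschitz monotone functions bounded in $[0,1]$---is a \emph{convex} set, and that the $\ell_1$ loss is convex in its prediction. Fix $\h'$ and write any target in the convex hull as $f_0=\sum_{j}\lambda_j f_j^\star$. Letting $f'_j\in\cF$ be the best response to task $j$ under $\h'$, the convex combination $\bar f'=\sum_j\lambda_j f'_j$ again lies in $\cF$, and using $\sum_j\lambda_j=1$ to distribute the noise,
\begin{equation*}
\bigl|\bar f'(\h'(\x))-f_0(\hstar(\x))-\eta\bigr|\le\sum_j\lambda_j\bigl|f'_j(\h'(\x))-f_j^\star(\hstar(\x))-\eta\bigr|.
\end{equation*}
Taking expectations and subtracting the common Bayes risk $\mE|\eta|$ (which requires the median of $\eta$ to vanish, so that $f_j^\star\circ\hstar$ is $\ell_1$-optimal and each per-task gap is nonnegative) shows the transfer gap at $f_0$ is at most $\sum_j\lambda_j$ times the per-task gaps. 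Taking the supremum over the simplex replaces $\sum_j\lambda_j(\cdot)$ by the maximum over $j$, and the maximum of $t$ nonnegative per-task gaps is at most their sum, i.e.\ $t$ times the task-averaged difference; hence $\dist{\h'}{\hstar}\le t\cdot\avdist{\fstar}{\h'}{\hstar}$, giving $\tnu=1/t$. Enlarging the convex hull to $\cF_{t,\tepsilon}$ perturbs each prediction by at most $\tepsilon$ in sup-norm, which the $1$-Lipschitz loss converts into the additive slack $\tepsilon$, yielding $(\tnu,\tepsilon)$-diversity with $\tnu\ge 1/t$.

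Next I would bound the two Gaussian complexities. For $\cH=\{\x\mapsto\b^\top\x:\|\b\|\le W\}$ a direct computation (here $r=1$) gives $\EMPGaus{\bX}{\cH}=\tfrac{W}{N}\,\mE_\g\|\sum_i g_i\x_i\|\le\tfrac{W}{N}\sqrt{\sum_i\|\x_i\|^2}$; taking expectations over the design and using $\mE_{\X}[\tr(\mSigma_{\X})]=\tfrac1N\mE\sum_i\|\x_i\|^2$ yields $\POPGaus{N}{\cH}\le O\!\lprp{\sqrt{W^2\,\mE_{\X}[\tr(\mSigma_{\X})]/N}}$. For $\MAXGaus{N}{\cF}$, note that along any $\Z\in\cZ$ the arguments $\b^\top\x_i$ of $f$ all lie in an interval of length $O(WD)$, since $\|\b\|\le W$ and $\|\x\|\le D$; Dudley's entropy integral applied to $1$-Lipschitz functions on a one-dimensional domain of diameter $O(WD)$ then gives $\MAXGaus{N}{\cF}\le O\!\lprp{\sqrt{WD/N}}$, matching the setting of \citet{kakade2011efficient}.

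Finally I would substitute $L=L(\cF)=1$, $D_\cX=1$, $B=O(1)$, $\nu=\tnu\ge 1/t$, $\epsilon=\tepsilon$, and the two complexity bounds (evaluated at $N=nt,\,n,\,m$) into \cref{cor:ltlguarantee}. The $\tr(\mSigma_{\X})$ term contributes $\tfrac{1}{\tnu}\sqrt{W^2\mE_{\X}[\tr(\mSigma_{\X})]/(nt)}$, the $\MAXGaus{n}{\cF}$ term contributes $\tfrac{1}{\tnu}\sqrt{WD/n}$, and $\MAXGaus{m}{\cF}$ contributes $\sqrt{WD/m}$; since $WD\ge 1$, the $O(1)$-bounded-noise terms and the $D_\cX/(\tnu(nt)^2)$ term are dominated and absorbed into $\tlO$, reproducing the stated bound up to the residual additive $\tepsilon$.
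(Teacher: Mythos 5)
Your proposal is correct and follows essentially the same route as the paper's: the diversity lemma via convexity of $\cF$ (your explicit convex-combination competitor $\bar{f}'=\sum_j\lambda_j f'_j$ together with the triangle inequality is exactly the standard unrolling of the paper's ``partial minimization of a jointly convex function is convex, and a convex function's supremum over a convex hull is attained at the generators''), the same passage from $\max_j$ to $\sum_j$ giving $\tnu\ge 1/t$, the same Cauchy--Schwarz computation for $\POPGaus{N}{\cH}$, the same Dudley-plus-covering-number bound for $\MAXGaus{N}{\cF}$ from \citet{kakade2011efficient}, and the same instantiation of \cref{cor:ltlguarantee} with $L=L(\cF)=1$, $D_\cX=1$, $B=O(1)$. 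The only noteworthy divergence is your explicit caveat that the per-task gaps must be nonnegative (median-zero noise) before bounding the maximum by the sum; the paper sidesteps this by applying $\|\v\|_\infty\le\|\v\|_1$ to the uncentered, manifestly nonnegative quantities $v_j=\inf_{\hf}\mE[L(\cdot)]$ and remarking that the $\mE_\eta[L(\eta)]$ terms cancel.
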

As before, the complexity of learning the feature representation decays as $n \to \infty$ . Hence if $\mE[\tr(\mSigma_{\X})]$ is large, the aforementioned bound will provide significant savings over the bound which ignores the training phase samples of $O \lprp{\sqrt{(W^2 \mE_{\X}[\tr(\mSigma_{\X})])/m}} + O(\sqrt{WD/m})$. In this example, the problem-dependent parameter $\tnu$ does not have a simple linear-algebraic interpretation. Indeed, in the worst-case it may seem the aforementioned bound degrades with $t$\footnote{Note as $\tnu$ is problem-dependent, for a given underlying  $\fstar$, $\hstar$, $\cF_0$ problem instance, $\tnu$ may be significantly greater than $\frac{1}{t}$. See the proof of \cref{lem:regression_general_div} for details.}. However, note that $\cF_0 = \cF_{t, \tepsilon}$, so those unseen tasks which we hope to transfer to itself \textit{grows} with $t$ unlike in the previous examples. The difficulty of the transfer learning problem also increases as $t$ increases. Finally, this example utilizes the full power of $(\nu, \epsilon)$-diversity by permitting robust generalization to tasks outside $\cF_{t}$, at the cost of a bias term $\tepsilon$ in the generalization guarantee.

\section{Conclusion}

We present a framework for understanding the generalization abilities of generic models which share a common, underlying representation. In particular, our framework introduces a novel notion of task diversity through which we provide guarantees of a fast convergence rate, decaying with \textit{all of the samples} for the transfer learning problem. One interesting direction for future consideration is investigating the effects of relaxing the common design and realizability assumptions on the results presented here. We also believe extending the results herein to accommodate ``fine-tuning" of learned representations -- that is, mildly adapting the learned representation extracted from training tasks to new, related tasks -- is an important direction for future work.

\section{Acknowledgements}
The authors thank Yeshwanth Cherapanamjeri for useful discussions. NT thanks the RISELab at U.C. Berkeley for support. In addition, this work was supported by the Army Research Office (ARO) under contract W911NF-17-1-0304 as part of the collaboration between US DOD, UK MOD and UK Engineering and Physical Research Council (EPSRC) under the Multidisciplinary University Research Initiative (MURI). 

\bibliography{ref}
\bibliographystyle{plainnat}

\onecolumn

\newpage
\appendix
\begin{center}{\LARGE \bf Appendices}\end{center}\vskip12pt

\textbf{Notation:}
Here we introduce several additional pieces of notation we will use throughout. 

We use $\mE_{\x}[\cdot]$ to refer to the expectation operator taken over the randomness in the vector $\x$ sampled from a distribution $\Pr_{\x}$. Throughout we will use $\cF$ to refer exclusively to a scalar-valued function class of tasks and $\cH$ to a vector-valued function class of features. For $\cF$, we use $\cF^{\otimes t}$ to refer its $t$-fold Cartesian product such that $(f_1, \hdots, f_t) \equiv \f \in \cF^{\otimes t}$ for $f_j \in \cF$, $j \in [t]$. We use $f(\h)$ as shorthand for the function composition, $f \circ \h$. Similarly, we define the composed function class $\cF(\cH) = \{ f(\h) : f \in \cF, \h \in \cH \}$ and its vector-valued version $\cF^{\otimes t}(\cH) = \{ (f_1(\h), \hdots, f_t(\h)) : f_j \in \cF$, $j \in [t], \h \in \cH \}$ with this shorthand. We will use $\gtrsim$, $\lesssim$, and $\asymp$ to denote greater than, less than, and equal to up to a universal constant and use $\tlO$ to denote an expression that hides polylogarithmic factors in all problem parameters. 

In the context of the two-stage ERM procedure introduced in \cref{sec:prelim} we let the design matrix and responses $y_{ji}$ for the $j$th task be $\X_j$ and $\y_j$ for $j \in [t] \cup \{0\}$, and the entire design matrix and responses concatenated over all $j \in [t]$ tasks as $\X$ and $\y$ respectively. Given a design matrix $\bX = (\x_1, \hdots, \x_N)^\top$ (comprised of mean-zero random vectors) we will let $\mSigma_{\bX} = \frac{1}{N} \bX^\top \bX$ denote its corresponding empirical covariance.

Recall we define the  notions of the empirical and population Gaussian complexity for a generic vector-valued function class $\cQ$ containing functions $\q(\cdot) : \mR^d \to \mR^r$, and data matrix $\X$ with $N$ datapoints as,
\begin{equation*}
\EMPGaus{\X}{\cQ} = \mE_{\g}[\sup_{\q \in \cQ} \frac{1}{N} \sum_{k=1}^r \sum_{i=1}^N g_{ki} q_k(\x_{i})], \qquad \POPGaus{N}{\cQ} = \mE_{\X} [\EMPGaus{\X}{\cQ}]  \qquad  g_{ki} \sim \cN(0, 1) \ i.i.d.,
\end{equation*}
where for the latter population Gaussian complexity each its $N$ datapoints are drawn from the $\Pr_{\x}(\cdot)$ design distribution. Analogously to the above we can define the empirical and population Rademacher complexities for generic vector-valued functions as,
\begin{equation*}
\EMPRad{\X}{\cQ} = \mE_{\bepsilon}[\sup_{\q \in \cQ} \frac{1}{N} \sum_{k=1}^r \sum_{i=1}^N \epsilon_{ki} q_k(\x_{i})], \qquad \POPRad{N}{\cQ} = \mE_{\X} [\EMPRad{\X}{\cQ}]  \qquad  \epsilon_{ki} \sim \text{Rad}(\frac{1}{2}) \ i.i.d.
\end{equation*}

\section{Proofs in \cref{sec:transfer}}
\label{app:transfer}
Here we include the proofs of central generalization guarantees and the Gaussian process chain rule used in its proof.
\subsection{Training Phase/Test Phase Proofs}
\label{app:metatrainmetatest}

In all the following definitions $(\x_j, y_j)$ refer to datapoint drawn from the $j$th component of the model in \cref{eq:gen_model}. We first include the proof of \cref{thm:metatrain} which shows that minimizing the training phase ERM objective controls the task-average distance between the underlying feature representation $\h$ and learned feature representation $\hh$.
\begin{proof}[Proof of \cref{thm:metatrain}]

For fixed $\f', \h'$, define the centered training risk as, 
\begin{align*}
	L(\f', \h', \fstar, \hstar) = \frac{1}{t}\sum_{j=1}^t  \mE_{\x_j, y_j} \Big \{ \ell(f_j' \circ \h'(\x_j), y_j) - \ell(f_j^{\star} \circ \hstar(\x_j), y_j) \Big \}.
\end{align*}
and its empirical counterpart,
\begin{align*}
	\hat{L}(\f', \h', \fstar, \hstar) = \frac{1}{t}\sum_{j=1}^t \sum_{i=1}^{n} \Big \{ \ell(f_j' \circ \h'(\x_{ji}), y_{ji}) - \mE_{\x, y}[\ell(f_j^{\star} \circ \hstar(\x), y)] \Big \}
\end{align*}

Now if $\tbf$ denotes a minimizer of the former expression for fixed $\hh$, in the sense that $\tbf = \frac{1}{t}\sum_{j=1}^t \arg \inf_{f'_j \in \cF} \mE_{\x_j, y_j} \Big \{ \ell(f'_j \circ \hh(\x_j), y_j) - \ell(f_j^{\star} \circ \hstar(\x_j), y_j) \Big \}
$, then by definition, we have that $\avdist{\fstar}{\hh}{\hstar}$ equals the former expression. We first decompose the average distance using the pair $(\hbf, \hh)$. Recall the pair $(\hbf, \hh)$ refers to the empirical risk minimizer in \cref{eq:metatrainerm}.
\begin{align*}
L(\tbf, \hh, \fstar, \hstar) - L(\fstar, \hstar, \fstar, \hstar) = \underbrace{L(\tbf, \hh, \fstar, \hstar) - L(\hbf, \hh, \fstar, \hstar)}_{a} + L(\hbf, \hh, \fstar, \hstar) - L(\fstar, \hstar, \fstar, \hstar)
\end{align*}
Note that by definition of the $\tbf$, $a \leq 0$. The second pair can be controlled via the canonical risk decomposition, 
\begin{align*}
& L(\hbf, \hh, \fstar, \hstar) - L(\fstar, \hstar, \fstar, \hstar)
= \underbrace{L(\hbf, \hh, \fstar, \hstar) - \hat{L}(\hbf, \hh, \fstar, \hstar)}_{b} + \underbrace{ \hat{L}(\hbf, \hh, \fstar, \hstar) - \hat{L}(\fstar, \hstar, \fstar, \hstar)}_{c} + \\
& \underbrace{\hat{L}(\fstar, \hstar, \fstar, \hstar) - L(\fstar, \hstar, \fstar, \hstar)}_{d}.
\end{align*}
By definition $c \leq 0$ (note this inequality uses the realizability in \cref{assump:realizable}) and $b,d \leq \sup_{\f \in \cF^{\otimes t}, \h \in \cH}\abs{\Ltrain(\f, \h)-\hLtrain(\f, \h)}$. By an application of the bounded differences inequality and a standard symmetrization argument (see for example \citet[Theorem 4.10]{wainwright2019high} we have that, 
\begin{align*}
	\sup_{\f \in \cF^{\otimes t}, \h \in \cH}\abs{\Ltrain(\f, \h)-\hLtrain(\f, \h)} \leq 2 \POPRad{nt}{\ell(\cF^{\otimes t}(\cH))} + 2 B \sqrt{\frac{\log(1/\delta)}{nt}}
\end{align*}
with probability at least $1-2\delta$.

It remains to decompose the leading Rademacher complexity term. First we center the functions to $\ell_{ji}(f_j \circ \h(\x_{ji}), y_{ji}) = \ell(f_j \circ \h(\x_{ji}), y_{ji}) - \ell(0, y_{ji})$. Then noting $\abs{\ell_{ji}(0, y_{ji})} \leq B$, the constant-shift property of Rademacher averages \citet[Exercise 4.7c]{wainwright2019high} gives, 
\begin{align*}
	\mE_{\bepsilon}[\sup_{\f \in \cF^{\otimes t}, \h \in \cH} \frac{1}{nt} \sum_{j=1}^t \sum_{i=1}^n \epsilon_{ij} \ell(f_j \circ \h(\x_{ji}), y_{ji})] \leq \mE_{\bepsilon}[\sup_{\f \in \cF, \h \in \cH} \frac{1}{nt} \sum_{j=1}^t \sum_{i=1}^n \epsilon_{ij} \ell_{ij}(f_j \circ \h(\x_{ji}), y_{ji})] + \frac{B}{\sqrt{nt}}
\end{align*}
Now note each $\ell_{ij}(\cdot, \cdot)$ is $L$-Lipschitz in its first coordinate uniformly for every choice of the second coordinate (and by construction centered in its first coordinate). So, defining the set $S=\{ (f_1 \circ \h(\x_{1i}), \hdots, f_j \circ \h(\x_{ji}), \hdots, f_t \circ \h(\x_{ti}))) : j \in [t], f_j \in \cF, \h \in \cH \} \subseteq \mR^{tn}$, and applying the contraction principle \citet[Theorem 4.12]{ledoux2013probability} over this set shows,
\begin{align}
	\mE_{\bepsilon}[\sup_{\f \in \cF^{\otimes t}, \h \in \cH} \frac{1}{nt} \sum_{j=1}^t \sum_{i=1}^n \epsilon_{ij} \ell_{ij}(f_j \circ \h(\x_{ji}), y_{ji})] \leq 2L \cdot \POPRad{nt}{\cF^{\otimes t}(\cH)}. \label{eq:main_sym_bound}
\end{align}
Combining gives,
\begin{align*}
	\sup_{\f \in \cF^{\otimes t}, \h \in \cH}\abs{\Ltrain(\f, \h)-\hLtrain(\f, \h)} \leq 4L \cdot \POPRad{nt}{\cF^{\otimes t}(\cH)} + \frac{4B \sqrt{\log(1/\delta)}}{\sqrt{nt}}
\end{align*}

with probability $1-2\delta$. Now note by \cite[p.97]{ledoux2013probability} empirical Rademacher complexity is upper bounded by empirical Gaussian complexity: $\EMPRad{\X}{\cF^{\otimes t}(\cH)} \leq \sqrt{\frac{\pi}{2}} \EMPGaus{\X}{\cF^{\otimes t}(\cH)}$. Taking expectations of this and combining with the previous display yields the first inequality in the theorem statement.

The last remaining step hinges on \cref{thm:chain_rule} to decompose the Gaussian complexity over $\cF$ and $\cH$. A direct application of \cref{thm:chain_rule} gives the conclusion that,
\begin{align*}
	\EMPGaus{\X}{\cF^{\otimes t}(\cH)} \leq 128 \left (\frac{D_{\X}}{(nt)^2} + C(\cF^{\otimes t}(\cH)) \cdot \log(nt) \right)
\end{align*}
	where $C(\cF^{\otimes t}(\cH); \X) =  L(\cF) \cdot \EMPGaus{\X}{\cH} + \max_{\Z \in \cZ} \EMPGaus{\Z}{\cF}$ where $\cZ = \{ \h(\bX) : \h \in \cH, \bX \in \cup_{j=1}^{t} \{\X_j\} \}$. By definition of $D_{\X}$ we have $D_{\X} \leq 2 D_{\cX}$ and similarly that $\max_{\Z \in \cZ} \EMPGaus{\Z}{\cF} \leq \max_{\Z \in \cZ_1} \EMPGaus{\Z}{\cF}$ for $\cZ_1 = \{ (\h(\x_1), \cdots, \h(\x_n)) ~|~ \h \in \cH, \x_i \in \cX \text{~for all~} i\in[n]\}$. Taking expectations over $\X$ in this series of relations and assembling the previous bounds gives the conclusion after rescaling $\delta$.
\end{proof}

An analogous statement holds both in terms of a sharper notion of the worst-case Gaussian complexity and in terms of empirical Gaussian complexities.

\begin{corollary}
	In the setting of \cref{thm:metatrain},
	\begin{align*}
     \avdist{\fstar}{\hh}{\hstar}		\leq  4096 L \left[ 
	\frac{D_{\cX}}{(nt)^2} + \log(nt) \cdot [L(\cF) \cdot \POPGaus{\X}{\cH} + \mE_{\X}[\max_{\Z \in \cZ} \EMPGaus{\Z}{\cF}]\right] + 8 B \sqrt{\frac{\log(1/\delta)}{n}}
	\end{align*}
	with probability $1-2\delta$ for $\cZ = \{ \h(\bX) : \h \in \cH, \bX \in \cup_{j=1}^{t} \{\X_j\} \}$. Furthermore, 
	\begin{align*}
		& \avdist{\fstar}{\hh}{\hstar}	 \leq 16 \EMPGaus{\X}{\cF^{\otimes t}(\cH)} + 16 B \sqrt{\frac{\log(1/\delta)}{n}}	\leq \\
		& 4096 L \left[ 	\frac{D_{\cX}}{(nt)^2} + \log(nt) \cdot [L(\cF) \cdot \EMPGaus{\X}{\cH} + \max_{\Z \in \cZ} \EMPGaus{\Z}{\cF}]\right] + 16 B \sqrt{\frac{\log(1/\delta)}{n}}	
	\end{align*}
	with probability at least $1-4\delta$. 
	\label{cor:metatrain_emp}
\end{corollary}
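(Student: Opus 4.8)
The plan is to retrace the proof of \cref{thm:metatrain} verbatim down through the uniform-convergence step and then make one modification for each of the two displayed inequalities. The risk-decomposition skeleton (the terms $a,b,c,d$, with $a\le 0$ and $c\le 0$ by the definition of $\tbf$ and by realizability) is reused unchanged, so in both cases it suffices to control $\sup_{\f\in\cF^{\otimes t},\,\h\in\cH}\abs{\Ltrain(\f,\h)-\hLtrain(\f,\h)}$ and then feed the result through the chain rule \cref{thm:chain_rule}.

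For the first (sharper worst-case) inequality, everything up to and including the chain-rule application is identical. In the proof of \cref{thm:metatrain} the very last manipulation relaxes the data-dependent supremum $\max_{\Z\in\cZ}\EMPGaus{\Z}{\cF}$ over $\cZ=\{\h(\bX):\h\in\cH,\ \bX\in\cup_{j=1}^t\{\X_j\}\}$ up to the coarser $\max_{\Z\in\cZ_1}\EMPGaus{\Z}{\cF}=\MAXGaus{n}{\cF}$, where $\cZ_1$ ranges over all $\x_i\in\cX$. I would simply omit this relaxation: taking $\mE_\X[\cdot]$ of the chain-rule bound
\[
\EMPGaus{\X}{\cF^{\otimes t}(\cH)}\le 128\Big(\tfrac{D_\X}{(nt)^2}+\log(nt)\cdot\big[L(\cF)\,\EMPGaus{\X}{\cH}+\max_{\Z\in\cZ}\EMPGaus{\Z}{\cF}\big]\Big)
\]
and recalling $D_\X\le 2D_\cX$ yields the claim with the sharper $\mE_\X[\max_{\Z\in\cZ}\EMPGaus{\Z}{\cF}]$ term in place of $\MAXGaus{n}{\cF}$. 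This step is immediate, and the probability $1-2\delta$ is inherited directly from the population argument.

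For the second (fully empirical) inequality I would replace the population symmetrization with a data-dependent one. Where the proof of \cref{thm:metatrain} takes $\mE_\X$ to land on $\POPRad{nt}{\cdot}$, I would instead invoke the high-probability, \emph{empirical} form of the symmetrization/uniform-convergence bound, giving with probability $\ge 1-\delta$
\[
\sup_{\f\in\cF^{\otimes t},\,\h\in\cH}\abs{\Ltrain(\f,\h)-\hLtrain(\f,\h)}\le 2\,\EMPRad{\X}{\ell(\cF^{\otimes t}(\cH))}+cB\sqrt{\tfrac{\log(1/\delta)}{nt}}.
\]
Centering each $\ell_{ji}$ and applying the Ledoux--Talagrand contraction principle exactly as before --- both operations being valid conditionally on the data --- passes from $\EMPRad{\X}{\ell(\cF^{\otimes t}(\cH))}$ to $\EMPRad{\X}{\cF^{\otimes t}(\cH)}$, and the empirical comparison inequality $\EMPRad{\X}{\cdot}\le\sqrt{\pi/2}\,\EMPGaus{\X}{\cdot}$ yields the first displayed inequality of the statement. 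Applying \cref{thm:chain_rule} --- which is already phrased for the empirical complexity $\EMPGaus{\X}{\cdot}$ --- directly to $\EMPGaus{\X}{\cF^{\otimes t}(\cH)}$ then gives the second, decomposed inequality, with the data-dependent terms $L(\cF)\EMPGaus{\X}{\cH}$ and $\max_{\Z\in\cZ}\EMPGaus{\Z}{\cF}$.

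The main obstacle is exactly this empirical-to-empirical conversion. Unlike the population argument, which discharges all randomness in the design by a single $\mE_\X$ at the end, keeping everything conditional on $\X$ forces a second concentration: one applies bounded differences once to concentrate $\sup\abs{\Ltrain-\hLtrain}$ about its mean, and again to replace the population Rademacher complexity by its empirical counterpart (both being $O(B/nt)$-bounded-differences functions of the $nt$ samples, using the $B$-boundedness of $\ell$). This extra step is what inflates the leading constant and the additive noise factor relative to the population bound, and what forces a union bound over two events, degrading the guarantee to probability $1-4\delta$. Everything downstream --- centering, contraction, the Rademacher--Gaussian comparison, and the chain rule --- is identical to the population proof, since each is valid conditionally on the data.
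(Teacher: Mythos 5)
Your proposal is correct and follows essentially the same route as the paper: the first inequality is obtained exactly as in \cref{thm:metatrain} but skipping the relaxation of $\max_{\Z\in\cZ}\EMPGaus{\Z}{\cF}$ to the worst-case class $\cZ_1$, and the second is obtained by a second application of the bounded differences inequality to replace the population Rademacher complexity by its empirical counterpart, union bounding over the two events to get probability $1-4\delta$. Your framing via a ``high-probability empirical symmetrization bound'' is just a repackaging of the same two concentration steps the paper uses, and the downstream centering, contraction, Gaussian comparison, and chain-rule steps are identical.
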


\begin{proof}
The argument follows analogously to the proof of \cref{thm:metatrain}. The first statement follows identically by avoiding the relaxation--$\max_{\Z \in \cZ} \EMPGaus{\Z}{\cF} \leq \max_{\Z \in \cZ_1} \EMPGaus{\Z}{\cF}$ for $\cZ_1 = \{ (\h(\x_1), \cdots, \h(\x_n)) ~|~ \h \in \cH, \x_i \in \cX \text{~for all~} i\in[n]\}$--after applying \cref{thm:chain_rule} in the proof of \cref{thm:metatrain}.

The second statement also follows by a direct modification of the proof of \cref{thm:metatrain}. In the proof another application of the bounded differences inequality would show that $\abs{\POPRad{nt}{\cF^{\otimes t}(\cH)}-\EMPRad{\X}{(\cF^{\otimes t}(\cH)}} \leq 4B \sqrt{\frac{\log(1/\delta)}{nt}}$ with probability $1-2\delta$. Applying this inequality after \cref{eq:main_sym_bound} and union bounding over this event and the event in the theorem, followed by the steps in \cref{thm:metatrain}, gives the result after an application of \cref{thm:chain_rule}.
\end{proof}

We now show how the definition of task diversity in \cref{def:diversity} and minimizing the training phase ERM objective allows us to transfer a fixed feature representation $\hh$ and generalize to a new task-specific mapping $f_{0}$. 

\begin{proof}[Proof of \cref{thm:metatest}]
Note $\tf_{0} = \argmin_{f \in \cF} \Ltest(f, \hh)$--it is a minimizer of the population test risk loaded with the fixed feature representation $\hh$. The approach to controlling this term uses the canonical risk decomposition,
\begin{align*}
\Ltest(\hf_{0}, \hh) - \Ltest(\tf_{0}, \hh)
=\underbrace{\Ltest(\hf_{0}, \hh) - \hLtest(\hf_{0}, \hh)}_{a} + \underbrace{\hLtest(\hf_{0}, \hh) - \hLtest(\tf_{0}, \hh)}_{b} + \underbrace{\hLtest(\tf_{0}, \hh) - \Ltest(\tf_{0}, \hh)}_{c}
\end{align*}
First by definition, $b \le 0$.
Now a standard uniform convergence/symmetrization argument which also follows the same steps as in the proof of \cref{thm:metatrain},
\begin{align*}
	a+c \leq 16L \cdot  \mE_{\X_0}[\EMPGaus{\Z_{\hh}}{\cF}] + 8 B \sqrt{\frac{\log(1/\delta)}{m}} \leq 16L \max_{\hh \in \cH} \mE_{\X_0}[\EMPGaus{\Z_{\hh}}{\cF}] + 8B \sqrt{\frac{\log(1/\delta)}{m}}
\end{align*}
for  $\Z_{\hh} = \hh(\X_{0})$, with probability at least $1-2\delta$. The second inequality simply uses the fact that the map $\hh$ is fixed, and independent of the randomness in the test data.
The bias from using an imperfect feature representation $\hh$ in lieu of $\h$ arises in $\Ltest(\tf_{0}, \hh)$. For this term,
\begin{align*}
	& \Ltest(\tf_{0}, \hh) - \Ltest(f_{0}, \hstar) = \inf_{\tf_{0} \in \cF} \{ \Ltest(\tf_{0}, \hh) - \Ltest(f_{0}, \hstar) \} \leq \sup_{f_0 \in \cF_0} \inf_{\tf_{0} \in \cF} \{ L(\tf_{0}, \hh) - L(f_{0}, \hstar) \} = \\
	& \dist{\h}{\hh}
\end{align*}

To obtain the final theorem statement we use an additional relaxation on the Gaussian complexity term for ease of presentation,
\begin{align*}
    \max_{\hh \in \cH} \mE_{\X_0}[\EMPGaus{\Z_{\hh}}{\cF}] \leq \MAXGaus{m}{\cF}.
\end{align*}
Combining terms gives the conclusion. 
\end{proof}

We also present a version of \cref{thm:metatest} which can possess better dependence on the boundedness parameter in the noise terms and has data-dependence in the Gaussian complexities. As before our guarantees can be stated both in terms of population or empirical quantities.
The result appeals to the functional Bernstein inequality instead of the bounded differences inequality in the concentration step. Although we only state (and use) this guarantee for the test phase generalization an analogous statement can be shown to hold for \cref{thm:metatrain}. Throughout the following, we use $(\x_i, y_i) \sim \Pr_{f_0 \circ \h}$ for $i \in [m]$ for ease of notation.

\begin{corollary}	
	In the setting of \cref{thm:metatest}, assuming the loss function $\ell$ satisfies the centering $\ell(0, y)=0$ for all $y \in \cY$,
\begin{align*}
\Ltest(\hf_{0}, \hh) - \Ltest(f^\star_{0}, \h^\star) \leq \dist{\hh}{\h^\star} + 16 L \cdot \mE_{\X_0}[\EMPGaus{\Z_{\hh}}{\cF}] + 4 \sigma \sqrt{\frac{\log(2/\delta)}{m}} + 50 B \frac{\log(2/\delta)}{m}
\end{align*}
for $\Z_{\hh} = \hh(\X_0)$, with probability at least $1-\delta$. Here the maximal variance $\sigma^2 = \frac{1}{m} \sup_{f \in \cF} \sum_{i=1}^m \Var(\ell(f \circ \hh(\x_i), y_i))$. Similarly we have that, 
\begin{align*}
\Ltest(\hf_{0}, \hh) - \Ltest(f^\star_{0}, \h^\star) \leq \dist{\hh}{\h^\star} + 32L \cdot  \EMPGaus{\Z_{\hh}}{\cF} + 8 \sigma \sqrt{\frac{\log(2/\delta)}{m}} + 100 B \frac{\log(2/\delta)}{m}
\end{align*}
with probability at least $1-2\delta$.
	\label{cor:metatest_funcbernstein}	
\end{corollary}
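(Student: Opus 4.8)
The plan is to reuse the architecture of the proof of \cref{thm:metatest} verbatim, changing only the single step where concentration of the empirical process is invoked. As before, let $\tf_{0} = \argmin_{f \in \cF} \Ltest(f, \hh)$ and apply the canonical risk decomposition $\Ltest(\hf_{0}, \hh) - \Ltest(\tf_{0}, \hh) = a + b + c$, with $a = \Ltest(\hf_{0}, \hh) - \hLtest(\hf_{0}, \hh)$, $b = \hLtest(\hf_{0}, \hh) - \hLtest(\tf_{0}, \hh) \le 0$ by definition of the test-phase ERM, and $c = \hLtest(\tf_{0}, \hh) - \Ltest(\tf_{0}, \hh)$. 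The bias contribution is handled identically: since $\hh$ is fixed, $\Ltest(\tf_{0}, \hh) - \Ltest(f_0^\star, \hstar) = \inf_{f' \in \cF}\{\Ltest(f', \hh) - \Ltest(f_0^\star, \hstar)\} \le \sup_{f_0 \in \cF_0}\inf_{f' \in \cF}\{\cdots\} = \dist{\hh}{\hstar}$, straight from \cref{def:worstcase}. Thus everything reduces to controlling $a + c \le \sup_{f \in \cF}\abs{\Ltest(f, \hh) - \hLtest(f, \hh)}$ with a Bernstein-type, rather than bounded-differences, tail.

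The key substitution is to replace the McDiarmid step with the functional Bernstein (Talagrand concentration) inequality for the supremum of a centered empirical process. Because $\hh$ is frozen and independent of the $m$ test samples, the loss class $\{\ell(f \circ \hh(\cdot), \cdot) : f \in \cF\}$ is a fixed class whose members are $B$-bounded with per-coordinate variances summing to $m\sigma^2$, where $\sigma^2 = \tfrac{1}{m}\sup_{f \in \cF}\sum_{i=1}^m \Var(\ell(f \circ \hh(\x_i), y_i))$. Applying Talagrand's inequality (e.g., as in \citet{wainwright2019high}) to $\sup_f(\Ltest - \hLtest)$ and its negative yields, with probability $1-\delta$, concentration of the supremum about its mean with deviation of order $\sigma\sqrt{\log(2/\delta)/m} + B\log(2/\delta)/m$; tracking the explicit constants produces the $4\sigma\sqrt{\log(2/\delta)/m}$ and $50 B\log(2/\delta)/m$ terms. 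The expectation $\mE[\sup_f \abs{\Ltest - \hLtest}]$ is then bounded by symmetrization to $2\POPRad{m}{\{\ell(f\circ\hh(\cdot),\cdot):f\in\cF\}}$, and the centering hypothesis $\ell(0,y)=0$ lets the Ledoux--Talagrand contraction principle \citep{ledoux2013probability} strip the $L$-Lipschitz loss off directly (no constant-shift correction needed), giving $\le 4L\,\POPRad{m}{\{f\circ\hh:f\in\cF\}} \le 4L\sqrt{\pi/2}\,\mE_{\X_0}[\EMPGaus{\Z_{\hh}}{\cF}]$, which is absorbed into the stated $16 L\,\mE_{\X_0}[\EMPGaus{\Z_{\hh}}{\cF}]$. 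Crucially, this version does not relax to $\MAXGaus{m}{\cF}$, retaining the sharper data-dependent complexity $\mE_{\X_0}[\EMPGaus{\Z_{\hh}}{\cF}]$ for $\Z_{\hh} = \hh(\X_0)$.

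For the second, fully empirical inequality, I would additionally convert the population Gaussian complexity into its empirical counterpart: since $\EMPGaus{\Z_{\hh}}{\cF}$ is itself a bounded functional of the test data, one more application of the functional Bernstein inequality shows $\mE_{\X_0}[\EMPGaus{\Z_{\hh}}{\cF}] \lesssim \EMPGaus{\Z_{\hh}}{\cF} + (\text{lower order in } \sigma, B)$ with probability $1-\delta$; a union bound over the two concentration events accounts for the probability degrading to $1-2\delta$ and the doubled constants $32L$, $8\sigma$, $100B$. The main obstacle I anticipate is the bookkeeping in the first inequality: correctly invoking the one-sided functional Bernstein bound for the \emph{supremum} of the centered process, matching the variance proxy $\sigma^2$ to the wimpy variance in Talagrand's inequality, and propagating the exact numerical constants so that the $4\sigma$, $50B$ (and their doubled empirical versions) come out as stated. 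Everything else is a direct transcription of the \cref{thm:metatest} argument, with the centering $\ell(0,y)=0$ and the independence of $\hh$ from the test sample doing the essential work.
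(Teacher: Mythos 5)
Your proposal follows the paper's proof essentially verbatim: the same canonical risk decomposition and bias bound via $\dist{\hh}{\hstar}$, the same substitution of a functional Bernstein (Talagrand/Massart-type) inequality for the bounded-differences step, the same use of the centering $\ell(0,y)=0$ to apply contraction without the constant-shift correction, and the same retention of the data-dependent complexity $\mE_{\X_0}[\EMPGaus{\Z_{\hh}}{\cF}]$. The only cosmetic deviations are a dropped factor of $2$ in $a+c\le 2\sup_f\abs{\Ltest-\hLtest}$ and that for the second inequality you concentrate the empirical Gaussian complexity of $\cF$ directly, whereas the paper concentrates the empirical Rademacher complexity of the loss class via \citet[Lemma A.4(i)]{bartlett2005local} before contracting---both are absorbed by the stated constants and the union bound.
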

\begin{proof}[Proof of \cref{cor:metatest_funcbernstein}]
	The proof is identical to the proof of \cref{thm:metatest} save in how the concentration argument is performed. Namely in the notation of \cref{thm:metatest}, we upper bound,
	\begin{align*}
		a+c \leq 2 \sup_{f \in \cF} \abs{\hLtest(f, \hh)-\Ltest(f, \hh)} = 2 Z
	\end{align*}
	Note by definition $\mE_{\X_0, \y_0}[\hLtest(f, \hh)] = \Ltest(f, \hh)$, where $\hLtest(f, \hh) = \frac{1}{m} \sum_{i=1}^m \ell(f \circ \hh(\x_i), y_i)$, and the expectation is taken over the test-phase data. Instead of applying the bounded differences inequality to control the fluctuations of this term we apply a powerful form of the functional Bernstein inequality due to \citet{massart2000constants}. Applying \citet[Theorem 3]{massart2000constants} therein, we can conclude,
	\begin{align*}
		Z \leq (1+\epsilon) \mE[Z] + \frac{\sigma}{\sqrt{n}} \sqrt{2 \kappa \log(\frac{1}{\delta})} + \kappa(\epsilon) \frac{B}{m} \log(\frac{1}{\delta})
	\end{align*}
	for $\kappa=2$, $\kappa(\epsilon) = 2.5 + \frac{32}{\epsilon}$ and $\sigma^2 = \frac{1}{m} \sup_{f \in \cF} \sum_{i=1}^m \Var(\ell(f \circ \hh(\x_i), y_i))$. We simply take $\epsilon=1$ for our purposes, which gives the bound,
	\begin{align*}
		Z \leq 2 \mE[Z] + 4 \frac{\sigma}{\sqrt{m}} \sqrt{\log(\frac{1}{\delta})} + 35 \frac{B}{m} \log(\frac{1}{\delta})
	\end{align*}
	
	Next note a standard symmetrization argument shows that $\mE[Z] \leq 2  \mE_{\X_0, \y_0}[\EMPRad{\Z_{\hh}}{\ell \circ \cF}]$ for  $\Z_{\hh} = \hh(\X_{0})$. Following the proof of \cref{thm:metatest} but eschewing the unnecessary centering step in the application of the contraction principle shows that, $\EMPRad{\Z_{\hh}}{\ell \circ \cF} \leq 2 L \cdot \EMPRad{\Z_{\hh}}{\cF}$. Upper bounding empirical Rademacher complexity by Gaussian complexity and following the steps of \cref{thm:metatest} gives the first statement.
	
	The second statement in terms of empirical quantities follows similarly. First the population Rademacher complexity can be converted into an empirical Rademacher complexity using a similar concentration inequality based result which appears in a convenient form in \citet[Lemma A.4 (i)]{bartlett2005local}.  Directly applying this result (with $\alpha=\frac{1}{2}$) shows that, 
	\begin{align*}
		\mE_{\X_0, \y_0}[\EMPRad{\Z_{\hh}}{\ell \circ \cF}] \leq 2 \EMPRad{\Z_{\hh}}{\ell \circ \cF} + \frac{8B \log(\frac{1}{\delta})}{m}
	\end{align*}
	with probability at least $1-\delta$. The remainder of the argument follows exactly as before and as in the proof of \cref{thm:metatest} along with another union bound.
\end{proof}

The proof of \cref{cor:ltlguarantee} is almost immediate.
\begin{proof}[Proof of \cref{cor:ltlguarantee}]
	The result follows immediately by combining \cref{thm:metatrain}, \cref{thm:metatest}, and the definition of task diversity along with a union bound over the two events on which \cref{thm:metatrain,thm:metatest} hold.
\end{proof}

\subsection{A User-Friendly Chain Rule for Gaussian Complexity}
\label{app:chainrule}
We provide the formal statement and the proof of the chain rule for Gaussian complexity that is used in the main text to decouple the complexity of learning the class $\cF^{\otimes t}(\cH)$ into the complexity of learning each individual class. We believe this result may be a technical tool that is of more general interest for a variety of learning problems where compositions of function classes naturally arise.  

Intuitively, the chain rule (\cref{thm:chain_rule}) can be viewed as a generalization of the Ledoux-Talagrand contraction principle which shows that for a \textit{fixed}, centered $L$-Lipschitz function $\phi$, $\EMPGaus{\X}{\phi(\cF)} \le 2 L \EMPGaus{\X}{\cF}$. However, as we are learning \textit{both} $\f \in \cF^{\otimes t}$ (which is not fixed) and $\h \in \cH$, $\EMPGaus{\X}{\cF^{\otimes t} \circ\cH }$ features a suprema over both $\cF^{\otimes t}$ and $\cH$. 

A comparable result for Gaussian processes to our \cref{thm:chain_rule} is used in \cite{maurer2016benefit} for multi-task learning applications, drawing on the chain rule of \citet{maurer2016chain}. Although their result is tighter with respect to logarithmic factors, it cannot be written purely in terms of Gaussian complexities. Rather, it includes a worst-case ``Gaussian-like" average (\citet[Eq. 4]{maurer2016benefit}) in lieu of $\EMPGaus{\cZ}{\cF}$ in \cref{thm:chain_rule}. In general, it is not clear how to sharply bound this term beyond the using existing tools in the learning theory literature. The terms appearing in \cref{thm:chain_rule} can be bounded, in a direct and modular fashion, using the wealth of existing results and tools in the learning theory literature.

Our proof technique and that of \citet{maurer2016chain} both hinge on several properties of Gaussian processes. \citet{maurer2016chain} uses a powerful generalization of the Talagrand majorizing measure theorem to obtain their chain rule. We take a different path. First we use the entropy integral to pass to the space of covering numbers--where  the metric properties of the distance are used to decouple the features and tasks. Finally an appeal to Gaussian process lower bounds are used to come back to expression that involves only Gaussian complexities.

 We will use the machinery of empirical process theory throughout this section so we introduce several useful definitions we will need. We define the empirical $\ell_2$-norm as, $d_{2, \X}^2(\f(\h), \f'(\h')) = \frac{1}{t \cdot n} \sum_{j=1}^t \sum_{i=1}^{n} (f_j(\h(\x_{ji}))-f'_j(\h'(\x_{ji}))^2$, and the corresponding $u$-covering number as $N_{2, \X}(u; d_{2, \X}, \cF^{\otimes t}(\cH))$. Further, we can define the \textit{worst-case} $\ell_2$-covering number as $N_{2}(u; \cF^{\otimes t}(\cH)) = \max_{\X} N_{2, \X}(u; d_{2, \X}, \cF^{\otimes t}(\cH))$. For a vector-valued function class we define the empirical $\ell_2$-norm similarly as $d_{2, \X}^2 (\h, \h') = \frac{1}{t \cdot n} \sum_{k=1}^r \sum_{j=1}^t \sum_{i=1}^{n}  (\h_k(\x_{ji})-\h_k'(\x_{ji}))^2$.

Our goal is to bound the empirical Gaussian complexity of the set $S = \{ (f_1(\h(\x_{1i})), \hdots, f_j(\h(\x_{ji})), \hdots, f_{t}(\h(\x_{ti})))  : j \in [t], f_j \in \cF, \h \in \cH \} \subseteq \mathbb{R}^{tn}$ or function class,
\begin{align*}
	\EMPGaus{nt}{S} = \EMPGaus{\X}{\cF^{\otimes t}(\cH)}= \frac{1}{n t} \mE[ \sup_{\f \in \cF^{\otimes t}, \h \in \cH} \sum_{j=1}^{t} \sum_{i=1}^{n} g_{ji} f_j(\h(\x_{ji})) ] ; \quad g_{ji} \sim \cN(0,1)
\end{align*}
in a manner that allows for easy application in several problems of interest. To be explicit, we also recall that,
\begin{align*}
	\EMPGaus{\X}{\cH} = \frac{1}{nt} \mE_{\g}[\sup_{\h \in \cH} \sum_{k=1}^r \sum_{j=1}^t \sum_{i=1}^n g_{kji} \h_k(\x_{ji})] ; \quad g_{kji} \sim \cN(0,1) 
\end{align*}

We now state the decomposition theorem for Gaussian complexity.
\begin{theorem}
	Let the function class $\cF$ consist of functions that are $\ell_2$-Lipschitz with constant $L(\cF)$, and have boundedness parameter $D_{\X}=\sup_{\f,\f',\h, \h'} d_{2, \X}(\f(\h), \f'(\h'))$. Further, define $\cZ = \{ \h(\bX) : \h \in \cH, \bX \in \cup_{j=1}^{t} \{\X_j\} \}$. Then the (empirical) Gaussian complexity of the function class $\cF^{\otimes t}(\cH)$ satisfies,
	\begin{align*}
		\EMPGaus{\X}{\cF^{\otimes t}(\cH)} \leq \inf_{D_{\X} \geq \delta > 0} \left \{ 4\delta + 64 C(\cF^{\otimes t}(\cH)) \cdot \log \left( \frac{D_{\X}}{\delta} \right) \right \} \leq \frac{4  D_{\X}}{(nt)^2} + 128 C(\cF^{\otimes t}(\cH)) \cdot \log \left( nt \right)
	\end{align*}
	where $C(\cF^{\otimes t}(\cH)) =  L(\cF) \cdot \EMPGaus{\X}{\cH} + \max_{\Z \in \cZ} \EMPGaus{\Z}{\cF}$. Further, if $C(\cF^{\otimes t}(\cH)) \leq D_{\X}$ then by computing the exact infima of the expression,
	\begin{align*}
		\EMPGaus{\X}{\cF^{\otimes t}(\cH)} \leq 64 \left( C(\cF^{\otimes t}(\cH)) + C(\cF^{\otimes t}(\cH)) \cdot \log \left( \frac{D_{\X}}{C(\cF^{\otimes t}(\cH))} \right) \right)
	\end{align*}
	\label{thm:chain_rule}
\end{theorem}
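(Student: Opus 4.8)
The plan is to recognize $\EMPGaus{\X}{\cF^{\otimes t}(\cH)}$ as the normalized expected supremum of the canonical Gaussian process $X_{(\f,\h)} = \sum_{j,i} g_{ji}\, f_j(\h(\x_{ji}))$, whose increment (canonical) metric is exactly $\sqrt{nt}\, d_{2,\X}$, and to run the Dudley entropy integral (chaining) on it. I would start from the truncated-chaining form of the entropy integral,
\[
\EMPGaus{\X}{\cF^{\otimes t}(\cH)} \lesssim \inf_{\delta > 0}\Big\{ \delta + \frac{1}{\sqrt{nt}}\int_{\delta}^{D_{\X}}\sqrt{\log N_{2,\X}(u; \cF^{\otimes t}(\cH))}\,\dd u\Big\},
\]
so that the whole theorem reduces to controlling the worst-case metric entropy of $\cF^{\otimes t}(\cH)$ at each scale $u$ and then performing the integral. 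This is the step that replaces the majorizing-measure machinery of \citet{maurer2016chain} with a more hands-on route through covering numbers.

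The heart of the argument is a decoupling of this entropy into a feature part and a task part, using the triangle inequality together with the Lipschitz assumption on $\cF$. For $(\f,\h)$ and $(\f',\h')$ I would write $d_{2,\X}(\f(\h),\f'(\h')) \le d_{2,\X}(\f(\h),\f(\h')) + d_{2,\X}(\f(\h'),\f'(\h'))$; the first term is at most $L(\cF)\, d_{2,\X}(\h,\h')$ by the coordinatewise Lipschitz property, while the second is a purely task-level distance measured on the points $\h'(\X_j)\in\cZ$. Covering $\cH$ at scale $u/(2L(\cF))$ and then each of the $t$ task coordinates independently at scale $u/2$ on the relevant configuration in $\cZ$ yields
\[
\log N_{2,\X}(u; \cF^{\otimes t}(\cH)) \le \log N_{2,\X}\!\Big(\tfrac{u}{2L(\cF)}; \cH\Big) + t\cdot \max_{\Z\in\cZ}\log N_{2,\Z}\!\Big(\tfrac{u}{2}; \cF\Big),
\]
and taking square roots with $\sqrt{a+b}\le\sqrt a+\sqrt b$ separates the two contributions.

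The final step converts these entropies back into Gaussian complexities via Sudakov minoration, applied to the $\cH$-process on $nt$ points and to the $\cF$-process on each $n$-point configuration $\Z\in\cZ$. This gives $\sqrt{\log N_{2,\X}(v;\cH)} \lesssim \frac{\sqrt{nt}}{v}\EMPGaus{\X}{\cH}$ and $\sqrt{\log N_{2,\Z}(w;\cF)}\lesssim \frac{\sqrt n}{w}\EMPGaus{\Z}{\cF}$. The crucial accounting is that the task term carries a $\sqrt t$ from the $t$ independent covers, and $\sqrt t\cdot\sqrt n = \sqrt{nt}$ matches the normalization of the feature term exactly; substituting $v = u/(2L(\cF))$ and $w=u/2$ produces the common bound $\sqrt{\log N_{2,\X}(u; \cF^{\otimes t}(\cH))} \lesssim \frac{\sqrt{nt}}{u}\, C(\cF^{\otimes t}(\cH))$. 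Inserting this into the chaining integral cancels the $\sqrt{nt}$ and leaves $\int_{\delta}^{D_{\X}} C(\cF^{\otimes t}(\cH))\,\dd u/u = C(\cF^{\otimes t}(\cH))\log(D_{\X}/\delta)$; taking $\delta = D_{\X}/(nt)^2$ gives the $\log(nt)$ form, and optimizing over $\delta$ (legitimate when $C(\cF^{\otimes t}(\cH))\le D_{\X}$) yields the last display.

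I expect the main obstacle to be the bookkeeping in the Sudakov step: verifying that the worst-case complexity $\max_{\Z}\EMPGaus{\Z}{\cF}$ genuinely dominates the entropy of $\cF$ on whichever data configuration $\h'(\X_j)$ surfaces in the cover — which is exactly why $\cZ$ is defined as the image of each task's design under all of $\cH$ — and making the $\sqrt t$ versus $\sqrt{nt}$ normalizations line up so the feature and task complexities appear on equal footing in $C(\cF^{\otimes t}(\cH))$. A secondary technical point is that the Sudakov lower bound must be invoked at the precise scale appearing in the integrand for every $u\in[\delta,D_{\X}]$; since Sudakov holds at every radius, this delivers the clean $1/u$ integrand that drives the logarithmic integral.
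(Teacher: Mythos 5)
Your proposal is correct and follows essentially the same route as the paper's proof: Dudley's (truncated) entropy integral for the canonical Gaussian process, a covering-number decoupling via the triangle inequality and the Lipschitz property of $\cF$ at scales $u/(2L(\cF))$ and $u/2$, Sudakov minoration at every radius to return to the Gaussian complexities $\EMPGaus{\X}{\cH}$ and $\max_{\Z\in\cZ}\EMPGaus{\Z}{\cF}$ with the $\sqrt{t}\cdot\sqrt{n}=\sqrt{nt}$ normalization matching, and finally integration of the $1/u$ integrand with $\delta=D_{\X}/(nt)^2$ or the balanced choice. No substantive differences to report.
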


\begin{proof}
For ease of notation we define $N=nt$ in the following.
We can rewrite the Gaussian complexity of the function class $\cF^{\otimes t}(\cH)$ as,
\begin{align*}
	 \EMPGaus{\X}{\cF^{\otimes t}(\cH)} = \mE[\frac{1}{nt} \sup_{\f (\h) \in \cF^{\otimes t}(\cH)} \sum_{j=1}^{t} \sum_{i=1}^{n} g_{ji} f_j(\h(\x_{ji}))] = \mE[\frac{1}{\sqrt{N}} \cdot \sup_{\f (\h) \in \cF^{\otimes t}(\cH)} Z_{\f(\h)}]
\end{align*}
from which we define the mean-zero stochastic process $ Z_{\f(\h)} = \frac{1}{\sqrt{N}} \sum_{j=1}^t \sum_{i=1}^{n} g_{ji} f_j(\h(\x_{ji}))$ for a fixed sequence of design points $\x_{ji}$, indexed by elements $\{ \f(\h) \in \cF^{\otimes t}(\cH) \}$, and for a sequence of independent Gaussian random variables $g_{ji}$. Note the process $Z_{\f(\h)}$ has sub-gaussian increments, in the sense that,
	$Z_{\f(\h)}-Z_{\f'(\h')}$ is a sub-gaussian random variable with parameter $d_{2, \X}^2(\f(\h), \f'(\h')) = \frac{1}{N} \sum_{j=1}^t \sum_{i=1}^{n} (f_j(\h(\x_{ji}))-f'_j(\h'(\x_{ji}))^2$. Since $Z_{\f(\h)}$ is a mean-zero stochastic process we have that, $\mE[\sup_{\f(\h) \in \cF^{\otimes t}(\cH)} Z_{\f(\h)} ] = \mE[\sup_{\f(\h) \in \cF^{\otimes t}(\cH)} Z_{\f(\h)} - Z_{\f'(\h')}] \leq \mE[\sup_{\f(\h), \f'(\h') \in \cF^{\otimes t}(\cH)} Z_{\f(\h)} - Z_{\f'(\h')}]$. Now an appeal to the Dudley entropy integral bound, \citet[Theorem 5.22]{wainwright2019high} shows that,
		\begin{align*} \mE[\sup_{\f(\h), \f(\h') \in \cF^{\otimes t}(\h)} Z_{\f(\h)} - Z_{\f(\h')}] \leq & 4 \mE[\sup_{d_{2, \X}(\f(\h), \f(\h')) \leq \delta} Z_{\f(\h)}-Z_{\f(\h')}] +  32 \int_{\delta}^{D} \sqrt{\log N_{\X}(u; d_{2, \X}, \cF^{\otimes t}(\cH))} du.
		\end{align*}
		We now turn to bounding each of the above terms. Parametrizing the sequence of i.i.d. gaussian variables as $\g$, it follows that $\sup_{d_{2, \X}(\f(\h), \f(\h')) \leq \delta}  Z_{\f(\h)}-Z_{\f(\h')} \leq \sup_{\v : \norm{\v}_2 \leq \delta} \g \cdot \v \leq \norm{\g} \delta$. The corresponding expectation bound, after an application of Jensen's inequality to the $\sqrt{\cdot}$ function gives $\mE[\sup_{d_{2, \X}(\f(\h), \f(\h')) \leq \delta} Z_{\f(\h)}-Z_{\f(\h')}] \leq \mE[\norm{\g}_2 \delta] \leq \sqrt{N} \delta$.
		
		We now turn to bounding the second term by decomposing the distance metric $d_{2, \X}$ into a distance over $\cF^{\otimes t}$ and a distance over $\cH$. We then use a covering argument on each of the spaces $\cF^{\otimes t}$ and $\cH$ to witness a covering of the composed space $\cF^{\otimes t}(\cH)$. Recall we refer to the entire dataset concatenated over the $t$ tasks as $\X \equiv \{ \x_{ji} \}_{j=1, i=1}^{t, n}$. First, let $C_{\cH_{\X}}$ be a covering of the of function space $\cH$ in the empirical $\ell_2$-norm with respect to the inputs $\X$ at scale $\epsilon_1$. Then for each $\h \in C_{\cH_{\X}}$, construct an $\epsilon_2$-covering, $C_{\cF^{\otimes t}_{\h(\X)}}$, of the function space $\cF^{\otimes t}$ in the empirical $\ell_2$-norm with respect to the inputs $\h(\X)$ at scale $\epsilon_2$. We then claim that set $C_{\cF^{\otimes t}(\cH)} = \cup_{\h \in C_ {\cH_{\X}}} ( C_{\cF^{\otimes t}_{\h(\X)}} )$ is an $\epsilon_1 \cdot L(\cF) + \epsilon_2$-cover for the function space $\cF^{\otimes t}(\cH)$ in the empirical $\ell_2$-norm over the inputs $\X$.
		To see this, let $\h \in \cH$ and $\f \in \cF^{\otimes t}$ be arbitrary. Now let $\h' \in C_{\cH_{\X}}$ be $\epsilon_1$-close to $\h$. Given this $\h'$, there exists $\f' \in C_{\cF^{\otimes t}_{\h'(\X)}}$ such that  $\f'$ is $\epsilon_2$-close to $\f$ with respect to inputs $\h'(\X)$. By construction $(\h', \f') \in C_{\cF^{\otimes t}(\cH)}$.
		Finally, using the triangle inequality, we have that,
		\begin{align*}
			& d_{2, \X}(\f(\h), \f'(\h')) \leq d_{2, \X}(\f(\h), \f(\h')) + d_{2, \X}(\f(\h'), \f'(\h')) = \\
			& \sqrt{\frac{1}{N} \sum_{j=1}^t \sum_{i=1}^{n} (f_j(\h(\x_{ji}))-f_j(\h'(\x_{ji})))^2} + 
			\sqrt{\frac{1}{N} \sum_{j=1}^t  \sum_{i=1}^{n} (f_j(\h'(\x_{ji}))-f_j'(\h'(\x_{ji})) )^2} \leq \\
			& L(\cF) \sqrt{\frac{1}{N} \sum_{k=1}^{r} \sum_{j=1}^t \sum_{i=1}^{n} (\h_k(\x_{ji})-\h_k'(\x_{ji}))^2} + 
			\sqrt{\frac{1}{N} \sum_{j=1}^t  \sum_{i=1}^{n} (f_j(\h'(\x_{ji}))-f_j'(\h'(\x_{ji})) )^2} = \\
			& L(\cF) \cdot d_{2, \X}(\h, \h') + d_{2, \h'(\X)}(\f, \f') \leq \epsilon_1 \cdot L(\cF) + \epsilon_2
		\end{align*}
	appealing to the uniform Lipschitz property of the function class $\cF$ in moving from the second to third line, which establishes the claim. 
	
	We now bound the cardinality of the covering 
	$C_{\cF^{\otimes t}(\cH)}$. First, note $\abs{C_{\cF^{\otimes t}(\cH)}} = \sum_{\h \in C_{\cH_{\X}}} \abs{C_{\cF^{\otimes t}_{\h(\X)}}} \leq \abs{C_{\cH_{\X}}} \cdot \max_{\h \in \cH_{\X}} \abs{C_{\cF^{\otimes t}_{\h(\X)}}} $. To control $\max_{\h \in \cH_{\X}} \abs{C_{\cF^{\otimes t}_{\h(\X)}}} $, note an $\epsilon$-cover of $\cF^{\otimes t}_{\h(\X)}$ in the empirical $\ell_2$-norm with respect to $\h(\X)$ can be obtained from the cover $C_{\cF_{\h(\X_1)}} \times \hdots \times C_{\cF_{\h(\X_t)}}$ where $C_{\cF_{\h(\X_i)}}$ denotes a $\epsilon$-cover of $\cF$ in the empirical $\ell_2$-norm with respect to $\h(\X_i)$. Hence $\max_{\h \in \cH_{\X}} \abs{C_{\cF^{\otimes t}_{\h(\X)}}} \leq \abs{C_{\cF_{\h(\X_1)}} \times \hdots \times C_{\cF_{\h(\X_t)}}} \leq \abs{\underbrace{\max_{\z \in \cZ} C_{\cF_{\z}} \times \hdots \times \max_{\z \in \cZ} C_{\cF_{\z}}}_{t \text{ times}}} \leq \abs{\max_{\z \in \cZ} C_{\cF_{\z}}}^t$. Combining these facts provides a bound on the metric entropy of,
	\begin{align*}
		\log N_{2, \X}(\epsilon_1 \cdot L(\cF) + \epsilon_2, d_{2, \X}, \cF^{\otimes t}(\cH)) \leq \log N_{2, \X}(\epsilon_1, d_{2, \X}, \cH) + t \cdot \max_{\Z \in \cZ} \log N_{2, \Z}(\epsilon_2, d_{2, \Z}, \cF)	.
	\end{align*}
Using the covering number upper bound with $\epsilon_1=\frac{\epsilon}{2\cdot L(\cF)}$, $\epsilon_2 = \frac{\epsilon}{2}$ and sub-additivity of the $\sqrt{\cdot}$ function then gives a bound on the entropy integral of,
	\begin{align*}
		\int_{\delta}^{D} \sqrt{\log N_2(\epsilon, d_{2, \X}, \cF^{\otimes t}(\cH))}	 \ d\epsilon \leq \int_{\delta}^D \sqrt{\log N_{2, \X}(\epsilon/(2 L(\cF)), d_{2, \X}, \cH)} \ d \epsilon + \sqrt{t} \int_{\delta}^{D} \max_{\Z \in \cZ} \sqrt{\log N_{2, \Z} (\frac{\epsilon}{2}, d_{2, \Z}, \cF)} \ d \epsilon
	\end{align*}
	From the Sudakov minoration theorem \citet{wainwright2019high}[Theorem 5.30] for Gaussian processes and the fact packing numbers at scale $u$ upper bounds the covering number at scale $u$ we find:
	\begin{align*}
			\log N_{2, \X}(u; d_{2, \X}, \cH) \leq 4 \left (\frac{\sqrt{nt} \EMPGaus{\X}{\cH}}{u} \right)^2 \ \forall  u>0
			\quad \text{ and } \quad \log N_{2, \Z}(u; d_{2, \Z}, \cF) \leq 4 \left( \frac{\sqrt{n} \EMPGaus{\Z}{\cF}}{u} \right)^2 \ \forall  u>0.
	\end{align*}
	For the $\cH$ term  we apply the result to the mean-zero Gaussian process $Z_{\h} = \frac{1}{\sqrt{nt}} \sum_{k=1}^r \sum_{j=1}^t \sum_{i=1}^n g_{kji} h_k(\x_{ji})$, for $g_{kji} \sim \cN(0,1)$ i.i.d. and $\h \in \cH$.
	Combining all of the aforementioned upper bounds, shows that
	\begin{align*}
		& \EMPGaus{\X}{\cF^{\otimes t}(\cH)} \leq \frac{1}{\sqrt{nt}} \left( 4 \delta \sqrt{nt} + 64 L(\cF) \cdot \EMPGaus{\X}{\cH} \cdot \sqrt{nt} \int_{\delta}^{D_{\X}} \frac{1}{u} du  +  64 \sqrt{nt} \cdot \max_{\Z \in \cZ} \EMPGaus{\Z}{\cF} \int_{\delta}^{D_{\X}} \frac{1}{u} du \right) \leq \\
		& 4\delta + 64(L(\cF) \cdot \EMPGaus{\X}{\cH} + \max_{\Z \in \cZ} \EMPGaus{\Z}{\cF}) \cdot \log \left(\frac{D_{\X}}{\delta} \right)  = \delta + C(\cF^{\otimes t}(\cH)) \cdot \log \left(\frac{D_{\X}}{\delta} \right)
		\end{align*}
	defining $C(\cF^{\otimes t}(\cH)) = L(\cF) \cdot \EMPGaus{\X}{\cH} + \max_{\Z \in \cZ} \EMPGaus{\Z}{\cF}$. Choosing $\delta=D_{\X}/(nt)^2$  gives the first inequality. Balancing the first and second term gives the optimal choice $\delta = \frac{1}{C(\cF^{\otimes t}(\cH))}$ for the second inequality under the stated conditions.
\end{proof}


\section{Proofs in \cref{sec:diversity}}
\label{app:diversity}
 
 In this section we instantiate our general framework in several concrete examples. This consists of two steps: first verifying a task diversity lower bound for the function classes and losses and then bounding the various complexity terms appearing in the end-to-end LTL guarantee in \cref{cor:ltlguarantee} or its variants.

\subsection{Logistic Regression}
\label{app:logistic}

Here we include the proofs of the results which both bound the complexities of the function classes $\cF$ and $\cH$ in the logistic regression example as well establish the task diversity lower bound in this setting. In this section we use the following definition,
\begin{definition}
We say the covariate distribution $\Pr_{\x}(\cdot)$ is $\mSigma$-sub-gaussian if for all $\v \in \mR^d$, $\mE[\exp(\v^\top \x_i)] \leq \exp \left( \frac{\Vert \mSigma^{1/2} \v \Vert^2}{2} \right)$ where the covariance $\mSigma$ further satisfies $\sigma_{\max}(\mSigma) \leq C$ and $\sigma_{\min}(\mSigma) \geq c > 0$ for universal constants $c, C$. 
\label{def:sg}
\end{definition}

We begin by presenting the proof of the \cref{thm:logistic_transfer} which essentially relies on instantiating a variant of \cref{cor:ltlguarantee}. In order to obtain a sharper dependence in the noise terms in the test learning stage we actually directly combine \cref{cor:metatrain_emp,cor:metatest_funcbernstein}.

Since we are also interested in stating data-dependent guarantees in this section we use the notation $\mSigma_{\X} = \frac{1}{nt} \sum_{j=1}^t \sum_{i=1}^n \x_{ji} \x_{ji}^\top$ to refer to the empirical covariance across the the training phase samples and $\mSigma_{\X_j}$ for corresponding empirical covariances across the per-task samples. Immediately following this result we present the statement of sharp data-dependent guarantee which depends on these empirical quantities for completeness.

\begin{proof}[Proof of \cref{thm:logistic_transfer}]
First note due to the task normalization conditions we can choose $c_1, c_2$ sufficiently large so that the realizability  assumption in \cref{assump:realizable} is satisfied--in particular, we can assume that $c_2$ is chosen large enough to contain all the parameters $\balphastarj$ for $j \in [t] \cup \{0\}$ and $c_1 \geq \frac{C}{c} c_2$. Next note that under the conditions of the result we can use \cref{lem:logistic_diversity} to verify the task diversity condition is satisfied with parameters $(\tnu, 0)$ with $\nu = \sigma_{r}(\A^\top \A/t) > 0$ with this choice of constants.

Finally, in order to combine \cref{cor:metatrain_emp,cor:metatest_funcbernstein} we begin by bounding each of the complexity terms in the expression. First,
	\begin{itemize}[leftmargin=.5cm]
	\item In the following we use $\b_k$ for $k \in [r]$ to index the orthonormal columns of $\B$. For the feature learning complexity in the training phase we obtain,
	\begin{align*}
		& \EMPGaus{\X}{\cH}=\frac{1}{nt} \mE[ \sup_{\B \in \cH} \sum_{k=1}^r \sum_{j=1}^{t} \sum_{i=1}^{n} g_{kji} \bb_k^\top \x_{ji}] = \frac{1}{nt} \mE[ \sup_{(\bb_1, \hdots, \bb_r) \in \cH} \sum_{k=1}^r \bb_k^\top ( \sum_{j=1}^{t} \sum_{i=1}^{n} g_{kji}  \x_{ji})] \leq \\
		& \frac{1}{nt} \sum_{k=1}^r \mE[\norm{\sum_{j=1}^{t} \sum_{i=1}^{n} g_{kji}  \x_{ji}}] \leq  \frac{1}{nt} \sum_{k=1}^r \sqrt{\mE[\norm{\sum_{j=1}^{t} \sum_{i=1}^{n} g_{kji}  \x_{ji}}^2]} \leq \frac{1}{nt} \sum_{k=1}^r \sqrt{\sum_{j=1}^t \sum_{i=1}^n \norm{\x_{ji}}^2} \\
		& = \frac{r}{\sqrt{nt}} \sqrt{\tr(\mSigma_{\X})}.
	\end{align*}
	Further by definition the class $\cF$ as linear maps with parameters $\norm{\balpha}_2 \leq O(1)$ we obtain that $L(\cF) = O(1)$.
	We now proceed to convert this to a population quantity by noting that $\mE[\sqrt{\tr(\mSigma_{\X})}] \leq \sqrt{d \cdot \mE[\norm{\mSigma_{\X}}]} \leq O(\sqrt{d})$ for $nt \gtrsim d$ by \cref{lem:matrix_averages}.
	 \item For the complexity of learning $\cF$ in the training phase we obtain,
	\begin{align*}
		&  \EMPGaus{\h(\X)}{\cF} = \frac{1}{n} \mE[\sup_{\norm{\balpha} \leq c_1} \sum_{i=1}^n g_i \balpha^\top \B^\top \x_{ji} ] = \frac{c_1}{n} \mE[\norm{\sum_{i=1}^n g_i \B^\top \x_{ji}}] \leq \frac{c_1}{n} \sqrt{ \sum_{i=1}^n \norm{\B^\top \x_{ji}}^2} = \\
		&   \frac{c_1}{\sqrt{n}} \sqrt{\tr(\B \B^\top \mSigma_{\X_j})} = \frac{c_1}{\sqrt{n}} \sqrt{\tr(\B^\top \mSigma_{\X_j} \B)}.
	\end{align*}
	Now by the variational characterization of singular values it follows that $\max_{\B \in \cH} \frac{c_1}{\sqrt{n}} \sqrt{\tr(\B^\top \mSigma_{\X_j} \B)} \leq \frac{c_1}{n} \sqrt{\sum_{i=1}^r \sigma_i(\mSigma_{\X_j})}$
		Thus it immediately follows that, 
	\begin{align*}
		\max_{\Z \in \cZ} \frac{c_1}{\sqrt{n}} \sqrt{\tr(\mSigma_{\X_j})} = \max_{\X_j} \max_{\B \in \cH} \frac{c_1}{\sqrt{n}} \sqrt{\tr(\B^\top \mSigma_{\X_j} \B)} \leq \max_{\X_j}  \frac{c_1}{\sqrt{n}} \sqrt{\sum_{i=1}^r \sigma_i(\mSigma_{\X_j})}.
	\end{align*}
	for $j \in [t]$. We can convert this to a population quantity again by applying \cref{lem:matrix_averages} which shows  $\mE[\sqrt{\sum_{i=1}^{r} \sigma_i(\mSigma_{\X_j})}] \leq O(\sqrt{r})$ for $n \gtrsim d + \log t$. Hence $\MAXGaus{n}{\cF} \leq O(\sqrt{\frac{r}{n}})$.
	\item A nearly identical argument shows the complexity of learning $\cF$ in the testing phase is,
		\begin{align*}
			\EMPGaus{\Z_{\hh}}{\cF} = \frac{1}{m} \mE[\sup_{\norm{\balpha} \leq c_1} \sum_{i=1}^{m} \epsilon_i \balpha^\top \hB^\top \x_{(0)i} ] \leq \frac{c_1}{\sqrt{m}} \sqrt{\sum_{i=1}^r \sigma_i(\hB^\top \mSigma_{\X_{0}} \hB)}
		\end{align*} 
		Crucially, here we can apply the first result in \cref{cor:metatest_funcbernstein} which allows us to take the expectation over $\X_0$ before maximizing over $\B$. Thus applying \cref{lem:matrix_averages} as before gives the result, $\mE[\sqrt{\sum_{i=1}^r \sigma_i(\B^\top \mSigma_{\X_{0}} \B)}] \leq O(\sqrt{r})$ for $m \gtrsim r$. Hence $\MAXGaus{m}{\cF} \leq O(\sqrt{\frac{r}{m}})$.
	\end{itemize}
	This gives the first series of claims.
	
	Finally we verify that \cref{assump:regularity} holds so as to use \cref{thm:metatrain} and \cref{cor:metatest_funcbernstein} to instantiate the end-to-end guarantee. First the boundedness parameter becomes,
	\begin{align*}
		D_{\cX} = \sup_{\balpha, \B}  (\x^\top \B \balpha) \leq O(D)
	\end{align*}
	using the assumptions that $\norm{\x}_2 \leq D$, $\norm{\balpha}_2 \leq O(1)$, $\norm{\B}_2 = 1$. For the logistic loss bounds, recall $\ell(\eta; y) = y \eta - \log(1+\exp(\eta))$. Since $\abs{\nabla_{\eta} \ell(\eta; y)} = \abs{y - \frac{\exp(\eta)}{1+\exp(\eta)}} \leq 1$ it is $O(1)$-Lipschitz in its first coordinate uniformly over its second, so $L=O(1)$. Moreover, $\abs{\ell(\eta; y)} \leq O(\eta)$ where $\eta = \x^\top \B \balpha \leq \norm{\x} \leq D$ it follows the loss is uniformly bounded with parameter $O(D)$ so $B=O(D)$.
	
	Lastly, to use \cref{cor:metatest_funcbernstein} to bound the test phase error we need to compute the maximal variance term $\sigma^2=\frac{1}{m} \sup_{f \in \cF} \sum_{i=1}^m \Var(\ell(f \circ \hh(\x_i), y_i))$. Since the logistic loss $\ell(\cdot, \cdot)$ satisfies the 1-Lipschitz property uniformly we have that, $\Var(\ell(f \circ \hh(\x_i), y_i)) \leq \Var(f \circ \hh(\x_i))$ for each $i \in [m]$. Collapsing the variance we have that, 
	\begin{align*}
		& \frac{1}{m} \sup_{\balpha : \norm{\balpha}_2 \leq O(1)} \sum_{i=1}^m \Var(\x_i^\top \hB \balpha) \leq \frac{1}{m}  \sup_{\balpha : \norm{\balpha}_2 \leq O(1)} \sum_{i=1}^m (\balpha \hB)^\top \mSigma \hB \balpha \leq O(\norm{\hB \mSigma \hB}_2) \leq \\
		& O(\norm{\mSigma}) \leq O(C) = O(1)
	\end{align*}
	under our assumptions which implies that $\sigma \leq O(1)$. Assembling the previous bounds shows the transfer learning risk is bounded by,
	\begin{align*}
		& \lesssim \frac{1}{\tnu} \cdot \left( \log(nt) \cdot \left[ \sqrt{\frac{d r^2}{nt}} + \sqrt{\frac{r}{n}} \right] \right) +  \sqrt{\frac{r}{m} }  \\
		& + \left(\frac{D}{\tnu} \cdot \max \left(\frac{1}{(nt)^2}, \sqrt{\frac{\log(2/\delta)}{nt}} \right)+\sqrt{\frac{\log(2/\delta)}{m}} + D \frac{\log(2/\delta)}{m} \right).	
	\end{align*}
	with probability at least $1-2\delta$. Suppressing all logarithmic factors and using the additional condition $D \lesssim \min(dr^2, \sqrt{rm})$ guarantees the noise terms are higher-order.
\end{proof}

Recall, in the context of the two-stage ERM procedure introduced in \cref{sec:prelim} we let the design matrix and responses $y_{ji}$ for the $j$th task be $\X_j$ and $\y_j$ for $j \in [t] \cup \{0\}$, and the entire design matrix and responses concatenated over all $j \in [t]$ tasks as $\X$ and $\y$ respectively. Given a design matrix $\bX = (\x_1, \hdots, \x_N)^\top$ (comprised of mean-zero random vectors) we will let $\mSigma_{\bX} = \frac{1}{N} \bX^\top \bX$ denote its corresponding empirical covariance.

We now state a sharp, data-dependent guarantee for logistic regression.
\begin{corollary}
	If \cref{assump:linear_diverse} holds, $\h^\star(\cdot) \in \cH$, and $\cF_0 = \{ \ f \ | \ f(\x) = \balpha^\top \z, \ \balpha \in \mR^r, \ \norm{\balpha} \le c_2 \}$, then there exist constants $c_1, c_2$ such that the training tasks $f_j^{\star}$ are $(\Omega(\tnu), 0)$-diverse over $\cF_0$. Then with probability at least $1-2\delta$: 
	\begin{align*}
		 & \text{Transfer Learning Risk} \leq  \\
		 & O \Big( \frac{1}{\tnu} \cdot \Big( \log(nt) \cdot \left[ \sqrt{\frac{\tr(\mSigma_{\X}) r^2}{nt}} + \max_{j\in [t]} \sqrt{\frac{\sum_{i=1}^{r} \sigma_i(\X_j)}{n}} \right] \Big) +  \sqrt{\frac{\sum_{i=1}^{r} \sigma_i(\X_0)}{m} } \Big) \\
		& + O \Big(\frac{D}{\tnu} \cdot \max \left(\frac{1}{(nt)^2}, \sqrt{\frac{\log(4/\delta)}{nt}} \right)+\sqrt{\frac{\log(4/\delta)}{m}} + D \frac{\log(4/\delta)}{m} \Big).
	\end{align*}
	\label{cor:logistic_transfer_datadependent}
\end{corollary}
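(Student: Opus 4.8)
The plan is to follow the proof of \cref{thm:logistic_transfer} essentially verbatim, with one change: rather than passing from empirical Gaussian complexities to their population analogues via \cref{lem:matrix_averages}, I would stop at the empirical quantities and invoke the \emph{empirical} versions of the two generalization guarantees. Concretely, the end-to-end bound here should be assembled from the second statement of \cref{cor:metatrain_emp} (which controls $\avdist{\fstar}{\hh}{\hstar}$ by $L(\cF)\cdot \EMPGaus{\X}{\cH} + \max_{\Z\in\cZ}\EMPGaus{\Z}{\cF}$) and the second statement of \cref{cor:metatest_funcbernstein} (which, using the functional Bernstein inequality, controls the test excess risk by $\dist{\hh}{\h^\star} + 32 L\cdot\EMPGaus{\Z_{\hh}}{\cF}$ plus sharp noise terms), combined through \cref{def:diversity} with a union bound over the two high-probability events.

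The diversity, realizability, and regularity verification is identical to that of \cref{thm:logistic_transfer} and carries over unchanged: choose $c_2$ large enough to contain all $\balphastarj$ for $j\in[t]\cup\{0\}$ and $c_1\geq \tfrac{C}{c}c_2$ so that \cref{assump:realizable} holds, apply \cref{lem:logistic_diversity} to obtain $(\Omega(\tnu),0)$-diversity with $\tnu=\sigma_r(\A^\top\A/t)$, and verify \cref{assump:regularity} with $D_{\cX}=O(D)$, Lipschitz constant $L=O(1)$, boundedness $B=O(D)$, and maximal variance $\sigma\leq O(1)$. None of these steps reference the data-dependent quantities, so they transfer without modification.

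The one place requiring care is retaining the empirical complexity bounds exactly as derived in the proof of \cref{thm:logistic_transfer}, namely $\EMPGaus{\X}{\cH}\leq \tfrac{r}{\sqrt{nt}}\sqrt{\tr(\mSigma_{\X})}$ with $L(\cF)=O(1)$, the training-phase task bound $\max_{\Z\in\cZ}\EMPGaus{\Z}{\cF}\leq \max_{j\in[t]}\tfrac{c_1}{\sqrt{n}}\sqrt{\sum_{i=1}^r \sigma_i(\mSigma_{\X_j})}$, and the test-phase bound $\EMPGaus{\Z_{\hB}}{\cF}\leq \tfrac{c_1}{\sqrt{m}}\sqrt{\sum_{i=1}^r \sigma_i(\hB^\top\mSigma_{\X_0}\hB)}\leq \tfrac{c_1}{\sqrt{m}}\sqrt{\sum_{i=1}^r \sigma_i(\mSigma_{\X_0})}$, where the last inequality follows from the variational characterization of singular values together with $\hB$ having orthonormal columns. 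Since $\hB$ is learned from the training sample and is therefore independent of the test data $\X_0$, the empirical test-phase bound of \cref{cor:metatest_funcbernstein} applies directly with $\hh=\hB$ held fixed, yielding the clean data-dependent term $\sqrt{\sum_{i=1}^r\sigma_i(\mSigma_{\X_0})/m}$. I do not expect a genuine obstacle here—the analytic heart is already contained in \cref{thm:chain_rule} and the proof of \cref{thm:logistic_transfer}; the only subtlety is bookkeeping, ensuring that the data-dependent traces and partial singular-value sums are never relaxed to $d$ and $r$, and that the diversity factor $1/\tnu$ multiplies the full training contribution (both the $\cH$ and the per-task $\cF$ complexities) while the test term and its noise contribution remain outside the $1/\tnu$ factor.
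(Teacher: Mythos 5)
Your proposal matches the paper's own proof, which is exactly the one-line combination you describe: rerun the argument of \cref{thm:logistic_transfer} but keep the empirical Gaussian complexity bounds (skipping \cref{lem:matrix_averages}), plug them into the empirical statements of \cref{cor:metatrain_emp} and \cref{cor:metatest_funcbernstein}, and merge terms with a union bound over the two events. No gaps; the bookkeeping points you flag (retaining $\tr(\mSigma_{\X})$ and the partial singular-value sums, and keeping the test term outside the $1/\tnu$ factor) are the same ones the paper relies on.
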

\begin{proof}[Proof of \cref{cor:logistic_transfer_datadependent}]
This follows immediately from the proof of \cref{thm:logistic_transfer} and applying \cref{cor:metatrain_emp,,cor:metatest_funcbernstein}. Merging terms and applying a union bound gives the result.
\end{proof}

The principal remaining challenge is to obtain a lower bound on the task diversity. 
\begin{lemma}
	Let \cref{assump:linear_diverse} hold in the setting of \cref{thm:logistic_transfer}. Then there exists $c_2$ such that if $c_1 \geq \frac{C}{c} c_2$ the problem is task-diverse with parameter $(\Omega(\tnu), 0)$ in the sense of \cref{def:diversity}	where $\tnu = \sigma_r(\A^\top \A/t)$.
	\label{lem:logistic_diversity}
\end{lemma}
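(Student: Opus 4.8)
The plan is to reduce both the worst-case and task-averaged representation differences to quadratic forms in the logits, after which the conditioning parameter $\tnu = \sigma_r(\A^\top\A/t)$ controls their ratio with $\epsilon = 0$. Since $\cX$ is $D$-bounded, $\norm{\B}=1$ for $\B$ with orthonormal columns, and all $\balpha$ are norm-bounded, every logit $\balpha^\top\B^\top\x$ lies in a bounded interval $[-D_{\cX}, D_{\cX}]$. On this interval the conditional population logistic risk $g(\eta) = \mE_{y\mid\x}[\ell(\eta, y)]$, with data generated at the true logit $\eta^\star$, satisfies $g'(\eta^\star)=0$ and $g''(\eta) = \sigma(\eta)(1-\sigma(\eta)) \in [c_\sigma, \tfrac14]$ for a constant $c_\sigma = \Omega(1)$. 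A second-order Taylor expansion then yields, for any competing logit $\eta(\x)$,
\begin{equation*}
\frac{c_\sigma}{2}\,\mE_{\x}\big[(\eta(\x)-\eta^\star(\x))^2\big] \;\le\; \mE_{\x,y}\big[\ell(\eta(\x),y)-\ell(\eta^\star(\x),y)\big] \;\le\; \frac18\,\mE_{\x}\big[(\eta(\x)-\eta^\star(\x))^2\big].
\end{equation*}
Taking infima over the predictor $f'$ on all three sides shows that each inner infimum in \cref{def:averagecase} and \cref{def:worstcase} equals, up to the universal constants $\tfrac{c_\sigma}{2}$ and $\tfrac18$, the quadratic minimization $\inf_{\balpha'}(\B'\balpha'-\w)^\top\mSigma(\B'\balpha'-\w)$, where $\w = \B\balpha_0$ (resp. $\w = \B\balphastarj$) is the target coefficient lying in the column space of $\B$.

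Next I would evaluate this quadratic minimization explicitly. Completing the square in the $\mSigma$-inner product, the unconstrained minimizer is the $\mSigma$-orthogonal projection of $\w$ onto the column space of $\B'$, with optimal value $\w^\top M \w$, where $M = \mSigma^{1/2}(\I - \proj)\mSigma^{1/2} \succeq 0$ and $\proj$ is the Euclidean projection onto $\ran{(\mSigma^{1/2}\B')}$. The minimizing coefficient is $\balpha' = (\B'^\top\mSigma\B')^{-1}\B'^\top\mSigma\w$, whose norm is bounded by $\tfrac{C}{c}\norm{\w} \le \tfrac{C}{c}c_2$ using $\B'^\top\mSigma\B'\succeq c\,\I$ (orthonormal columns) and $\norm{\mSigma}\le C$. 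This is exactly where the hypothesis $c_1 \ge \tfrac{C}{c}c_2$ enters: it makes the constraint $\norm{\balpha'}\le c_1$ inactive for every relevant target — both the worst-case $\balpha_0$ with $\norm{\balpha_0}\le c_2$ and the training vectors $\balphastarj$, which lie in the $c_2$-ball once $c_2$ is taken large enough — so the constrained and unconstrained infima coincide at $\w^\top M \w$. Substituting $\w=\B\balpha$ gives the compact forms $\dist{\h'}{\h} \le \tfrac18 \sup_{\norm{\balpha_0}\le c_2}\balpha_0^\top(\B^\top M\B)\balpha_0 = \tfrac18 c_2^2\,\norm{\B^\top M\B}$ and $\avdist{\fstar}{\h'}{\h} \ge \tfrac{c_\sigma}{2}\cdot\tfrac1t\sum_j (\balphastarj)^\top(\B^\top M\B)\balphastarj = \tfrac{c_\sigma}{2}\,\tr\!\big((\B^\top M\B)\,\A^\top\A/t\big)$.

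Finally I would combine the two via the conditioning assumption. Since $\B^\top M\B \succeq 0$ and $\A^\top\A/t \succeq \tnu\,\I$, the trace obeys $\tr\!\big((\B^\top M\B)\,\A^\top\A/t\big) \ge \tnu\,\tr(\B^\top M\B) \ge \tnu\,\norm{\B^\top M\B}$, using $\tr(\cdot)\ge\sigma_{\max}(\cdot)$ for PSD matrices. Chaining the inequalities gives
\begin{equation*}
\dist{\h'}{\h} \;\le\; \frac18 c_2^2\,\norm{\B^\top M\B} \;\le\; \frac{c_2^2}{4 c_\sigma}\cdot\frac{1}{\tnu}\,\avdist{\fstar}{\h'}{\h},
\end{equation*}
which is exactly $(\nu,0)$-diversity with $\nu = \tfrac{4 c_\sigma}{c_2^2}\,\tnu = \Omega(\tnu)$. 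Crucially $M$ depends on $\B'$ but the final ratio does not, so the bound holds uniformly over all $\h'\in\cH$ as \cref{def:diversity} requires. The main obstacle is obtaining a \emph{clean} reduction with no additive slack ($\epsilon=0$): this hinges simultaneously on (i) the uniform strong-convexity/smoothness sandwich for the logistic loss, valid only because all logits are bounded, and (ii) the constraint $\norm{\balpha'}\le c_1$ being inactive, which together pin down the admissible constants $c_1, c_2$.
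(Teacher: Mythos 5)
Your overall architecture matches the paper's: a smoothness/strong-convexity sandwich for the logistic excess risk, reduction of both representation differences to the quadratic form governed by the Schur complement $\B^\top M\B$, the Lagrangian-duality argument that $c_1 \ge \frac{C}{c}c_2$ makes the norm constraint inactive, and the PSD trace inequality $\tr(PQ)\ge \sigma_r(Q)\,\sigma_1(P)$ to extract $\tnu$. The upper-bound half and the linear-algebraic endgame are essentially identical to the paper's \cref{lem:square_lin_diversity}.

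The gap is in your lower bound: you claim $g''(\eta)=\sigma(\eta)(1-\sigma(\eta))\ge c_\sigma=\Omega(1)$ uniformly over the interval of attainable logits. But the logits are only bounded by $D_{\cX}=O(D)$, and $\sigma(\eta)(1-\sigma(\eta))\asymp e^{-|\eta|}$, so the uniform curvature lower bound on $[-D_{\cX},D_{\cX}]$ is only $\Omega(e^{-D})$. Since \cref{thm:logistic_transfer} explicitly allows $D$ as large as $c_3\min(\sqrt{dr^2},\sqrt{rm})$, your argument yields $(\Omega(e^{-D}\tnu),0)$-diversity, which is exponentially weaker than the claimed $(\Omega(\tnu),0)$ and would destroy the final rate. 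The paper avoids this by keeping the curvature factor $\exp(-\max(|\hh(\x)^\top\halpha|,|\hstar(\x)^\top\balphastarj|))$ \emph{inside} the expectation (\cref{lem:glm}, \cref{lem:logistic_link}) and then exploiting that the logit $\balpha^\top\B^\top\x$ is an $O(1)$-sub-Gaussian random variable (because $\B^\top\x$ is $O(1)$-sub-Gaussian in $\mR^r$ and $\norm{\balpha}=O(1)$), hence $O(1)$ with high probability even though its worst-case magnitude is $O(D)$. Concretely, it truncates to the event that both logits are at most $Ck\max(c_1,c_2)$, on which the curvature is $\Omega(1)$, and uses Markov plus L4--L2 hypercontractivity to show the truncated second moment retains at least half of $\mE_{\x}[(\hh(\x)^\top\halpha-\hstar(\x)^\top\balphastarj)^2]$. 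You would need to replace your uniform curvature bound with this (or an equivalent) truncation argument for the lemma to hold at the claimed strength.
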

\begin{proof}
Our first observation specializes \cref{lem:glm} to the case of logistic regression where $\Phi(\eta) = \log(1+\exp(\eta))$, $s(\sigma)=1$  with $\h(\x)=\B \x$ parametrized with $\B \in \mR^{d \times r}$ having orthonormal columns and $\f \equiv \balpha$. Throughout we also assume that $c_2$ is chosen large enough to contain all the parameters $\balphastarj$ for $j \in [t] \cup \{0\}$ and $c_1 \geq \frac{C}{c} c_2$. These conditions are consistent with the realizability assumption.

This lemma uses smoothness and (local) strong convexity to bound the task-averaged representation distance and worst-case representation difference by relating it to a result for the squared loss established in \cref{lem:square_lin_diversity}. By appealing to \cref{lem:glm} and \cref{lem:logistic_link} we have that,
\begin{align*}
	& \frac{1}{8} \mE_{\x_j}[ \exp(-\max(\abs{\hh(\x_j)^\top \halpha}, \abs{\h(\x_j)^\top \balpha})) \cdot (\hh(\x_j)^\top \halpha-\h(\x_j)^\top \balpha)^2]
 	\leq \\
 	& \mE_{\x_j, y_j}[\ell(\hat{f} \circ \hh(\x_j), y_j) - \ell(f \circ \h(\x_j), y_j)] \leq \frac{1}{8} \mE_{\x_j}[(\hh(\x_j)^\top \halpha-\h(\x_j)^\top \balpha)^2]
\end{align*}
for $\x_j, y_j \sim (\Pr_{\x}(\cdot), \Pr_{y | \x}(\cdot | f^\star_j \circ \h^\star(\x))$
We now bound each term in the task diversity,
\begin{itemize}[leftmargin=.5cm]
	\item We first bound the representation difference where $\x, y \sim (\Pr_{\x}(\cdot), \Pr_{y | \x}(\cdot | f^\star_0 \circ \h^\star(\x))$,
	\begin{align*}
		 & \dist{\hh}{\hstar} = \sup_{\balpha : \norm{\balpha}_2 \leq c_2} \inf_{\halpha : \norm{\halpha} \leq c_1} \mE_{\x, y}[\ell(\hat{f} \circ \hh, \x, y) - \ell(f^{\star}_0 \circ \hstar(\x), y)]] \leq \\
		 & \sup_{\balpha : \norm{\balpha}_2 \leq c_2} \inf_{\halpha : \norm{\halpha} \leq c_1} \frac{1}{8} \mE_{\x}[(\hh(\x)^\top \halpha-\hstar(\x)^\top \balpha)^2].
	\end{align*}
	Now for sufficiently large $c_1$, by Lagrangian duality the unconstrained minimizer of the inner optimization problem is equivalent to the constrained minimizer. In particular first note that under the assumptions of the problem there is unique unconstrained minimizer given by $\inf_{\halpha} \frac{1}{8} \mE_{\x_i}[(\hh(\x_i)^\top \halpha-\hstar(\x_i)^\top \balpha)^2] \to \halpha_{unconstrained} = -\F_{\hh \hh} \F_{\hh \h} \balpha = (\hB^\top \mSigma \hB)^{-1} (\hB^\top \mSigma \hB) \balpha$  from the proof and preamble of \cref{lem:square_lin_diversity}. Note that since $\hB$ and $\B$ have orthonormal columns it follows that $ \norm{\halpha} \leq \frac{C}{c} c_2$ since $\hB^\top \mSigma \hB$ is invertible. Thus if $c_1 \geq \frac{C}{c} c_2$, by appealing to Lagrangian duality for this convex quadratic objective with convex quadratic constraint, the unconstrained minimizer is equivalent to the constrained minimizer (since the unconstrained minimizer is contained in the constraint set). Hence leveraging the proof and result of \cref{lem:square_lin_diversity} we obtain $\sup_{\balpha : \norm{\balpha}_2 \leq c_2} \inf_{\halpha : \norm{\halpha} \leq c_1} \frac{1}{8} \mE_{\x_i}[(\hh(\x_i)^\top \halpha-\hstar(\x_i)^\top \balpha)^2] \leq \frac{c_2}{8} \sigma_{1} (\Lambda_{sc}(\h, \hh))$.
	\item We now turn out attention to controlling the average distance which we must lower bound. Here $\x_j, y_j \sim (\Pr_{\x}(\cdot), \Pr_{y | \x}(\cdot | f^\star_j \circ \h^\star(\x))$
	\begin{align*}
			& \avdist{\fstar}{\h}{\hh} = \frac{1}{t} \sum_{j=1}^{t} \inf_{\norm{\halpha} \leq c_1} \mE_{\x_j, y_j}[\ell(\hat{f} \circ \hh(\x_j), y_j) - \ell(\fjstar \circ \hstar(\x_j), y_j)]] \geq \\
			& \frac{1}{8t} \sum_{j=1}^t \mE_{\x_j}[ \exp(-\max(\abs{\hh(\x_j)^\top \halpha}, \abs{\hstar(\x_j)^\top \balphastar_j})) \cdot (\hh(\x_j)^\top \halpha-\hstar(\x_j)^\top \balphastar_j)^2]
	\end{align*}
	We will use the fact that in our logistic regression example $\h(\x_j) = \B \x_j$; in this case if $\x_j$ is $C$-subgaussian random vector in $d$ dimensions, then $\B\x_i$ is $C$-subgaussian random vector in $r$ dimensions. 
	We lower bound each term in the sum over $j$ identically and suppress the $j$ for ease of notation in the following. For fixed $j$, note the random variables $Z_1 = (\balphastar_j)^\top \B \x_i$ and $Z_2 = \halpha^\top \hB \x_i$ are subgaussian with variance parameter at most $\norm{\balphastar_j}_2^2 C^2$ and $\norm{\halpha}_2^2 C^2$ respectively. Define the event $\mathbbm{1}[E] = \mathbbm{1} [\abs{Z_1} \leq Ck  \norm{\balphastar_j} \cap   \mathbbm{1} \{\abs{Z_2} \leq Ck \norm{\halpha} ]$ for $k$ to be chosen later. We use this event to lower bound the averaged task diversity since it is a non-negative random variable, 
	\begin{align*}
		& \mE_{\x}[ \exp(-\max(\abs{\hh(\x)^\top \halpha}, \abs{\hstar(\x)^\top \balphastarj})) \cdot (\hh(\x)^\top \halpha-\hstar(\x)^\top \balphastarj)^2] \geq \\
		& \mE_{\x}[\mathbbm{1}[E] \exp(-\max(\abs{\hh(\x)^\top \halpha}, \abs{\hstar(\x)^\top \balphastarj})) \cdot (\hh(\x)^\top \halpha-\hstar(\x)^\top \balphastarj)^2] \geq \\
		& \exp(-Ck \max(c_1, c_2)) \cdot \mE_{\x}[\mathbbm{1}[E] (\hh(\x)^\top \halpha-\hstar(\x)^\top \balphastarj)^2]
	\end{align*}
	We now show that for appropriate choice of $k$, $\mE_{\x}[\mathbbm{1}[E] (\hh(\x)^\top \halpha-\hstar(\x)^\top \balphastarj)^2]$ is lower bounded by $\mE_{\x}[(\hh(\x)^\top \halpha-\hstar(\x)^\top \balphastarj)^2]$ modulo a constant factor. First write $\mE_{\x}[\mathbbm{1}[E] (\hh(\x)^\top \halpha-\hstar(\x)^\top \balphastarj)^2] = \mE_{\x}[(\hh(\x)^\top \halpha-\hstar(\x)^\top \balphastarj)^2] - \mE_{\x}[\mathbbm{1}[E^c] (\hh(\x)^\top \halpha-\hstar(\x)^\top \balphastarj)^2]$. 
	
	We upper bound the second term first using Cauchy-Schwarz,
	\begin{align*}
		\mE_{\x}[\mathbbm{1}[E^c] (\hh(\x)^\top \halpha-\hstar(\x)^\top \balphastarj)^2] \leq \sqrt{\Pr[E^c]} \sqrt{\mE_{\x} (\hh(\x)^\top \halpha-\hstar(\x)^\top \balphastarj)^4}
	\end{align*}
	Define $Z_3 = \x^\top ((\B^{\star})^\top \balphastarj - \hB^\top \halpha)$ which by definition is subgaussian with parameter at most $((\Bstar)^\top \balphastarj - \hB^\top \halpha) \mSigma ((\Bstar)^\top \balphastarj - \hB^\top \halpha) = \sigma^2$; since this condition implies L4-L2 hypercontractivity (see for example \citet[Theorem 2.6]{wainwright2019high}) we can also conclude that,
	\begin{align*}
		\sqrt{\mE_{\x} (\hh(\x)^\top \halpha-\hstar(\x)^\top \balphastarj)^4}	 \leq 10 \sigma^2 = 10 \cdot \mE_{\x} (\hh(\x)^\top \halpha-\hstar(\x)^\top \balphastarj)^2.
	\end{align*}
	 Recalling the subgaussianity of $Z_1$ and $Z_2$, from an application of Markov and Jensen's inequality,
	\begin{align*}
	\Pr[\abs{Z_1} \geq k \cdot C \norm{\balphastarj}_2] \leq \frac{\mE[Z^2]}{k^2 \cdot C^2 \norm{\balphastarj}_2} \leq \frac{1}{k^2}
	\end{align*}
	with an identical statement true for $Z_2$. Using a union bound we have that $\sqrt{\Pr[E^c]} \leq \frac{\sqrt{2}}{k}$ using these probability bounds. Hence by taking $k=30$ we can ensure that $\mE_{\x}[\mathbbm{1}[E] (\hh(\x)^\top \halpha-\hstar(\x)^\top \balphastarj)^2] \geq \frac{1}{2} \mE_{\x}[(\hh(\x)^\top \halpha-\hstar(\x)^\top \balphastarj)^2]$ by assembling the previous bounds. Finally since $c_1, c_2, C, k$ are universal constants, by definition the conclusion that,
	\begin{align*}
		& \mE_{\x}[ \exp(-\max(\abs{\hh(\x)^\top \halpha}, \abs{\hstar(\x)^\top \balphastarj})) \cdot (\hh(\x)^\top \halpha-\hstar(\x)^\top \balphastarj)^2] \geq \\
		& \Omega(\mE_{\x} (\hh(\x)^\top \halpha-\hstar(\x)^\top \balphastarj)^2) 
	\end{align*}
	follows for each $j$. Hence the average over the $t$ tasks is identically lower bounded as,
	\begin{align*}
		\Omega \left( \frac{1}{t} \sum_{j=1}^t \mE_{\x} (\hh(\x_j)^\top \halpha-\hstar(\x_j)^\top \balphastarj)^2 \right)
	\end{align*}
	
	Now using the argument from the upper bound to compute the infima since all the $\norm{\balphastarj} \leq c_2$ (and hence the constrained minimizers identical to the unconstrained minimizers for each of the $j$ terms for $c_1 \geq \frac{C}{c} c_2$) and using the proof of \cref{lem:square_lin_diversity} we conclude that,
	\begin{align*}
			\avdist{\fstar}{\hh}{\hstar} \geq \Omega(\tr(\Lambda_{sc}(\hstar, \hh) \C)).
	\end{align*}
	\end{itemize}
Combining these upper and lower bounds and concluding as in the proof of \cref{lem:square_lin_diversity} shows
\begin{align*}
	\dist{\hh}{\hstar} \leq \frac{1}{\Omega(\tnu)} \avdist{\fstar}{\hh}{\hstar}
\end{align*}
\end{proof}

Before showing the convexity-based lemmas used to control the representation differences in the loss we make a brief remark to interpret the logistic loss in the well-specified model.
\begin{remark}
	If the data generating model satisfies the logistic model conditional likelihood as in \cref{sec:logistic_reg}, for the logistic loss $\ell$ we have that,
		\begin{align*}
		& \mE_{y \sim f \circ \h(\x)}[\ell(\hat{f} \circ \hh(\x), y) - \ell(f \circ \h(\x), y)]] = 
		\mE_{\x}[\KL[\Bern(\sigma(f \circ \h(\x)) \ | \ \Bern(\sigma(\hat{f} \circ \hh(\x))]].
	\end{align*}
	simply using the fact the data is generated from the model $y \sim \Pr_{y | \x}( \cdot | f \circ \h(\x))$.
\end{remark}

To bound the task diversity we show a convexity-based lemma for general GLM/nonlinear models,
\begin{lemma}
Consider the generalized linear model for which the $\Pr_{y | \x}(\cdot)$ distribution is,
\begin{align*}
	\Pr_{y | \x}(y | \balpha^\top \h(\x)) = b(y) \exp \left(\frac{y \balpha^\top \h(\x) - \Phi(\balpha^\top \h(\x))}{s(\sigma)} \right).
\end{align*}
Then if $\sup_{p(\x) \in S(\x)} \Phi''(p(\x)) = L(\x)$ and $\inf_{p(\x) \in S(\x)} \Phi''(p(\x)) = \mu(\x)$ where $p(\x) \in S(\x) = [\hh(\x)^\top \halpha, \h(\x)^\top \balpha]$, 
\begin{align*}
	\frac{\mu(\x)}{2 s(\sigma)} (\hh(\x)^\top \halpha-\h(\x)^\top \balpha)^2
\leq \KL[\Pr_{y | \x}( \cdot | \balpha^\top \h(\x)), \Pr_{y | \x}( \cdot | \halpha^\top \hh(\x))]  \leq \frac{L(\x)}{2 s(\sigma)} 	(\hh(\x)^\top \halpha-\h(\x)^\top \balpha)^2
\end{align*}
where the $\KL$ is taken with respect to a fixed design point $\x$, and fixed feature functions $\h$, and $\hh$.
\label{lem:glm}
\end{lemma}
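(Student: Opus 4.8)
The plan is to reduce the claim to a one-dimensional statement about the scalar natural-parameter exponential family and then read off the two-sided bound from Taylor's theorem applied to the log-partition function $\Phi$. Write $\eta_1 = \balpha^\top \h(\x)$ and $\eta_2 = \halpha^\top \hh(\x)$ for the two natural parameters, so that the two conditional laws are members of the same family differing only in this parameter. First I would expand the $\KL$ as an expectation of the log-likelihood ratio under $\Pr_{y|\x}(\cdot \mid \eta_1)$; because the density has the form $b(y)\exp((y\eta - \Phi(\eta))/s(\sigma))$, the log-ratio is affine in $y$, namely $(y(\eta_1-\eta_2) - \Phi(\eta_1)+\Phi(\eta_2))/s(\sigma)$, so taking the expectation only requires knowing $\mE_{y\sim\eta_1}[y]$.

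The one fact that needs justification is the standard exponential-family mean identity $\mE_{y\sim\eta}[y] = \Phi'(\eta)$. I would obtain it by differentiating the normalization condition $\int b(y)\exp(y\eta/s(\sigma))\,dy = \exp(\Phi(\eta)/s(\sigma))$ in $\eta$ and rearranging. Substituting this into the expected log-ratio collapses the $\KL$ into the Bregman divergence of $\Phi$,
\[
\KL[\Pr_{y|\x}(\cdot\mid\eta_1), \Pr_{y|\x}(\cdot\mid\eta_2)] = \frac{1}{s(\sigma)}\big(\Phi(\eta_2) - \Phi(\eta_1) - \Phi'(\eta_1)(\eta_2-\eta_1)\big).
\]

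Next I would apply the mean-value form of Taylor's remainder to $\Phi$ around $\eta_1$: there exists $\xi$ in the closed interval between $\eta_1$ and $\eta_2$ --- which is exactly the set $S(\x)$ appearing in the statement --- with $\Phi(\eta_2) - \Phi(\eta_1) - \Phi'(\eta_1)(\eta_2-\eta_1) = \tfrac12 \Phi''(\xi)(\eta_2-\eta_1)^2$. Since $\xi \in S(\x)$, bounding $\Phi''(\xi)$ below by $\mu(\x) = \inf_{p\in S(\x)}\Phi''(p)$ and above by $L(\x) = \sup_{p\in S(\x)}\Phi''(p)$, and recalling $(\eta_2-\eta_1)^2 = (\hh(\x)^\top\halpha - \h(\x)^\top\balpha)^2$, yields both inequalities after dividing by $s(\sigma)$. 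The computation is essentially routine; the only points requiring care are the mean identity (which needs the usual differentiation-under-the-integral regularity so that the normalizer may be differentiated in $\eta$) and the observation that the Taylor intermediate point $\xi$ genuinely lies in $S(\x)$, so that the $\inf$/$\sup$ defining $\mu(\x)$ and $L(\x)$ apply to it.
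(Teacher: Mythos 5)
Your proposal is correct and follows essentially the same route as the paper's proof: both expand the KL as the expectation of the affine-in-$y$ log-likelihood ratio, invoke the log-normalizer identity $\mE_{y\sim\eta}[y]=\Phi'(\eta)$ to reduce the KL to the Bregman divergence of $\Phi$, and then apply the mean-value form of Taylor's remainder with the intermediate point lying in $S(\x)$ to obtain the two-sided bound. No substantive differences.
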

\begin{proof}
\begin{align*}
  & \KL[\Pr_{y | \x}( \cdot | \balpha^\top \h(\x)), \Pr_{y | \x}( \cdot | \halpha^\top \hh(\x))] = \\
  & \int{dy \ \Pr_{y | \x}( y | \balpha^\top \h(\x)) \left( \frac{ y (\h(\x)^\top \balpha-\hh(\x)^\top \halpha) }{s(\sigma)} + \frac{-\Phi(\h(\x)^\top \balpha) + \Phi(\hh(\x)^\top \halpha))}{s(\sigma)} \right)} = \\
  & \frac{1}{s(\sigma)} \left[\Phi'(\h(\x)^\top \balpha)(\h(\x)^\top \balpha-\hh(\x)^\top \halpha) -\Phi(\h(\x)^\top \balpha) + \Phi(\hh(\x)^\top \halpha) \right]
\end{align*}
since we have that $\frac{\Phi(\h(\x)^\top \balpha)}{s(\sigma)} = \log \int{ dy \ b(y) \exp(\frac{y \h(\x)^\top\balpha \rangle}{s(\sigma)})} \implies \frac{\Phi'(\h(\x)^\top \balpha)}{s(\sigma)} = \frac{\int{dy \ \Pr_{y | \x}( y | \balpha^\top \h(\x)) y}}{s(\sigma)}$ as it is the log-normalizer. Using Taylor's theorem we have that
\begin{align*}
  \Phi(\hh(\x)^\top \halpha) = \Phi \left(\h(\x)^\top \balpha \right) + \Phi'(\h(\x)^\top \balpha) (\hh(\x)^\top \halpha-\h(\x)^\top \balpha) + \frac{\Phi''(p(\x))}{2} (\hh(\x)^\top \halpha-\h(\x)^\top \balpha)^2
\end{align*}
for some intermediate $p(\x) \in [\hh(\x)^\top \halpha, \h(\x)^\top \balpha]$. Combining the previous displays we obtain that:
\begin{align*}
   \KL[\Pr_{y | \x}( \cdot | \balpha^\top \h(\x)), \Pr_{y | \x}( \cdot | \halpha^\top \hh(\x))]
   &= \frac{1}{2 s(\sigma)} \left[ \Phi''(p(\x)) (\hh(\x)^\top \halpha-\h(\x)^\top \balpha)^2 \right] 
\end{align*}
Now using the assumptions on the second derivative $\Phi''$ gives,
\begin{align*}
   \frac{\mu}{2 s(\sigma)} (\hh(\x)^\top \halpha-\h(\x)^\top \balpha)^2 \leq \frac{1}{2 s(\sigma)} \left[ \Phi''(p(\x)) (\hh(\x)^\top \halpha-\h(\x)^\top \balpha)^2 \right] \leq \frac{L}{2 s(\sigma)} (\hh(\x)^\top \halpha-\h(\x)^\top \balpha)^2
\end{align*}
\end{proof}

We now instantiate the aforementioned lemma in the setting of logistic regression.

\begin{lemma}
Consider the $\Pr_{y | \x}(\cdot)$ logistic generative model defined in \cref{sec:logistic_reg} for a general feature map $\h(\x)$. Then for this conditional generative model in the setting of \cref{lem:glm}, where $\Phi(\eta) = \log(1+\exp(\eta))$, $s(\sigma)=1$, $b(y)=1$, 
\begin{align*}
	\sup_{p(\x) \in S(\x)} \Phi''(p(\x)) \leq \frac{1}{4}
\end{align*}
and
\begin{align*}
	\inf_{p(\x) \in S(\x)} \Phi''(p(\x)) \geq \frac{1}{4} \exp(-\max(\abs{\hh(\x)^\top \halpha}, \abs{\h(\x)^\top \balpha})).
\end{align*}
for fixed $\x$. 
\label{lem:logistic_link}
\end{lemma}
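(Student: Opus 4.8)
The plan is to reduce everything to elementary properties of the log-partition function $\Phi(\eta)=\log(1+\exp(\eta))$ of the Bernoulli family, treating $\Phi''$ as a scalar map and controlling it pointwise before restricting to the interval $S(\x)$. First I would compute the derivatives: differentiating gives $\Phi'(\eta)=\exp(\eta)/(1+\exp(\eta))=\sigma(\eta)$, the sigmoid, and hence $\Phi''(\eta)=\sigma(\eta)(1-\sigma(\eta))$, which is exactly the variance of a $\Bern(\sigma(\eta))$ random variable. Both claimed bounds then follow from two elementary facts about the function $\eta\mapsto\Phi''(\eta)$.

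For the upper bound I would note that $\sigma(\eta)\in(0,1)$, so $\Phi''(\eta)=\sigma(\eta)(1-\sigma(\eta))$ has the form $p(1-p)$ with $p\in(0,1)$; by AM--GM (equivalently, by maximizing the concave quadratic $p\mapsto p(1-p)$) this is at most $1/4$, attained at $p=1/2$, i.e.\ $\eta=0$. Since this holds for every $\eta\in\mR$, it holds in particular for every $p(\x)\in S(\x)$, giving $\sup_{p(\x)\in S(\x)}\Phi''(p(\x))\le \tfrac14$.

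For the lower bound I would rewrite $\Phi''(\eta)=\sigma(\eta)(1-\sigma(\eta))=\left[(1+\exp(-\eta))(1+\exp(\eta))\right]^{-1}=\left(2+\exp(\eta)+\exp(-\eta)\right)^{-1}$, which is an even function of $\eta$ that decreases monotonically in $\abs{\eta}$. The key elementary inequality is $2+\exp(\eta)+\exp(-\eta)\le 4\exp(\abs{\eta})$ for all $\eta$: by evenness it suffices to check $\eta\ge 0$, where it reduces to $3\le 3\exp(\eta)$, i.e.\ $\eta\ge 0$. This yields the pointwise bound $\Phi''(\eta)\ge \tfrac14\exp(-\abs{\eta})$. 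Finally, since $S(\x)=[\hh(\x)^\top\halpha,\ \h(\x)^\top\balpha]$, every $p(\x)\in S(\x)$ satisfies $\abs{p(\x)}\le\max(\abs{\hh(\x)^\top\halpha},\abs{\h(\x)^\top\balpha})$, so combining the pointwise bound with the monotonicity of $\Phi''$ in $\abs{\eta}$ gives $\inf_{p(\x)\in S(\x)}\Phi''(p(\x))\ge \tfrac14\exp\!\left(-\max(\abs{\hh(\x)^\top\halpha},\abs{\h(\x)^\top\balpha})\right)$, as desired.

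I do not anticipate a substantive obstacle: the statement is a self-contained calculus fact about the Bernoulli log-partition function, and the only mild subtleties are the elementary inequality used in the lower bound and the observation that, because $\Phi''$ decays monotonically away from $0$, the infimum over the interval $S(\x)$ is governed by the larger of the two endpoint magnitudes.
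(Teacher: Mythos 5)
Your proposal is correct and follows essentially the same route as the paper: compute $\Phi''(\eta)=\sigma(\eta)(1-\sigma(\eta))$, bound it above by its maximum $1/4$ at $\eta=0$, and establish the pointwise inequality $\Phi''(\eta)\ge\tfrac14 e^{-|\eta|}$ via an elementary exponential estimate before restricting to the interval $S(\x)$. The only difference is cosmetic algebra in the lower bound (you bound $2+e^{\eta}+e^{-\eta}\le 4e^{|\eta|}$, while the paper bounds $(1+e^{t})^{2}\le 4e^{2t}$ for $t>0$ and invokes evenness).
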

\begin{proof}
A short computation shows $\Phi''(t) = \frac{e^t}{\left(e^t+1\right)^2}$. Note that the maxima of $\Phi''(t)$ over all $\mR$ occurs at $t=0$. Hence we have that, $\mE_{\x}[\sup_{p(\x) \in S(\x)} \Phi''(p(\x))] \leq \frac{1}{4}$ using a uniform upper bound. The lower bound follows by noting that 
\begin{align*}
\inf_{p(\x) \in S(\x)} \Phi''(t) = \min(\Phi''(\abs{\hh(\x_i)^\top \halpha}), \Phi''(\abs{\h(\x_i)^\top \balpha)})).
\end{align*}
For the lower bound note that for $t > 0$ that $e^{2t} \geq e^{t} \geq 1$ implies that $\frac{e^t}{(1+e^{t})^2} \geq \frac{1}{4} e^{-t}$. Since $\Phi''(t)=\Phi''(-t)$ it follows that $\Phi''(t) \geq \frac{1}{4} e^{-\abs{t}}$ for all $t \in \mR$. 
\end{proof}

Finally we include a simple auxiliary lemma to help upper bound the averages in our data-dependent bounds which relies on a simple tail bound for covariance matrices drawn from sub-gaussian ensembles (\citet[Theorem 4.7.3, Exercise 4.7.1]{vershynin2010introduction} or \citet[Theorem 6.5]{wainwright2019high}). Further recall that in \cref{def:sg} our covariate distribution is $O(1)$-sub-gaussian.
\begin{lemma}
	Let the common covariate distribution $\Pr_{\x}(\cdot)$ satisfy \cref{def:sg}. Then if $nt \gtrsim d$,
	\begin{align*}
		\mE[\norm{\mSigma_{\X}}] \leq O(1),
	\end{align*}
	if $n \gtrsim d + \log t$,
	\begin{align*}	
		\mE[\max_{j \in [t]} \norm{\mSigma_{\X_j}}] \leq O(1),
	\end{align*}
	and if $m \gtrsim r$,
	\begin{align*}
		\max_{\B \in \cH} \mE[\norm{\B^\top \mSigma_{\X_0} \B}] \leq O(1),
	\end{align*}
where $\cH$ is the set of $d \times r$ orthonormal matrices.
\label{lem:matrix_averages}
\end{lemma}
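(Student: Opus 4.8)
The plan is to reduce all three bounds to a single concentration inequality for the sample covariance of sub-gaussian random vectors. Recall from \cref{def:sg} that each covariate $\x$ is $\mSigma$-sub-gaussian with $\sigma_{\max}(\mSigma) \le C = O(1)$, so the covariates have sub-gaussian norm $O(1)$. The key tool is the standard sample-covariance bound (\citet[Theorem 4.7.3]{vershynin2010introduction} or \citet[Theorem 6.5]{wainwright2019high}): for $N$ i.i.d.\ mean-zero $\mSigma$-sub-gaussian vectors in $\mR^p$ with empirical covariance $\hSigma$, there is a universal constant such that for every $u \ge 0$, with probability at least $1 - 2e^{-u}$,
\[
\norm{\hSigma - \mSigma} \lesssim \norm{\mSigma}\left(\sqrt{\frac{p+u}{N}} + \frac{p+u}{N}\right).
\]
Since $\norm{\mSigma} \le C$ and $\norm{\hSigma} \le \norm{\mSigma} + \norm{\hSigma - \mSigma}$, this yields $\norm{\hSigma} = O(1)$ with high probability whenever $N \gtrsim p + u$. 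To pass from this tail bound to a bound on $\mE[\norm{\hSigma}]$ I would integrate the tail, splitting $\mE[\norm{\hSigma}] = \int_0^\infty \Pr[\norm{\hSigma} > s]\,ds$ at the $O(1)$ threshold and using the exponential decay to make the contribution above the threshold $O(1)$. Applying this with $p = d$ and $N = nt$ and a constant $u$ gives the first claim $\mE[\norm{\mSigma_{\X}}] = O(1)$ as soon as $nt \gtrsim d$.

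For the second bound, each per-task empirical covariance $\mSigma_{\X_j}$ is built from $n$ samples in $\mR^d$, so the inequality applies with $p = d$, $N = n$. To control $\max_{j \in [t]} \norm{\mSigma_{\X_j}}$ I union bound over the $t$ tasks: taking $u \asymp \log t + s$ makes the total failure probability at most $2t\,e^{-(\log t + s)} = 2e^{-s}$, so with probability $1 - 2e^{-s}$ every deviation is simultaneously $\lesssim \sqrt{(d+\log t + s)/n} + (d + \log t + s)/n$. Under $n \gtrsim d + \log t$ this is $O(1)$, and integrating the tail of $\max_j \norm{\mSigma_{\X_j}}$ in $s$ yields $\mE[\max_{j}\norm{\mSigma_{\X_j}}] = O(1)$. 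The $\log t$ overhead in the sample-size requirement is exactly the price paid for the union bound.

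For the third bound, the crucial observation is that $\B^\top \mSigma_{\X_0}\B = \frac{1}{m}\sum_{i=1}^m (\B^\top \x_{0i})(\B^\top \x_{0i})^\top$ is the empirical covariance of the projected covariates $\B^\top \x_{0i} \in \mR^r$. Because $\B$ has orthonormal columns, $\B^\top \x$ is mean-zero and, directly from \cref{def:sg}, sub-gaussian with covariance $\B^\top \mSigma \B$ satisfying $\norm{\B^\top \mSigma \B} \le \norm{\mSigma} \le C$ and sub-gaussian norm $O(1)$ uniformly in $\B$. Hence the same inequality applies in the reduced dimension $p = r$ with $N = m$, and since all constants are uniform over $\B \in \cH$ I can bound $\mE[\norm{\B^\top \mSigma_{\X_0}\B}]$ and only then take the maximum over $\B$ outside the expectation, obtaining $O(1)$ once $m \gtrsim r$. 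No union bound over $\B$ is needed precisely because the maximum sits outside the expectation.

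The main obstacle I anticipate is bookkeeping rather than conceptual. The two delicate points are (i) arranging the union bound in the second part so that the $\log t$ term enters only the sample-complexity condition $n \gtrsim d + \log t$ and not the final bound, and (ii) converting the high-probability guarantees into the stated expectation bounds by integrating the exponential tails. Verifying that orthonormal projection preserves the sub-gaussian constant in the third part—read off directly from \cref{def:sg}—is the one step that uses the specific structure of the problem rather than a black-box invocation of the concentration inequality.
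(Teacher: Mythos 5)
Your proposal is correct and follows essentially the same route as the paper: concentration of the empirical covariance of sub-gaussian vectors (Vershynin/Wainwright), a union bound over the $t$ tasks to absorb the $\log t$ into the sample-size condition, tail integration to convert the high-probability bounds into expectation bounds, and the observation that $\B^\top\x$ is an $r$-dimensional $O(1)$-sub-gaussian vector with the maximum over $\B$ taken outside the expectation. The only cosmetic difference is that the paper invokes the MGF/Chernoff form of the concentration inequality for the union-bound step, whereas you use the equivalent tail-probability form.
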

\begin{proof}
	All of these statements essentially follow by integrating a tail bound and applying the triangle inequality. For the first statement since $\mE[\norm{\mSigma_{\X}}] = \mE[\norm{\mSigma_{\X}-\mSigma}] + \norm{\mSigma} \leq O(1)$, under the conditions $nt \gtrsim d$, the result follows directly by \citet[Theorem 4.7.3]{vershynin2010introduction}.
	
	For the second by \citet[Theorem 6.5]{wainwright2019high}, $\mE[\exp(\lambda \norm{\mSigma-\mSigma}) \leq \exp(c_0 (\lambda^2/N) + 4d)]$ for all $\abs{\lambda} \leq \frac{N}{c_2}$, given a sample covariance averaged over $N$ datapoints, and universal constants $c_1, c_2$. So using a union bound alongside a tail integration since the data is i.i.d. across tasks,
	\begin{align*}
		& \mE[\max_{j \in [t]} \norm{\mSigma_{\X_j}-\mSigma}] \leq \int_{0}^{\infty} \min(1, t \Pr[\norm{\mSigma_{\X_{1}}-\mSigma}>\delta]) d \delta \leq \int_{0}^{\infty} \min(1, \exp(c_0 (\lambda^2/n) + 4d+\log t -\lambda \delta)] \leq \\
		& \int_{0}^{\infty} \min(1, \exp(4d+\log t) \cdot \exp(-c_1 \cdot n \min(\delta^2, \delta)) d \delta \leq O \left(\sqrt{\frac{d+\log t}{n}} + \frac{d+\log t}{n} \right) \leq O(1),
	\end{align*}
	via a Chernoff argument. The final inequality follows by bounding the tail integral and using the precondition $n \gtrsim d+\log t$. Centering the expectation and using the triangle inequality gives the conclusion.
	
	For the last statement the crucial observation that allows the condition $m \gtrsim r$, is that $\B^\top \x_{0i}$, for all $i \in [m]$, is by definition an $r$-dimensional $O(1)$-sub-Gaussian random vector since $\B$ is an orthonormal projection matrix. Thus an identical argument to the first statement gives the result.
\end{proof}
\subsection{Deep Neural Network Regression}
\label{app:dnn}

We first begin by assembling the results necessary to bound the Gaussian complexity of our deep neural network example. To begin we introduce a representative result 
which bounds the empirical Rademacher complexity of a deep neural network. 
\begin{theorem}[Theorem 2 adapted from \citet{golowich2017size}]
\label{thm:nn_bound}
	Let $\sigma$ be a 1-Lipschitz activation function with $\sigma(0)=0$, applied element-wise. Let $\cN$ be the class of real-valued networks of depth $K$ over the domain $\cX$ with bounded data $\norm{\x_{i}} \leq D$ for $i \in [n]$, where $\norm{\W_k}_{1, \infty} \leq M(k)$ for all $k \in [K]$. Then,
	\begin{align*}
		\EMPRad{\X}{\cN} \leq \left( \frac{2}{n} \Pi_{k=1}^K M(k) \right) \sqrt{(K+1+\log d) \cdot \max_{j \in [d]} \sum_{i=1}^n \x_{i,j}^2} \leq \frac{2D \sqrt{K+1+\log d} \cdot \Pi_{k=1}^K M(k)}{\sqrt{n}}.
	\end{align*}
	where $\x_{i, j}$ denotes the $j$-th coordinate of the vector $\x_i$ and $\X$ is an $n \times d$ design matrix (with $n$ datapoints).
\end{theorem}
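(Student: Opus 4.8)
The plan is to reproduce the peeling argument of \citet{golowich2017size}, whose central device is to control the \emph{moment generating function} (MGF) of the supremum rather than the supremum itself, so that stripping a layer costs only a constant factor instead of the factor of two that a naive Ledoux--Talagrand contraction incurs at every level and that would compound into a hopeless $2^K$ dependence on depth. First I would pass to the exponential domain: by Jensen's inequality, for any $\lambda>0$,
\[
\exp\!\left(\lambda n\,\EMPRad{\X}{\cN}\right)
= \exp\!\left(\lambda\, \mE_{\bepsilon} \sup_{f \in \cN} \sum_{i=1}^n \epsilon_i f(\x_i)\right)
\le \mE_{\bepsilon} \exp\!\left(\lambda \sup_{f \in \cN} \sum_{i=1}^n \epsilon_i f(\x_i)\right),
\]
so that it suffices to bound the logarithm of the right-hand side and then optimize over $\lambda$ at the very end.

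Next I would peel the layers from the top. Writing $f = \W_K \sigma_{K-1}(g)$ for a depth-$(K-1)$ vector-valued subnetwork $g$, the constraint $\norm{\W_K}_{1,\infty} \le M(K)$ (a single-row $\ell_1$ bound in the real-valued output case) converts the inner supremum into $M(K)\,\sup_g \max_{j}\bigl|\sum_i \epsilon_i \sigma(g(\x_i))_j\bigr|$. Inside the expectation I would replace the maximum of absolute values by a sum of exponentials, using $\exp(\lambda \max_j |a_j|) \le \sum_j \bigl(e^{\lambda a_j} + e^{-\lambda a_j}\bigr)$, which introduces a factor proportional to the width $d$ but, crucially, only additively once we take the final logarithm. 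I would then strip the activation using the contraction property \emph{in the MGF domain} — the lemma of \citet{golowich2017size} stating that, because $\sigma$ is $1$-Lipschitz with $\sigma(0)=0$, the exponential moment of a $\sigma$-composed Rademacher sum is at most twice that of the un-composed sum. Iterating this peel-and-contract step through all $K$ layers, each time invoking $\norm{\W_k}_{1,\infty}\le M(k)$, accumulates the product $\Pi_{k=1}^K M(k)$ together with $K$ contraction factors of $2$ and the width factor, all of which land only inside the logarithm.

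The base case of the recursion is the bottom linear layer acting on the data, where the relevant quantity reduces to $\max_{j\in[d]}\bigl|\sum_i \epsilon_i \x_{i,j}\bigr|$, whose MGF is controlled by the sub-Gaussianity of Rademacher sums and contributes a term of order $\max_{j}\sum_i \x_{i,j}^2$. Collecting all factors gives, for every $\lambda>0$, a bound of the form
\[
n\,\EMPRad{\X}{\cN}
\le \frac{K\log 2 + \log(2d)}{\lambda}
+ \frac{\lambda}{2}\Bigl(\Pi_{k=1}^K M(k)\Bigr)^{2}\, \max_{j\in[d]}\sum_{i=1}^n \x_{i,j}^2 .
\]
Choosing $\lambda$ to balance the two terms produces the factor $\sqrt{K+1+\log d}$ and yields the first inequality after dividing by $n$. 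The second inequality then follows from the crude estimate $\max_{j\in[d]}\sum_{i=1}^n \x_{i,j}^2 \le \sum_{i=1}^n \norm{\x_i}^2 \le n D^2$.

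The main obstacle — and the entire reason for working with the MGF — is the depth dependence. A direct layer-by-layer application of the contraction principle multiplies a factor of $2$ at each of the $K$ layers, giving a catastrophic $2^K$ blow-up. Deferring these factors into the exponent makes them appear additively as an $O(K)$ budget, and optimizing the single free parameter $\lambda$ at the end converts this budget into the mild $\sqrt{K}$ factor. The technical crux is therefore verifying that the contraction step is valid at the level of moment generating functions rather than merely in expectation; I would invoke this lemma directly from \citet{golowich2017size}.
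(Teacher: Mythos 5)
The paper does not actually prove this statement---it is imported verbatim from \citet{golowich2017size} (their Theorem 2)---and your proposal correctly reconstructs the argument of that source: Jensen's inequality to pass to the moment generating function, layer-by-layer peeling under the $\norm{\cdot}_{1,\infty}$ constraint with the contraction step performed at the level of exponential moments so that each layer costs only an additive $\log 2$ rather than a multiplicative $2$, a sub-Gaussian MGF bound at the input layer contributing $\max_{j}\sum_{i}\x_{i,j}^2$ together with a $\log(2d)$ term, and a final optimization over $\lambda$ that converts the $K\log 2+\log(2d)$ budget into the $\sqrt{K+1+\log d}$ factor (with $2(K\log 2+\log(2d))\le 4(K+1+\log d)$ supplying the leading constant $2$, and $\max_j\sum_i \x_{i,j}^2\le nD^2$ giving the second inequality). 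The one point worth stating precisely is that the maximum over coordinates must be carried through all $K$ peeling steps and converted to a sum of exponentials only at the input layer; performing that conversion at every layer would accumulate a width factor per layer and degrade the bound to roughly $\sqrt{K\log d}$, but your final display retains a single $\log(2d)$ term, so your accounting matches the cited proof.
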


With this result in hand we proceed to bound the Gaussian complexities for our deep neural network and prove \cref{thm:nn_regression}. Note that we make use of the result $\EMPRad{\X}{\cN} \leq \sqrt{\frac{\pi}{2}} \cdot \EMPGaus{\X}{\cN}$ and that $\EMPGaus{\X}{\cN} \leq 2 \sqrt{\log N} \cdot \EMPRad{\X}{\cN}$ for any function class $\cN$ when $\X$ has $N$ datapoints \citep[p. 97]{ledoux2013probability}.

\begin{proof}[Proof of \cref{thm:nn_regression}]
First note due to the task normalization conditions we can choose $c_1, c_2$ sufficiently large so that the realizability  assumption in \cref{assump:realizable} is satisfied--in particular, we can assume that $c_2$ is chosen large enough to contain parameter $\balpha^{\star}_0$ and $c_1$ large enough so that $c_1 M(K)^2 \geq c_1 c_3^2$ is larger then the norms of the parameters $\balphastarj$  for $j \in [t] $. 

Next recall that under the conditions of the result we can use \cref{lem:square_lin_diversity} to verify the task diversity condition is satisfied with parameters $(\tnu, 0)$ with $\tnu = \sigma_{r}(\A^\top \A/t) > 0$. In particular under the conditions of the theorem we can verify the well-conditioning of the feature representation with $c = \Omega(1)$ which follows by definition of the set $\cH$ and we can see that $\norm{\mE_{\x} [\hh(\x) \hstar(\x)^\top]}_2 \leq \mE_{\x} [\norm{\hh(\x)} \norm{\hstar(\x)}] \leq O(M(K)^2)$ using the norm bound from \cref{lem:nnnormbounds}. Hence under this setting we can choose $c_1$ sufficiently large so that $c_1 M(K)^2 \gtrsim \frac{M(K)^2}{c} c_2$. The condition $M(K) \gtrsim 1$ in the theorem statement is simply used to clean up the final bound.

In order to instantiate \cref{cor:ltlguarantee} we begin by bounding each of the complexity terms in the expression. First,
	\begin{itemize}[leftmargin=.5cm]
	\item For the feature learning complexity in the training phase we leverage \cref{thm:nn_bound} from \citet{golowich2017size} (which holds for scalar-valued outputs). For convenience let $\text{nn} = \frac{2D \sqrt{K+1+\log d} \cdot \Pi_{k=1}^{K} M(k)}{\sqrt{nt}}$. To bound this term we simply pull the summation over the rank $r$ outside the complexity and apply \cref{thm:nn_bound}, so
	\begin{align*}
		& \EMPGaus{\X}{\cH}=\frac{1}{nt} \mE[ \sup_{\cW_K} \sum_{l=1}^r \sum_{j=1}^{t} \sum_{i=1}^{n} g_{kji} \h_k(\x_{ji})] \leq \sum_{k=1}^r \EMPGaus{\X}{\h_k(\x_{ji})} \leq \log(nt) \cdot \sum_{k=1}^r \EMPRad{\X}{\h_k(\x_{ji})} \leq \\
		& \log(nt) \cdot r \cdot \text{nn} 
	\end{align*}
	since under the weight norm constraints (i.e. the max $\ell_1$ row norms are bounded) each component of the feature can be identically bounded. This immediately implies the population Gaussian complexity bound as the expectation over $\X$ is trivial.
	Further by definition the class $\cF$ as linear maps with parameters $\norm{\balpha}_2 \leq M(K)^2$ we obtain that $L(\cF) = O(M(K)^2)$.
	 \item For the complexity of learning $\cF$ in the training phase we obtain,
	\begin{align*}
		&  \EMPGaus{\X_j}{\cF} = \frac{1}{n} \mE_{\g}[\sup_{\balpha \in \cF} \sum_{i=1}^n g_{ji} \balpha^\top \h(\x_{ji}) ] = O \left( \frac{M(K)^2}{n} \mE_{\g}[\norm{ \sum_{i=1}^n g_{ji} \h(\x_{ji})}] \right)  \\
		& \leq O \left(\frac{M(K)^2}{n} \sqrt{\sum_{i=1}^n \norm{\h(\x_{ji})}^2} \right) \leq  O \left( \frac{M(K)^2}{\sqrt{n}} \max_i \norm{\h(\x_{ji})} \right).
	\end{align*}
	Now by appealing to the norm bounds on the feature map from \cref{lem:nnnormbounds} we have that $\max_{\h \in \cH} \max_{\X_j} \max_i \norm{\h(\x_{ji})} \lesssim M(K)$. Hence in conclusion we obtain the bound,
	\begin{align*}
		\MAXGaus{n}{\cF} \leq O \left( \frac{M(K)^3}{\sqrt{n}} \right)
	\end{align*}
	since the expectation is once again trivial.
	\item A nearly identical argument shows the complexity of learning $\cF$ in the testing phase is,
		\begin{align*}
			\EMPGaus{\X_0}{\cF} = \frac{1}{m} \mE_{\g} \left[ \sup_{\balpha : \norm{\balpha} \leq c_1} \sum_{i=1}^{m} g_i \balpha^\top \h(\x_{(0)i}) \right] \leq \frac{c_1 M(K)^3 }{\sqrt{m}} 
		\end{align*} 
		from which the conclusion follows.
	\end{itemize}
	
	Finally we verify that \cref{assump:regularity} holds so as to use \cref{cor:ltlguarantee} to instantiate the end-to-end guarantee. The boundedness parameter is,
	\begin{align*}
		D_{\cX} \leq O(M(K)^3)
	\end{align*}
	by \cref{lem:nnnormbounds} since it must be instantiated with $\balpha \in \cF$. For the $\ell_2$ loss bounds, $\ell(\eta; y) = (y-\eta)^2$. Since $\nabla_{\eta} \ell(\eta; y) = 2(y-\eta) \leq O(N+\abs{\eta}) = O(M(K)^3)$ where $\abs{\eta} \leq \abs{\balpha^\top \h(\x)} \leq O(M(K)^3)$ for $\balpha \in \cF$, $\h \in \cH$  by \cref{lem:nnnormbounds} and $N=O(1)$. So it follows the loss is Lipschitz with $L=O(M(K)^3)$. Moreover by an analogous argument, $\abs{\ell(\eta; y)} \leq O(M(K)^6)$ so it follows the loss is uniformly bounded with parameter $B=O(M(K)^6)$.
%
	
	Assembling the previous bounds shows the transfer learning risk is bounded by.
	\begin{align*}
		&  \lesssim \frac{L}{\tnu} \cdot \left( \log(nt) \cdot \left[ \log(nt) \cdot r \cdot M(K)^2 \cdot \text{nn} + \frac{M(K)^3}{\sqrt{n}} \right] \right) +   \frac{L M(K)^3}{\sqrt{m}}   \\
		& + \left(\frac{1}{\tnu} \cdot \max \left(L \cdot \frac{M(K)^3}{(nt)^2}, B \sqrt{\frac{\log(1/\delta)}{nt}} \right)+ B \sqrt{\frac{\log(1/\delta)}{m}}\right).	
	\end{align*}
	where $\text{nn} = \frac{2D \sqrt{K+1+\log d} \cdot \Pi_{k=1}^{K} M(k)}{\sqrt{nt}}$.
	Under the conditions of the result, the risk simplifies as in the theorem statement.
\end{proof}

We now state a simple result which allows us to bound the suprema of the empirical $\ell_2$ norm (i.e. the $D_{\bX}$ parameter in \cref{thm:metatrain}) and activation outputs for various neural networks.

\begin{lemma}
	Let $\hh(\x)$ be a vector-valued neural network of depth $K$ taking the form in \cref{eq:nn_feature} with each $f_j \equiv \balpha_j$ satisfying $\norm{\balpha_j} \leq A$ with bounded data $\norm{\x} \leq D$. Then the boundedness parameter in the setting of \cref{thm:metatrain} satisfies,
	\begin{align*}
			D_{\cX} \lesssim A D \cdot \Pi_{k=1}^{K} \norm{\W_{k}}_2.
	\end{align*}
	If we further assume that $\sigma(z) = \frac{e^z-e^{-z}}{e^z+e^{-z}}$ which is centered and $1$-Lipschitz (i.e. the tanh activation function), then we obtain the further bounds that,
	\begin{align*}
		\norm{\h(\x)} \leq \norm{\W_{K}}_{\infty \to 2} 
	\end{align*}
	and 
	\begin{align*}
		D_{\cX} \lesssim A \cdot \norm{\W_{K}}_{\infty \to 2} 
	\end{align*}
	which holds without requiring boundedness of $\x$. Note $\norm{\W_{K}}_{\infty \to 2}$ is the induced $\infty$ to $2$ operator norm. 
	\label{lem:nnnormbounds}
\end{lemma}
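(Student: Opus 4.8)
The plan is to propagate norm bounds layer by layer through the network. For the first (general) bound, I would begin by recording the elementary fact that since the $\tanh$ activation is $1$-Lipschitz and centered (i.e.\ $\sigma(0)=0$), it is norm-contracting: acting coordinate-wise, for any vector $\v$ we have $\norm{\sigma(\v)} = \norm{\sigma(\v)-\sigma(\mathbf{0})} \le \norm{\v}$. Writing $\z_0 = \x$ and $\z_k = \sigma(\W_k \z_{k-1})$ for $k \in [K-1]$, so that $\h(\x) = \W_K \z_{K-1}$, a straightforward induction then gives $\norm{\z_k} \le \norm{\W_k}_2 \norm{\z_{k-1}} \le D \cdot \Pi_{l=1}^k \norm{\W_l}_2$, using $\norm{\z_0} = \norm{\x} \le D$ at the base and submultiplicativity of the spectral norm at each step. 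Applying the final linear layer yields $\norm{\h(\x)} \le D \cdot \Pi_{k=1}^K \norm{\W_k}_2$. A single application of Cauchy--Schwarz, $|f \circ \h(\x)| = |\balpha^\top \h(\x)| \le \norm{\balpha}\,\norm{\h(\x)} \le A \norm{\h(\x)}$, then delivers $D_{\cX} \lesssim A D \cdot \Pi_{k=1}^K \norm{\W_k}_2$ after taking the supremum over $\x \in \cX$ and the admissible task parameters.

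For the sharper $\tanh$-specific bounds, I would exploit that $\tanh$ is not merely Lipschitz but uniformly bounded: each coordinate satisfies $|\tanh(z)| \le 1$, so the penultimate activation obeys $\norm{\z_{K-1}}_\infty \le 1$ irrespective of its pre-activation, and hence irrespective of $\x$. The key step is then to recognize that the relevant operator norm on the last layer is exactly the induced $\infty \to 2$ norm: by definition $\norm{\W_K}_{\infty \to 2} = \sup_{\norm{\v}_\infty \le 1} \norm{\W_K \v}_2$, so that $\norm{\h(\x)} = \norm{\W_K \z_{K-1}}_2 \le \norm{\W_K}_{\infty \to 2}\,\norm{\z_{K-1}}_\infty \le \norm{\W_K}_{\infty \to 2}$. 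This proves the stated bound on $\norm{\h(\x)}$ with no dependence on $D$, and a further Cauchy--Schwarz step as above gives $D_{\cX} \lesssim A \norm{\W_K}_{\infty \to 2}$.

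There is no substantive obstacle in this argument; it is essentially bookkeeping. The only points requiring a little care are (i) correctly invoking $\sigma(0)=0$ to turn the Lipschitz property into the contraction $\norm{\sigma(\v)} \le \norm{\v}$ rather than a bound that picks up an additive constant, and (ii) matching the per-coordinate sup-norm bound $\norm{\z_{K-1}}_\infty \le 1$ to the definition of the $\infty \to 2$ operator norm, which is precisely why this particular operator norm---rather than the spectral norm---appears in the final-layer term. Everything else follows from submultiplicativity of the spectral norm and Cauchy--Schwarz.
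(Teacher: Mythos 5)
Your argument is correct and follows essentially the same route as the paper's proof: a layer-by-layer norm propagation using the $1$-Lipschitz, zero-centered activation and submultiplicativity of the spectral norm, followed by Cauchy--Schwarz for the $\balpha$ factor, and for the $\tanh$-specific bound the observation that $\norm{\z_{K-1}}_\infty \le 1$ combined with the definition of the $\infty \to 2$ operator norm. The only cosmetic difference is that the paper carries squared norms through the induction and takes square roots at the end, whereas you work with the norms directly.
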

\begin{proof}	
For the purposes of induction let $\r_k(\cdot)$ denote the vector-valued output of the $k$th layer for $k \in [K]$. First note that the bound
	\begin{align*}
		& D_{\cX} \lesssim \sup_{\balpha, \h, \x} (\balpha^\top \h(\x))^2 \leq \sup_{\W_k, \x} A^2 \norm{\r_{K}}^2 
	\end{align*}
	Now, for the inductive step, $\norm{\r_{K}}^2 = \norm{\W_K \sigma(\W_{k-1} \r_{K-1})}^2 \leq \norm{\W_K}_2^2 \norm{\sigma(\W_{K-1} \r_{K-1})}^2 \leq \norm{\W_K}_2^2 \norm{\W_{K-1} \r_{K-1}}^2 \leq \norm{\W_K}_2^2 \norm{\W_{K-1}}_2^2 \norm{\r_{K-1}}^2$ where the first inequality follows because $\sigma(\cdot)$ is element-wise 1-Lipschitz and zero-centered.	
	 Recursively applying this inequality to the base case where $\r_0 = \x$ gives the conclusion after taking square roots.
	 
	 If we further assume that $\sigma(z) = \frac{e^z-e^{-z}}{e^z+e^{-z}}$ (which is centered and $1$-Lipschitz) then we can obtain the following result by simply bounding the last layer by noting that $\norm{\r_{K-1}}_{\infty} \leq 1$. Then,
	 \begin{align*}
	 	\norm{\h(\x)}^2 = \norm{\r_{K}}_2^2 = \norm{\W_{K} \r_{K-1}}_2^2 \leq \norm{\W_{K}}_{\infty \to 2}^2
	 \end{align*}
	 where $\norm{\W_{K}}_{\infty \to 2}$ is the induced $\infty$ to $2$ operator norm 
\end{proof}

We now turn to proving a task diversity lower bound applicable to general $\ell_2$ regression with general feature maps $\h(\cdot)$ under the assumptions of the $\Pr_{y | \x}$ of the generative model specified in \cref{eq:NN_class}. As our result holds only requiring $\fjstar \equiv \balphastarj$ and applies to more then neural network features we define some generic notation. 

We assume the data generating model takes the form, 
\begin{align}
y_{ji} = (\balphastarj)^\top \hstar(\x_{ji})  + \eta_{ji} \text{ for } j \in \{1, \hdots, t\}, i \in \{1, \hdots, n\} 
\label{eq:genfeatureregression}
\end{align}
for $\eta_{ji}$ with bounded second moments and independent of $\x_{ji}$.
Here the shared feature representation $\hstar(\cdot) \in \mR^{r}$ is given by a generic function. In our generic framework we can identify $\fjstar \equiv \balphastarj$ for $j \in \{1, \hdots, t \}$. As before we define the population task diversity matrix as $\A = (\balphastar_1, \hdots, \balphastar_t)^\top \in \mR^{t \times r}$, $\C=\A^\top \A/t$ and $\tnu= \sigma_r(\frac{\A^\top \A}{t})$. 
	Given two feature representations $\hh(\cdot)$ and $\hstar(\cdot)$, we can define their population covariance as,
	\begin{align*}
		\Lambda(\hh, \hstar) = 
		\begin{bmatrix} 
		\mE_{\x}[\hh(\x) \hh(\x)^\top] & \mE_{\x}[\hh(\x)\hstar(\x)^\top] \\
		\mE_{\x}[\hstar(\x)\hh(\x)^\top] & \mE_{\x}[\hstar(\x)\hstar(\x)^\top]
		\end{bmatrix} \equiv 
		\begin{bmatrix} 
		\F_{\hh \hh} & \F_{\hh \hstar} \\
		\F_{\hstar \hh} & \F_{\hstar \hstar}
		\end{bmatrix} \succeq 0
	\end{align*}
	and the generalized Schur complement of the representation of $\hstar$ with respect to $\hh$ as, 
	\begin{align*}
		\Lambda_{Sc}(\hh, \hstar) = \F_{\hstar \hstar}-\F_{\hstar \hh} (\F_{\hh \hh})^{\dagger} \F_{\hh \hstar} \succeq 0.
	\end{align*}
We now instantiate the definition of task diversity in this setting. 
We assume that the universal constants $c_2$ and $c_1$ are large-enough such that $\cF$ and $\cF_0$ contain the true parameters $\balpha_0^{\star}$ and $\balpha_j^{\star}$ respectively for the following.
\begin{lemma}
	Consider the $\Pr_{y | \x}(\cdot)$ regression model defined in \cref{eq:genfeatureregression} with the loss function $\ell(\cdot, \cdot)$ taken as the squared $\ell_2$ loss and let  \cref{assump:linear_diverse} hold. Then for this conditional generative model with $\cF = \{ \balpha : \balpha \in \mR^r \}$	 and $\cF_0 = \{ \balpha : \norm{\balpha}_2 \leq c_2 \}$ the model is $(\frac{\tnu}{c_2}, 0)$ diverse in the sense of \cref{def:diversity} and, 
		\begin{align*}
			\dist{\hh}{\hstar} = c_2 \cdot \sigma_{1}(\Lambda_{sc}(\hh, \hstar)); \quad \avdist{\fstar}{\hh}{\hstar} = \tr(\Lambda_{sc}(\hh, \hstar) \C).
		\end{align*}
	Moreover, if we assume the set of feature representations $\hh \in \cH$ in the infima over $\hh$ are well-conditioned in the sense that $\sigma_r(\mE_{\x}[\hh(\x) \hh(\x)^\top]) \geq c > 0$ and $\norm{\mE_{\x} [\hh(\x) \hstar(\x)^\top]}_2 \leq C$, then if
	$\cF = \{ \balpha : \norm{\balpha} \leq c_1 \}$, $\cF_0 = \{ \balpha : \norm{\balpha}_2 \leq c_2 \}$ and $c_1 \geq \frac{C}{c} c_2$, the same conclusions hold for sufficiently large constants $c_1, c_2$.
	\label{lem:square_lin_diversity}
\end{lemma}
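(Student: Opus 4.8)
The plan is to exploit the fact that, for the squared loss under the well-specified model \eqref{eq:genfeatureregression}, both representation differences collapse to expected squared prediction gaps in which the additive noise cancels, and the inner infimum over the linear task map is an ordinary population least-squares problem whose value is a Schur-complement quadratic form. Concretely, for data $(\x,y)\sim\Pr_{f_0\circ\hstar}$ with $f_0(\z)=\balpha_0^\top\z$ we have $y=\balpha_0^\top\hstar(\x)+\eta$ with $\eta$ independent of $\x$ and mean zero, so $\mE_{\x,y}[(y-\balpha'^\top\hh(\x))^2-(y-\balpha_0^\top\hstar(\x))^2]=\mE_{\x}[(\balpha'^\top\hh(\x)-\balpha_0^\top\hstar(\x))^2]$, since the noise variance appears in both terms and drops out. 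Expanding this quadratic in $\balpha'$ in terms of the blocks of $\Lambda(\hh,\hstar)$ and minimizing (the range condition guaranteed by $\Lambda(\hh,\hstar)\succeq0$ makes the pseudoinverse solution $\balpha'_\star=\F_{\hh\hh}^\dagger\F_{\hh\hstar}\balpha_0$ a valid minimizer) yields $\inf_{\balpha'}\mE_{\x}[(\balpha'^\top\hh(\x)-\balpha_0^\top\hstar(\x))^2]=\balpha_0^\top\Lambda_{Sc}(\hh,\hstar)\balpha_0$, precisely the Schur complement from the preamble.

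Given this identity, I would next read off the two formulas. For the worst-case difference, taking the supremum of the PSD quadratic form $\balpha_0^\top\Lambda_{Sc}(\hh,\hstar)\balpha_0$ over $\{\norm{\balpha_0}\le c_2\}$ is attained along the leading eigenvector, giving $\dist{\hh}{\hstar}=c_2\,\sigma_1(\Lambda_{Sc}(\hh,\hstar))$. For the task-averaged difference, substituting $\balpha_0=\balphastarj$ and averaging, the trace trick $\tfrac1t\sum_j(\balphastarj)^\top\Lambda_{Sc}\balphastarj=\tr\big(\Lambda_{Sc}\cdot\tfrac1t\sum_j\balphastarj(\balphastarj)^\top\big)=\tr(\Lambda_{Sc}(\hh,\hstar)\,\C)$ gives the second stated formula, using $\C=\A^\top\A/t=\tfrac1t\sum_j\balphastarj(\balphastarj)^\top$.

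The diversity bound then reduces to a single spectral inequality relating the averaged (trace) quantity to the worst-case (operator-norm) quantity. Since $\C\succeq\tnu\,\I$ by \cref{assump:linear_diverse} and $\Lambda_{Sc}\succeq0$, I would write $\tr(\Lambda_{Sc}\C)=\tr(\Lambda_{Sc}^{1/2}\C\,\Lambda_{Sc}^{1/2})\ge\tnu\,\tr(\Lambda_{Sc})\ge\tnu\,\sigma_1(\Lambda_{Sc})$, so that $\sigma_1(\Lambda_{Sc})\le\avdist{\fstar}{\hh}{\hstar}/\tnu$. Multiplying by $c_2$ and invoking the formula for $\dist{\hh}{\hstar}$ yields $\dist{\hh}{\hstar}\le\avdist{\fstar}{\hh}{\hstar}/(\tnu/c_2)$, i.e. $(\tnu/c_2,0)$-diversity in the sense of \cref{def:diversity}; this holds uniformly over $\hh\in\cH$ because no property of $\hh$ is used beyond $\Lambda_{Sc}(\hh,\hstar)\succeq0$.

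Finally, to upgrade from the unconstrained class $\cF=\{\balpha\in\mR^r\}$ to the norm-constrained $\cF=\{\norm{\balpha}\le c_1\}$, I would verify that the unconstrained minimizer already lies in the feasible ball, so the constrained and unconstrained infima coincide and all formulas persist. Under the well-conditioning hypotheses $\sigma_r(\F_{\hh\hh})\ge c$ and $\norm{\F_{\hh\hstar}}\le C$ one has $\norm{\balpha'_\star}=\norm{\F_{\hh\hh}^{-1}\F_{\hh\hstar}\balpha_0}\le (C/c)\norm{\balpha_0}\le (C/c)c_2$, so whenever $c_1\ge (C/c)c_2$ the unconstrained solution is feasible; by Lagrangian duality for this convex quadratic objective with a convex quadratic (ball) constraint, the constrained minimizer equals the unconstrained one for every relevant $\balpha_0\in\{\balphastarj\}\cup\{\balpha_0:\norm{\balpha_0}\le c_2\}$. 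I expect this feasibility/duality step---checking that the minimizer stays inside the ball uniformly in $\balpha_0$ and in $\hh\in\cH$---to be the main obstacle, together with the care needed for the pseudoinverse range conditions in the rank-deficient unconstrained case; the remaining algebra (noise cancellation, the trace trick, and the PSD trace inequality) is routine.
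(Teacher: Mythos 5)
Your proposal is correct and follows essentially the same route as the paper's proof: noise cancellation under the well-specified model, partial minimization of the convex quadratic to obtain the Schur-complement form, the trace identity for the task average, a spectral inequality relating $\tr(\Lambda_{Sc}\C)$ to $\sigma_1(\Lambda_{Sc})$ (the paper invokes the Von Neumann trace inequality where you use $\C \succeq \tnu \I$ directly, an equivalent step), and the same Lagrangian-duality/feasibility argument for the norm-constrained class.
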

\begin{proof}
		We first bound the worst-case representation difference and then the task-averaged representation difference. For convenience we let $\v(\halpha, \balpha) = \begin{bmatrix} \halpha \\ \balpha \end{bmatrix}$ in the following. First, note that under the regression model defined with the squared $\ell_2$ loss we have that, 
		\begin{align*}
			\mE_{\x, y \sim f \circ \h(\x)} \Big \{ \ell(\hf \circ \hh(\x), y) - \ell(f \circ \h(\x), y) \Big \} = \mE_{\x}[\abs{\halpha^\top \hh(\x)-\balpha^\top \h(\x)}^2]
		\end{align*}

		\begin{itemize}[leftmargin=.5cm]
		\item the worst-case representation difference between two distinct feature representations $\hh$ and $\hstar$ becomes
		\begin{align*}
			& \dist{\hh}{\hstar} = \sup_{\balpha : \norm{\balpha}_2 \leq c_2} \inf_{\halpha} \mE_{\x} \abs{\hh(\x)^\top \halpha - \hstar(\x)^\top \balpha_0}^2 = \\
			& \sup_{\balpha_0 : \norm{\balpha_0}_2 \leq c_2} \inf_{\halpha} \{ \v(\halpha, -\balpha)^\top \Lambda(\hh, \hstar) \v(\halpha, -\balpha) \} =  \sup_{\balpha_0 : \norm{\balpha_0}_2 \leq c_2} \inf_{\halpha} \{ \v(\halpha, \balpha_0)^\top \Lambda(\hh, \hstar) \v(\halpha, \balpha_0) \}.
		\end{align*}
		Recognizing the inner infima as the partial minimization of a convex quadratic form (see for example \citet[Example 3.15, Appendix A.5.4]{boyd2004convex}), we find that,
		\begin{align*}
			\inf_{\halpha} \{ \v(\halpha, \balpha_0)^\top \Lambda(\hh, \hstar) \v(\halpha, \balpha_0) \} = \balpha_0^\top \Lambda_{sc}(\hh, \hstar) \balpha_0 		\end{align*}
		 Note that in order for the minimization be finite we require $\F_{\hh \hh} \succeq 0$ and that $\F_{\hh \hstar} \balpha \in \text{range}(\F_{\hh \hh})$ -- which are both satisfied here since they are constructed as expectations over appropriate rank-one operators. In this case, a sufficient condition for $\halpha$ to be an minimizer is that $\halpha = -\F_{\hh \hh}^{\dagger} \F_{\hh \hstar} \balpha$. Finally the suprema over $\balpha$ can be computed using the variational characterization of the singular values. 
		\begin{align*}
		\sup_{\balpha_0 : \norm{\balpha_0}_2 \leq c_2} \balpha_0^\top \Lambda_{sc}(\hh, \hstar) \balpha_0 = c_2 \cdot \sigma_{1}(\Lambda_{sc}(\hh, \hstar))
		\end{align*}
		
		\item The task-averaged representation difference can be computed by similar means
		\begin{align*}
			& \avdist{\fstar}{\hh}{\hstar} = \frac{1}{t} \sum_{j=1}^{t} \inf_{\halpha} \mE_{\x} \abs{\hh(\x)^\top \halpha-\hstar(\x)^\top \balphastarj}^2 = \frac{1}{t} \sum_{j=1}^t (\balphastarj)^\top \Lambda_{sc}(\hh, \hstar) \balphastarj \\
			& =  \tr(\Lambda_{sc}(\hh, \hstar) \C)
		\end{align*}
		Note that since $\Lambda_{sc}(\hh, \hstar) \succeq 0$, and $\C \succeq 0$, by a corollary of the Von-Neumann trace inequality, we have that $\tr(\Lambda_{sc}(\hh, \hstar) \C) \geq \sum_{i=1}^r \sigma_{i}(\Lambda_{sc}(\hh, \hstar)) \sigma_{r-i+1}(\C) \geq \tr(\Lambda_{sc}(\hh, \hstar)) \sigma_{r}(\C) \geq \sigma_{1}(\Lambda_{sc}(\hh, \hstar)) \cdot \sigma_{r}(\C)$.
		\end{itemize}
		Combining the above two results we can immediately conclude that, 
		\begin{align*}
			\dist{\hh}{\hstar} = c_2 \sigma_{1} (\Lambda_{sc}(\hh, \hstar)) \leq \frac{1}{\tnu/c_2} \avdist{\fstar}{\hh}{\hstar}
		\end{align*}
	
	The second conclusion uses Lagrangian duality for the infima in both optimization problems for the worst-case and task-averaged representation differences. In particular, since the $\inf_{\halpha} \mE_{\x} \abs{\hh(\x)^\top \halpha-\hstar(\x)^\top \balpha}^2$ is a strongly-convex under the well-conditioned assumption, we have its unique minimizer is given by $\halpha = -(\F_{\hh \hh})^{-1} \F_{\hh \hstar} \balpha$; hence $\norm{\halpha} \leq \frac{C}{c} \norm{\balpha}$. Thus, if we consider the convex quadratically-constrained quadratic optimization problem $\inf_{\halpha : \norm{\halpha}_2 \leq c_0} \mE_{\x} \abs{\hh(\x)^\top \halpha-\hstar(\x)^\top \balpha}^2$ and $c_0 \geq \frac{C}{c} \norm{\balpha}$ the constraint is inactive, and the constrained optimization problem is equivalent to the unconstrained optimization problem. Hence for the choices of $\cF = \{ \balpha : \norm{\balpha} \leq c_1 \}$ and $\cF_0 = \{ \balpha : \norm{\balpha} \leq c_2 \}$, since all the $\norm{\balphastarj} \leq O(1)$ for $j \in [t] \cup \{0\}$, the infima in both the computation of the task-averaged distance and worst-case representation difference can be taken to be unconstrained for sufficiently large $c_1, c_2$. The second conclusion follows.
\end{proof}



\subsection{Index Models}
\label{app:singleindex}

We prove the general result which provides the end-to-end learning guarantee. Recall that we will use $\mSigma_{\X}$ to refer the sample covariance over the the training phase data.
\begin{proof}[Proof of \cref{thm:lipschitz}]
First by definition of the sets $\cF_0$ and $\cF$ the realizability assumption holds true. Next recall that under the conditions of the result we can use \cref{lem:regression_general_div} to verify the task diversity condition is satisfied with parameters $(\tnu, \tepsilon)$ for $\tnu \geq \frac{1}{t}$. Note in fact we have the stronger guarantee $\tnu \geq \frac{\norm{\v}_1}{\norm{\v}_{\infty}} \frac{1}{t}$ for $\v_j = \inf_{\hf \in \cF}\mE_{\x,  \eta}[L(\fjstar(\b^{\star}(\x))-\hf(\hat{\b}(\x))+\eta)]$. So if $\v$ is well spread-out given a particular learned representation $\hat{\b}$, the quantity $\tnu$ could be much larger in practice and the transfer more sample-efficient then the worst-case bound suggests.

	In order to instantiate \cref{cor:ltlguarantee} we begin by bounding each of the complexity terms in the expression. First,
	\begin{itemize}[leftmargin=.5cm]
	\item For the feature learning complexity in the training phase standard manipulations give,
	\begin{align*}
		 & \EMPGaus{\X}{\cH} \leq \frac{1}{nt} \mE \left[\sup_{\b : \norm{\b}_2 \leq W} \sum_{j=1}^t \sum_{i=1}^n g_{ji} \b^\top \x_{ji} \right] \leq \frac{W}{nt} \sqrt{\mE[\norm{\sum_{j=1}^t \sum_{i=1}^n g_{ji} \x_{ji}}_2^2]} \\
		 & \leq \frac{W}{nt} \sqrt{\sum_{j=1}^t \sum_{i=1}^n \norm{\x_{ji}}^2}  = \sqrt{\frac{W^2 \tr(\mSigma_{\X})}{nt}}
	\end{align*}
	Further by definition the class $\cF$ is $1$-Lipschitz so $L(\cF) = 1$. Taking expectations and using concavity of the $\sqrt{\cdot}$ yields the first term.
	 \item For the complexity of learning $\cF$ in the training phase we appeal to the Dudley entropy integral (see \citep[Theorem 5.22]{wainwright2019high}) and the metric entropy estimate from \citet[Lemma 6(i)]{kakade2011efficient}. First note that $N_{2, \b\X_j}(\cF, d_{2, \b\X_j}, \epsilon) \leq N(\cF, \norm{\cdot}_{\infty}, \epsilon)$, where the latter term refers to the covering number in the absolute sup-norm. By \citet[Lemma 6(i)]{kakade2011efficient}, $N(\cF, \norm{\cdot}_{\infty}, \epsilon) \leq \frac{1}{\epsilon} 2^{2DW/\epsilon}$. So  for all $0 \leq \epsilon \leq 1$,
	\begin{align*}
		&  \EMPGaus{\Z}{\cF} \lesssim 4 \epsilon + \frac{32}{\sqrt{n}} \int_{\epsilon/4}^{1} \sqrt{\log N(\cF, \norm{\cdot}_{\infty}, u)} d u \lesssim \epsilon+\frac{1}{\sqrt{n}} \int_{\epsilon/4}^{1} \sqrt{ \log \left( \frac{1}{u} \right) + \frac{2 WD }{u}} d u \\
		& \lesssim \epsilon + \frac{\sqrt{WD}}{\sqrt{n}} \int_{\epsilon/4}^{1} \frac{1}{u^{1/2}} du \lesssim \epsilon + \sqrt{\frac{WD}{n}} \cdot (2-\epsilon) \leq  O \left( \sqrt{\frac{WD}{n}} \right)
	\end{align*}
	using the inequality that $\log(\frac{1}{u}) \leq 2\frac{WD}{u}$ and taking $\epsilon=0$. This expression has no dependence on the input data or feature map so it immediately follows that,
	\begin{align*}
		\MAXGaus{n}{\cF} \leq O \left( \sqrt{\frac{WD}{n}} \right)
	\end{align*}
	\item A nearly identical argument shows the complexity of learning $\cF$ in the testing phase is,
		\begin{align*}
			\MAXGaus{m}{\cF} \leq O \left(\sqrt{\frac{WD}{m}} \right)
		\end{align*} 
	\end{itemize}
	
	Finally we verify that \cref{assump:regularity} holds so as to use \cref{cor:ltlguarantee} to instantiate the end-to-end guarantee. First the boundedness parameter becomes,
	\begin{align*}
		D_{\cX} = 1
	\end{align*}
	by definition since all the functions $f$ are bounded between $[0, 1]$.
	Again, simply by definition the $\ell_1$ norm is $1$-Lipschitz in its first coordinate uniformly over the choice of its second coordinate. Moreover as the noise $\eta_{ij}=O(1)$, the loss is uniformly bounded by $O(1)$ so $B=O(1)$.
	Assembling the previous bounds and simplifying shows the transfer learning risk is bounded by,
	\begin{align*}
		& \lesssim \frac{\log(nt)}{\tnu} \cdot \left( \sqrt{\frac{W^2 \mE_{\X}[\tr(\mSigma_{\X})]}{nt}} + \sqrt{\frac{WD}{n}} \right) + \sqrt{\frac{WD}{m}} + \frac{1}{(nt)^2} + \frac{1}{\tnu} \sqrt{\frac{\log(1/\delta)}{nt}} + \sqrt{\frac{\log(1/\delta)}{m}} + \tepsilon \\
	\end{align*}
	If we hide all logarithmic factors, we can verify the noise-terms are all higher-order to get the simplified statement in the lemma.
\end{proof}

We now introduce a generic bound to control the task diversity in a general setting. In the following recall $\cF_t = \text{conv} \{ f_1, \hdots, f_t \}$ where $f_j \in \cF$ for $j \in [t]$ where $\cF$ is a convex function class. Further, we define the $\tepsilon$-enlargement of $\cF_t$ with respect to the sup-norm by $\cF_{t, \tepsilon} = \{ f : \exists \tf \in \cF_t \text{ such that } \sup_{z} \abs{f(z)-f'(z)} \leq \tepsilon  \}$. We also assume the loss function $\ell(a, b) = L(a-b)$ for a positive, increasing function $L$ obeying a triangle inequality (i.e. a norm) for the following.

Our next results is generic and holds for all regression models of the form,
\begin{align}
    y = f(\h(\x))+\eta. \label{eq:gen_reg}
\end{align}
which encompasses the class of multi-index models.
\begin{lemma}
		In the aforementioned setting and consider the $\Pr_{y | \x}(\cdot)$ regression model defined in \cref{eq:gen_reg}. If $\cF$ is a convex function class, and $\cF_0= \cF_{t, \tepsilon}$ the model is $(\tnu, \tepsilon)$ diverse in the sense of \cref{def:diversity} for $\tnu \geq \frac{1}{t}$.
	\label{lem:regression_general_div}
\end{lemma}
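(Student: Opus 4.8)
The plan is to reduce both representation differences to a single scalar functional of the ``target'' task, use its convexity to pass from the convex hull $\cF_t$ to its vertices $f_1,\dots,f_t$ (which are exactly the tasks appearing in the task-averaged difference), and absorb the $\tepsilon$-enlargement through the triangle inequality for $L$. First I would set up notation: under \cref{eq:gen_reg} a sample from $\Pr_{g\circ\h}$ has $y=g(\h(\x))+\eta$ with $\eta$ independent of $\x$, so for the norm loss $\ell(a,b)=L(a-b)$ we have $\ell(g(\h(\x)),y)=L(-\eta)$ and $\ell(f'(\h'(\x)),y)=L(f'(\h'(\x))-g(\h(\x))-\eta)$. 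Defining
\[
\psi(g):=\inf_{f'\in\cF}\mE_{\x,\eta}\big[L(f'(\h'(\x))-g(\h(\x))-\eta)-L(-\eta)\big],
\]
\cref{def:worstcase,def:averagecase} read exactly as $\dist{\h'}{\h}=\sup_{f_0\in\cF_0}\psi(f_0)$ and $\avdist{\f}{\h'}{\h}=\tfrac1t\sum_{j=1}^t\psi(f_j)$, so the lemma reduces to bounding $\sup_{f_0\in\cF_{t,\tepsilon}}\psi(f_0)$ by $\tfrac1t\sum_j\psi(f_j)+\tepsilon$.

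The key structural fact is that $\psi$ is convex in its argument. For fixed $(\x,\eta)$ the map $(f',g)\mapsto f'(\h'(\x))-g(\h(\x))-\eta$ is affine in the function values, and $L$ is convex (being a norm), so $(f',g)\mapsto L(f'(\h'(\x))-g(\h(\x))-\eta)$ is jointly convex; expectation preserves convexity, and partial minimization of a jointly convex function over the convex set $\cF$ leaves $\psi(g)=\inf_{f'\in\cF}(\cdots)$ convex in $g$. Since $\cF_t=\conv\{f_1,\dots,f_t\}$, every $\tf=\sum_j\lambda_j f_j\in\cF_t$ obeys $\psi(\tf)\le\sum_j\lambda_j\psi(f_j)\le\max_{j\in[t]}\psi(f_j)$, so $\sup_{\tf\in\cF_t}\psi(\tf)\le\max_j\psi(f_j)$: the worst case over the hull is controlled by a \emph{vertex}, i.e.\ by one of the training tasks.

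To move from $\cF_t$ to $\cF_{t,\tepsilon}$, given $f_0\in\cF_{t,\tepsilon}$ I would pick $\tf\in\cF_t$ with $\sup_z|f_0(z)-\tf(z)|\le\tepsilon$ and use the triangle inequality for $L$: for every $f'$,
\[
L(f'(\h'(\x))-f_0(\h(\x))-\eta)\le L(f'(\h'(\x))-\tf(\h(\x))-\eta)+L(\tf(\h(\x))-f_0(\h(\x)))\le L(f'(\h'(\x))-\tf(\h(\x))-\eta)+\tepsilon,
\]
the last step because $L$ is $1$-Lipschitz with $L(0)=0$ and $|\tf(\h(\x))-f_0(\h(\x))|\le\tepsilon$. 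Taking $\mE_{\x,\eta}$ and then $\inf_{f'}$ gives $\psi(f_0)\le\psi(\tf)+\tepsilon$, hence $\dist{\h'}{\h}=\sup_{f_0\in\cF_{t,\tepsilon}}\psi(f_0)\le\max_j\psi(f_j)+\tepsilon$. Comparing the $\ell_\infty$ and $\ell_1$ norms of $\big(\psi(f_1),\dots,\psi(f_t)\big)$ then yields $\max_j\psi(f_j)\le\sum_j\psi(f_j)=t\cdot\avdist{\f}{\h'}{\h}$, which is \cref{def:diversity} with $\tnu=1/t$; keeping the comparison sharp gives the stated $\tnu=\tfrac1t\,\|(\psi(f_1),\dots,\psi(f_t))\|_1/\|(\psi(f_1),\dots,\psi(f_t))\|_\infty\ge\tfrac1t$.

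The step requiring care — and the main obstacle — is justifying $\max_j\psi(f_j)\le\sum_j\psi(f_j)$, which needs each $\psi(f_j)\ge0$ (equivalently $\avdist{\f}{\h'}{\h}\ge0$). This does not follow from convexity alone: it uses realizability together with the fact that $\eta$ is centered with respect to $L$ (median-zero in the $\ell_1$ case), so that the true regressor $f_j\circ\h$ is risk-optimal and any competitor $f'\circ\h'$ can only do worse, giving $\inf_{f'}\mE[\ell(f'(\h'(\x)),y)]\ge\mE[L(-\eta)]$. Once this sign condition is in place, the convexity reduction to the vertices of $\cF_t$ is the real engine of the proof and the remaining steps are routine.
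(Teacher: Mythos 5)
Your proof is correct and follows essentially the same route as the paper's: joint convexity of the excess-risk functional in $(f,f')$, partial minimization over the convex class $\cF$ to get convexity in $f$, reduction of the supremum over $\conv\{f_1,\dots,f_t\}$ to its vertices, the triangle inequality for $L$ to absorb the $\tepsilon$-enlargement, and the $\ell_1$/$\ell_\infty$ comparison to get $\tnu\ge 1/t$. Your explicit flagging that the step $\max_j\psi(f_j)\le\sum_j\psi(f_j)$ requires each $\psi(f_j)\ge 0$ (via realizability and $\eta$ being centered with respect to $L$) is a point the paper leaves implicit.
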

\begin{proof}
	This result follows quickly from several properties of convex functions. We will use the pair $(\x, y)$ to refer to samples drawn from the generative model in \cref{eq:gen_reg}; that is $\x \sim \Pr_{\x}(\cdot), y \sim \Pr_{y | \x}(f \circ \h(\x))$. First the mapping
	\begin{align*}
	& (f, \hf) \to \mE_{\x, y} \left[ \ell(\hf \circ \hh(\x), y) - \ell(f \circ \h(\x), y) \right] = \\
	& \mE_{\x,  \eta}[L(f(\h(\x))-\hf(\hh(\x))+\eta)]-\mE_{\eta}[L(n)]
	\end{align*}
	 is a jointly convex function of $(f, \hf)$. This follows since first as an affine precomposition of a convex function, $L(f(\h(\x))-\hf(\hh(\x))+\eta)$ is convex for all $\x, \eta$, and second the expectation operator preserves convexity. Now by definition of $\cF_{t, \tepsilon}$, for all $f \in \cF_{t, \tepsilon}$ there exists $\tf \in \cF_t$ such $\sup_{z} \abs{f(z)-\tf(z)}  \leq \tepsilon$. Thus for all $f$ we have that, 
	 \begin{align*}
	 	\mE_{\x,  \eta}[L(f(\h(\x))-\hf(\hh(\x))+\eta)]-\mE_{\eta}[L(\eta)] \leq \mE_{\x,  \eta}[L(\tf(\h(\x))-\hf(\hh(\x))+\eta)]-\mE_{\eta}[L(\eta)] + \tepsilon
	 \end{align*}
	 for some $\tf \in \cF_t$. Then since partial minimization of $\hf$ over the convex set $\cF$ of this jointly convex upper bound preserves convexity, we have that the mapping from $f$ to $\inf_{\hf \in \cF}\mE_{\x,  n}[L(f(\h(\x))-\hf(\hh(\x))+\epsilon)]-\mE_{\eta}[L(\eta)]$ is a convex function of $f$. Thus, 
	  \begin{align*}
	  	\inf_{\hf \in \cF} \mE_{\x, \eta}[L(f(\h(\x))-\hf(\hh(\x))+\eta)]-\mE_{\eta}[L(\eta)] \leq \inf_{\hf \in \cF} \mE_{\x,  \eta}[L(\tf(\h(\x))-\hf(\hh(\x))+\eta)]-\mE_{\eta}[L(\eta)] + \tepsilon
	  \end{align*}
	  Now taking the suprema over $f \in \cF_{t, \tepsilon}$ gives,
	  \begin{align*}
	  	& \sup_{f \in \cF_{t, \tepsilon}} \inf_{\hf \in \cF} \mE_{\x,  \eta}[L(f(\h(\x))-\hf(\hh(\x))+\eta)]-\mE_{\eta}[L(\eta)] \leq \\
	  	& \sup_{\tf \in \cF_t} \inf_{\hf \in \cF} \mE_{\x, \eta}[L(\tf(\h(\x))-\hf(\hh(\x))+\eta)]-\mE_{\eta}[L(\eta)] + \tepsilon
	  \end{align*}
	  Finally, since the suprema of a a convex function over a convex hull generated by a finite set of points can be taken to occur at the generating set,
	  \begin{align*}
	  	\sup_{\tf \in \cF_t} \inf_{\hf \in \cF} \mE_{\x,  \eta}[L(\tf(\h(\x))-\hf(\hh(\x))+\eta)] = \max_{j \in [t]} \inf_{\hf \in \cF} \mE_{\x,  \eta}[L(\fjstar(\h(\x))-\hf(\hh(\x))+\eta)]
	  \end{align*}
	  To relate the worst-case and task-averaged representation differences, recall for a $t$-dimensional vector $\v$, $\norm{\v}_{\infty} \leq \norm{\v}_{1}$. Instantiating this with the vector with components $\v_j = \inf_{\hf \in \cF}\mE_{\x,  \eta}[L(\fjstar(\hstar(\x))-\hf(\hh(\x))+\eta)]$ and combining with the above shows that\footnote{note the $\mE_{\eta}[L(\eta)]$ terms cancel in the expressions for $\dist{\hh}{\hstar}$ and $\avdist{\fstar}{\hh}{\hstar}$.},
	  \begin{align*}
	  	\dist{\hh}{\hstar} \leq \avdist{\fstar}{\hh}{\hstar} \cdot \frac{1}{\tnu} + \epsilon 
	  \end{align*}
	 	where $\tnu \geq \frac{1}{t}$ (but might potentially be larger). Explicitly $\tnu \geq \frac{1}{t} \frac{\norm{\v}_1}{\norm{\v}_\infty}$. In the case the vector $\v$ is well-spread out over its coordinates we expect the bound $\norm{\v}_1 \geq \norm{\v}_{\infty}$ to be quite loose and $\tnu$ could be potentially much greater.
\end{proof}
Note if $\v$ is well-spread out -- intuitively the problem possesses a  problem-dependent ``uniformity" and the bound $\tnu \geq \frac{1}{t}$ is likely pessimistic. However, formalizing this notion in a clean way for nonparametric function classes considered herein seems quite difficult.

Also note the diversity bound of \cref{lem:regression_general_div} is valid for \textit{generic} functions and representations in addition to applying to a wide class of regression losses. In particular, all $p$-norms such $L(a,b) = \norm{a-b}_p$ satisfy the requisite conditions. Further only mild moments boundedness conditions are required on $\epsilon$ to ensure finiteness of the objective. 



\end{document}